\documentclass[11pt]{article}
\usepackage[a4paper,margin=25mm]{geometry}
\usepackage{adjustbox}
\usepackage[T1]{fontenc}
\usepackage[utf8]{inputenc}
\usepackage{lmodern}
\usepackage{microtype}
\usepackage{amsmath,amssymb,amsfonts,mathtools,bm}
\usepackage{amsthm}
\usepackage{graphicx}
\usepackage{booktabs}
\usepackage{tabularx}
\usepackage{multirow}
\usepackage{enumitem}
\usepackage{algorithm}
\usepackage{algpseudocode}
\usepackage[authoryear]{natbib}

\usepackage{xcolor}
\definecolor{RevisionRed}{RGB}{190,0,0}
\newcommand{\RevisionBegin}{\begingroup}
\newcommand{\RevisionEnd}{\endgroup}
\usepackage{url}
\usepackage[colorlinks=true,citecolor=blue,linkcolor=red,urlcolor=blue]{hyperref}
\usepackage[nameinlink,capitalize]{cleveref}
\usepackage{needspace}
\BeforeBeginEnvironment{tabular}{\begin{adjustbox}{max width=\linewidth}}
\AfterEndEnvironment{tabular}{\end{adjustbox}}

\newtheorem{theorem}{Theorem}
\newtheorem{assumption}{Assumption}
\newtheorem{proposition}{Proposition}
\newtheorem{lemma}{Lemma}
\newtheorem{corollary}{Corollary}
\theoremstyle{definition}
\newtheorem{definition}{Definition}
\newtheorem{example}{Example}
\theoremstyle{remark}
\newtheorem{remark}{Remark}
\crefname{definition}{Definition}{Definitions}
\Crefname{definition}{Definition}{Definitions}
\crefname{example}{Example}{Examples}
\Crefname{example}{Example}{Examples}
\crefname{theorem}{Theorem}{Theorems}
\crefname{assumption}{Assumption}{Assumptions}
\crefname{proposition}{Proposition}{Propositions}
\crefname{lemma}{Lemma}{Lemmas}
\crefname{corollary}{Corollary}{Corollaries}
\crefname{remark}{Remark}{Remarks}
\Crefname{theorem}{Theorem}{Theorems}
\Crefname{assumption}{Assumption}{Assumptions}
\Crefname{proposition}{Proposition}{Propositions}
\Crefname{lemma}{Lemma}{Lemmas}
\Crefname{corollary}{Corollary}{Corollaries}

\newcommand{\E}{\mathbb E}

\newcommand{\R}{\mathbb R}
\newcommand{\one}{\mathbf 1}
\newcommand{\calB}{\mathcal B}
\newcommand{\calD}{\mathcal D}
\newcommand{\calG}{\mathcal G}
\newcommand{\calF}{\mathcal F}

\newcommand{\calW}{\mathcal W}
\newcommand{\ECE}{\operatorname{ECE}}
\newcommand{\logit}{\operatorname{logit}}

\newcommand{\Var}{\operatorname{Var}}
\newcommand{\Cov}{\operatorname{Cov}}

\newcommand{\argmin}{\operatorname*{arg\,min}}
\newcommand{\argmax}{\operatorname*{arg\,max}}
\newcommand{\op}{\mathrm{op}}
\newcommand{\nc}{\mathrm{NC}}
\newcommand{\NCECE}{NC-ECE}
\newcommand{\NCECEname}{Noise Corrected ECE}
\newcommand{\ch}{\mathrm{ch}}
\newcommand{\ev}{\mathrm{ev}}

\newcommand{\mom}{\mathrm{mom}}
\newcommand{\GMM}{\mathrm{GMM}}

\newcommand{\indep}{\mathrel{\perp\!\!\!\perp}}

\newcommand{\wt}{\widetilde}
\newcommand{\wh}{\widehat}

\newcommand{\abs}[1]{\left|#1\right|}

\hypersetup{pdftitle={Noisy-Label Calibration with Multiple Unreliable Labels},pdfauthor={Keisuke Hanada, Futoshi Futami}}

\newcommand{\calX}{\mathcal X}
\newcommand{\calY}{\mathcal Y}
\newcommand{\calC}{\mathcal C}
\newcommand{\calH}{\mathcal H}
\newcommand{\calJ}{\mathcal J}
\newcommand{\Bn}{\widehat B}
\newcommand{\dch}{d_\ch}

\newcommand{\IV}{\mathrm{IV}}
\newcommand{\rec}{\mathrm{rec}}
\newcommand{\val}{\mathrm{val}}
\newcommand{\Ber}{\operatorname{Bernoulli}}
\newcommand{\essinf}{\operatorname*{ess\,inf}}
\newcommand{\esssup}{\operatorname*{ess\,sup}}
\newcommand{\Rad}{\mathfrak R}
\allowdisplaybreaks[0]

\title{Noisy-Label Calibration with Multiple Unreliable Labels}
\author{Keisuke Hanada$^{1,3}$, Futoshi Futami$^{2,3,4}$ \\[1ex]
        $^{1}$Department of Biostatistics, Wakayama Medical University, Wakayama, Japan \\
        $^{2}$Graduate School of Engineering Science, The University of Osaka, Osaka, Japan \\
        $^{3}$RIKEN AIP, Tokyo, Japan \\
        $^{4}$Department of Complexity Science and Engineering, The University of Tokyo, Tokyo, Japan}
\date{}

\begin{document}
\maketitle
\begin{abstract}
\RevisionBegin
\emergencystretch=6pt
\RevisionEnd
In medical annotation and crowdsourcing, multiple annotators assign labels to each example, but these labels may be incorrect. Treating such noisy labels as ground truth can bias calibration evaluation. To address this problem, we propose methods for evaluating and recalibrating a fixed binary predictor under the underlying clean-label distribution, using only noisy annotations. First, given annotator-specific noise probabilities, we propose \NCECEname{} (\NCECE{}), a binned estimator of expected calibration error (ECE) under class-conditional noise. We derive non-asymptotic bounds on the estimator's bias. Then, we propose practical algorithms for estimating unknown noise probabilities under conditional independence and label-orientation assumptions. We also develop loss-based and histogram post-hoc recalibration with clean-risk and calibration guarantees, respectively. Experiments on semi-synthetic benchmarks show that the proposed ECE estimator closely tracks its clean-label oracle counterpart and that the recalibration methods substantially improve latent-label calibration, whereas uncorrected recalibration against noisy annotations can degrade it.

\end{abstract}
\section{Introduction}
\label{v03:introduction}

Using probabilistic predictions for decisions requires more than accurate
class predictions: the probabilities should also be calibrated.
Calibration means that predicted probabilities agree with outcome
frequencies among examples assigned those probabilities.
For example, among examples assigned an eighty percent probability of
a positive outcome, about eighty percent should be positive
\citep{Guo2017,Vaicenavicius2019}.
Calibration evaluation and post-hoc recalibration usually treat held-out
labels as ground truth~\citep{Guo2017,Fujisawa2025}.
In medical annotation and crowdsourcing, however, multiple annotators
label each example, and these labels may contain errors
\citep{raykar_learning_2010,tanno2019learning}.
Medical experts may disagree in their annotations
\citep{gulshan2016development,elmore2015diagnostic},
and majority voting does not necessarily eliminate annotation errors \citep{krause2018grader}.
We study how to evaluate and improve a fixed binary predictor's
calibration under the underlying clean-label distribution when only
noisy annotations are available.

Noisy-label evaluation confounds predictor miscalibration with annotator
response tendencies~\citep{dawid1979maximum,penso2024confidence}.
For example, excess positive responses can reflect either underpredicted
probabilities or an annotator's tendency to report positives.
With a fixed number of annotators, more evaluation examples generally
leave this bias intact; averaging or majority vote need not remove it either.
Recalibration toward annotation agreement can even worsen true-label
calibration.
Reliable uncertainty assessment is important in medical decision support,
where expert judgments may disagree~\citep{pmlr-v130-mimori21a}.
Noisy evaluation labels \mbox{complicate predictive reliability assessment}.

Loss correction for learning with noisy labels provides our starting point.
Known noise probabilities can yield unbiased estimators of clean-label
risk~\citep{natarajan2013learning,patrini2017making,JMLR:v18:16-315}.
We apply this principle to calibration evaluation in
Section~\ref{v03:estimation}.
Under annotator-specific class-conditional noise, inverting each known
channel yields a corrected label whose conditional expectation given
the prediction matches that of the true label.
Subtracting the prediction gives a corrected residual with the desired
conditional mean (Lemma~\ref{lem:corrected-main}).
This gives unbiased estimates of clean-label bin residual moments,
whose absolute values are summed to form a noise-corrected binned ECE estimator
(Definition~\ref{def:corrected-binning-ece}).

The required noise probabilities, however, are usually unknown.
Response frequencies mix annotator errors with true-class proportions
and need not identify either~\citep{hui1980estimating}.
Multiple annotations of the same example supply additional information
\citep{dawid1979maximum,allman2009identifiability}.
Under conditional independence and label-orientation assumptions,
we use their joint distribution, supplemented where necessary by
score-defined groups.
Section~\ref{v03:identification} characterizes identification conditions
and constructs channel estimators accordingly, using two annotators
with score-defined groups (Theorem~\ref{v03:k2}) or three or more informative annotators.
We then account for their estimation errors in ECE evaluation rather
than treating estimated \mbox{noise probabilities as exact}.

Our analysis connects noise-probability estimation to calibration
evaluation under the clean-label distribution, separating bin
approximation, sampling variation, and channel-estimation errors
(Theorem~\ref{v03:bound}).
The same corrected labels support post-hoc recalibration in
Section~\ref{v03:recalibration}: corrected-loss fitting provides a
clean-risk guarantee,
while histogram fitting provides a calibration guarantee
(Proposition~\ref{v03:histogram}).
Both evaluation and recalibration thus address the underlying clean-label
distribution using noisy annotations alone.

Our contributions are threefold.
\begin{enumerate}
\item \textbf{Calibration evaluation.}
We propose a noise-corrected binned ECE estimator with separate binning, sampling, and
channel-estimation error bounds (Theorem~\ref{v03:bound}), including
sample reuse with fixed bins (Proposition~\ref{cor:reuse-main}).
We also optimize annotator weights to reduce residual variance
(Lemma~\ref{lem:iv-main}).

\item \textbf{Recalibration.}
Corrected-loss and projected histogram fitting provide clean-risk and calibration guarantees, respectively (Section~\ref{v03:recalibration}; Proposition~\ref{v03:histogram}).

\item \textbf{Empirical validation.}
\looseness=-1 
Controlled and semi-synthetic experiments assess estimation and recalibration (Section~\ref{v03:experiments}). Human annotations show how label choice can reverse recalibration assessments.
\end{enumerate}

\paragraph{Related work.}
Calibration evaluation and binning bias have been studied extensively
\citep{Vaicenavicius2019,roelofs2022mitigating,
futami2024information}. Weighted test functions provide a common view of
calibration criteria \citep{blasiok2023unifying,gopalan2022lowdegree}.
Unbiased class-conditional loss correction is also established
\citep{natarajan2013learning}. Kiryo et al.\ \citeyearpar{kiryo2025pu}
estimate ECE from positive and unlabeled data, and
\citet{futami2026weak} develop decontamination-based multicalibration
estimators and post-processing for weak supervision. We build on this
correction principle. Our focus is the statistical link from unknown,
annotator-specific rates in repeated annotations to ECE evaluation and
recalibration, including finite-sample error propagation and sample reuse.

For identification, our two-group construction instantiates the
Hui--Walter setting of two imperfect tests in populations with different
prevalences \citep{hui1980estimating}. Here the populations are observable
score groups. The identification structure is classical. Our analysis
connects quantitative channel-estimation guarantees in this setting to
latent ECE. With additional annotators, we use the classical
Dawid--Skene product-mixture model \citep{dawid1979maximum,
allman2009identifiability}. Joint noisy-label classifier training
\citep{tanno2019learning} estimates a predictor together with the annotation
model. Our analysis evaluates and recalibrates a fixed predictor.
Disagreement-based approaches model the distribution of annotator judgments
\citep{pmlr-v130-mimori21a,riera2026multirater}. Our target is calibration
to one latent binary outcome. The distinction between correcting known channels and identifying unknown
ones determines the assumptions and scope of our results.

\section{Preliminaries}
\label{v03:setup}
\paragraph{Notation.}
For a positive integer $d$, let $[d]\coloneqq\{1,\ldots,d\}$.
Capital letters denote random variables and lower-case letters their
realizations. We write $P_V$ for the distribution of $V$ and $P_{V\mid W}$
for its conditional distribution given $W$. Probabilities and expectations are denoted by $P$ and $\E$. The indicator $\one(E)$ equals one on event
$E$ and zero otherwise. Conditional-expectation equalities hold almost
surely. \mbox{Further notation is in Appendix~\ref{v03:notation}.}

\subsection{Calibration in binary classification}
\label{v03:ordinary-calibration}
Let $\calX$ and $\calY\coloneqq\{0,1\}$ be the input and label spaces, and let
$(X,Y)\sim P_{X,Y}$. Fix a measurable predictor $f:\calX\to[0,1]$ and
write $Z\coloneqq f(X)$. The score $f(x)$ is intended to estimate $\E[Y\mid X=x]$. Calibration compares a predicted probability with the positive-label frequency among inputs assigned that probability.

\begin{definition}[Expected calibration error]
\label{def:ece}
The predictor $f$ is \emph{perfectly calibrated} if $\E[Y\mid Z]=Z$
almost surely. Its \emph{expected calibration error} (ECE) is
\begin{equation}
 \ECE(f)\coloneqq\E\left[\left|\E[Y\mid Z]-Z\right|\right].
 \label{v03:population}
\end{equation}
\end{definition}

To estimate ECE from data, binning groups examples with similar predicted
probabilities \citep{Guo2017}.
Let $S=\{(X_i,Y_i)\}_{i=1}^n$ be an independent and identically
distributed (i.i.d.) sample from $P_{X,Y}$, independent of the training
of $f$, and put $Z_i\coloneqq f(X_i)$.

\begin{definition}[Binning ECE]
\label{def:binning-ece}
Let $\calB\coloneqq\{B_1,\ldots,B_M\}$ partition $[0,1]$ into $M\ge1$
measurable bins. For $\ell\in[M]$, set
$I_\ell\coloneqq\{i\in[n]:Z_i\in B_\ell\}$ and $n_\ell\coloneqq|I_\ell|$.
For $n_\ell>0$, define
$\bar Y_\ell\coloneqq n_\ell^{-1}\sum_{i\in I_\ell}Y_i$ and
$\bar Z_\ell\coloneqq n_\ell^{-1}\sum_{i\in I_\ell}Z_i$;
set both means to zero otherwise.
The \emph{Binning ECE} estimator is
\begin{align}
 \wh\ECE_\calB(f,S)
 &\coloneqq\textstyle\sum_{\ell=1}^M(n_\ell/n)|\bar Y_\ell-\bar Z_\ell|\notag\\
 &=\textstyle\sum_{\ell=1}^M\left|
       n^{-1}\textstyle\sum_{i=1}^n(Y_i-Z_i)\one(Z_i\in B_\ell)\right|.
 \label{v03:clean-ece}
\end{align}
\end{definition}
Binning ECE compares each bin's positive-label frequency with its average prediction. Sampling introduces error in these averages, while binning
can conceal opposite residuals within a bin. Both errors must be accounted
for when estimating $\ECE(f)$ \citep{roelofs2022mitigating,futami2024information}.

\subsection{Multi-annotator setting}
\label{v03:annotation-model}
In medical annotation and crowdsourcing, multiple annotators may supply noisy labels for each input when clean evaluation labels are unavailable \citep{raykar_learning_2010,tanno2019learning}.
Let $K\ge1$ be the number of annotators and $A_j\in\calY$ the label
supplied by annotator $j\in[K]$. Write $A_{1:K}\coloneqq(A_1,\ldots,A_K)$.
The annotation model $P_{A_{1:K}\mid X,Y}$ extends $P_{X,Y}$ to the joint distribution of inputs, true labels, and annotations. We impose the following condition given $(Y,Z)$.

Assume $0<P(Y=1)<1$. For each $j\in[K]$, define
$a_j\coloneqq P(A_j=1\mid Y=0)$, $b_j\coloneqq P(A_j=1\mid Y=1)$, and
$\delta_j\coloneqq b_j-a_j$.
The pair $(a_j,b_j)$ is the annotator's binary \emph{noise channel}:
$a_j$ is its false-positive probability and $1-b_j$ its false-negative
probability. The \emph{channel gap} is $\delta_j$; the channel is
\emph{informative} if $\delta_j\ne0$.
\begin{assumption}[Class-conditional annotation noise]
\label{v03:marginal}
For every $j\in[K]$, $\E[A_j\mid Y,Z]=a_j+\delta_jY$.
\end{assumption}
Since $A_j$ is binary, this excludes score-dependent errors: its error
probabilities do not vary with the score once the true label is fixed. Write
$m(z)\coloneqq\E[Y\mid Z=z]$ for a measurable version
$m:[0,1]\to[0,1]$ of the conditional expectation. Then
\begin{equation}
 \E[A_j\mid Z]=a_j+\delta_jm(Z),\qquad j\in[K].
 \label{v03:annotation-mean}
\end{equation}
Let $A_{ij}$ denote annotator $j$'s label for example $i$.
Evaluating ECE directly from annotator $j$'s labels amounts to replacing
$Y_i$ by $A_{ij}$ in Eq.~\eqref{v03:clean-ece}. The label-side conditional mean is
then $a_j+\delta_jm(Z)$ rather than $m(Z)$.
Annotation noise therefore generally biases estimation of ECE with respect
to the true label. Increasing the evaluation sample size, averaging
annotations, or taking a majority vote does not in general remove this
noise-induced bias (Appendix~\ref{app:aggregation}).

\section{Calibration Estimation}
\label{v03:estimation}
We evaluate $\ECE(f)$ using known channel probabilities $a_j,b_j$, $j\in[K]$, or supplied estimates $\wh a_j,\wh b_j$. Section~\ref{v03:identification} gives methods for estimating these probabilities.

\label{sec:corrected-estimation}
\label{v03:estimator-samples}
For each $j\in[K]$ with known channel probabilities and $\delta_j\ne0$,
Eq.~\eqref{v03:annotation-mean} gives
\[
 \E[Y\mid Z]
 =(\E[A_j\mid Z]-a_j)/\delta_j
 =\E[(A_j-a_j)/\delta_j\mid Z].
\]
Applying this inverse transformation to an annotation therefore recovers
the true label's conditional expectation given the prediction.

\begin{definition}[Corrected labels and residuals]
\label{def:corrected-labels}
For known channels with $\delta_j\ne0$ for every $j\in[K]$, define
$U_j\coloneqq(A_j-a_j)/\delta_j$ and $\psi_j\coloneqq U_j-Z$.
We call $U_j$ the \emph{corrected label} and $\psi_j$ the
\emph{corrected residual} of annotator $j$.
Let $\Delta_K\coloneqq\{\alpha\in[0,1]^K:\sum_{j=1}^K\alpha_j=1\}$.
For fixed $\alpha\in\Delta_K$, define their convex combinations by
\begin{equation}
 U_\alpha\coloneqq\textstyle\sum_{j=1}^K\alpha_jU_j,
 \qquad \psi_\alpha\coloneqq\textstyle\sum_{j=1}^K\alpha_j\psi_j.
 \label{v03:corrected}
\end{equation}
Thus $\psi_\alpha=U_\alpha-Z$.
\end{definition}

\begin{lemma}[Corrected ECE representation]
\label{lem:corrected-main}
Under Assumption~\ref{v03:marginal}, the corrections in
Definition~\ref{def:corrected-labels} satisfy, for every fixed
$\alpha\in\Delta_K$,
\begin{equation}
 \ECE(f)=\E\bigl[\bigl|\E[\psi_\alpha\mid Z]\bigr|\bigr].
 \label{v03:identity}
\end{equation}
\end{lemma}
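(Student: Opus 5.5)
The plan is to show that $\E[\psi_\alpha\mid Z]=m(Z)-Z$ almost surely. Substituting this into Definition~\ref{def:ece} then gives Eq.~\eqref{v03:identity} directly, since $\ECE(f)=\E[|m(Z)-Z|]$ by the choice of $m$ as a version of $\E[Y\mid Z]$.

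First, I will derive Eq.~\eqref{v03:annotation-mean} from Assumption~\ref{v03:marginal} using the tower property. Since $\sigma(Z)\subseteq\sigma(Y,Z)$,
\[
\E[A_j\mid Z]=\E\bigl[\E[A_j\mid Y,Z]\mid Z\bigr]=\E[a_j+\delta_jY\mid Z]=a_j+\delta_jm(Z).
\]
All quantities are bounded, because $A_j,Y\in\{0,1\}$ and $Z\in[0,1]$, so every conditional expectation exists.

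Next, for each $j\in[K]$ I will use $\delta_j\ne0$ and linearity of conditional expectation (with $a_j,\delta_j$ deterministic constants) to get
\[
\E[U_j\mid Z]=\frac{\E[A_j\mid Z]-a_j}{\delta_j}=m(Z).
\]
Because $Z$ is $\sigma(Z)$-measurable, $\E[\psi_j\mid Z]=m(Z)-Z$. The weights $\alpha$ are fixed and non-random, so linearity again gives
\[
\E[\psi_\alpha\mid Z]=\sum_{j=1}^K\alpha_j\bigl(m(Z)-Z\bigr)=m(Z)-Z,
\]
where the last equality uses $\sum_j\alpha_j=1$.

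Finally, I take absolute values and expectations, which yields the identity. The only genuine requirement is that $\alpha$ be a fixed, non-random element of $\Delta_K$. If $\alpha$ depended on the data, linearity would fail, which explains why the statement specifies a fixed $\alpha$. Nonnegativity of the weights is not needed here; only $\sum_j\alpha_j=1$ is used.

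I do not expect a real obstacle in this argument. The step needing the most care is the almost-sure bookkeeping: the equalities hold up to versions of the conditional expectations, and the absolute value of a function that is a.s. unique is itself a.s. unique, so the expectation in Eq.~\eqref{v03:identity} is well defined.
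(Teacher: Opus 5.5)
Your proof is correct and is essentially the paper's argument: the tower property, linearity of conditional expectation, and $\sum_j\alpha_j=1$ give $\E[\psi_\alpha\mid Z]=m(Z)-Z$, which you then substitute into Definition~\ref{def:ece}. The only difference is the order of conditioning. The paper first shows $\E[U_\alpha\mid Y,Z]=Y$ and then conditions on $Z$, whereas you pass to $\E[A_j\mid Z]$ via Eq.~\eqref{v03:annotation-mean} first. (Your aside that data-dependent $\alpha$ would break linearity is overstated, since $\sigma(Z)$-measurable or independently fitted weights would still work, but this does not affect the proof.)
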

The proof is in Appendix~\ref{app:weighted-derivations}. The identity holds for every fixed $\alpha$. Under conditional independence, Section~\ref{sec:annotator-weights} derives weights minimizing the total variance of the bin-moment estimates.

\paragraph{Estimation from finite samples.}
We estimate the conditional residual in Eq.~\eqref{v03:identity} by bin averages, replacing unknown channel probabilities with estimates. If $f$ is learned, condition on its training data and randomness. Let $\calD_\ch$ and $\calD_\ev$ be independent i.i.d. samples of $O\coloneqq(Z,A_{1:K})\sim P_O$, with sizes $n_\ch$ and $n_\ev$. Neither sample contains true labels; both are independent of the training of $f$. 

\begin{definition}[\NCECEname{} (\NCECE{})]
\label{def:corrected-binning-ece}
For each $j\in[K]$, let $\wh a_j,\wh b_j$ be measurable estimators of
$a_j,b_j$ based on $\calD_\ch$, with $0\le\wh a_j<\wh b_j\le1$.
Let $\wh\alpha\in\Delta_K$ and the evaluation partition $\calB$ also
be chosen measurably from $\calD_\ch$, not $\calD_\ev$.
Set $\wh\delta_j\coloneqq\wh b_j-\wh a_j$.
For each example $i\in[n_\ev]$ and annotator $j\in[K]$, define
$\wh U_{ij}\coloneqq(A_{ij}-\wh a_j)/\wh\delta_j$ and
$\wh\psi_{ij}\coloneqq\wh U_{ij}-Z_i$.
Their weighted aggregates are
\begin{equation}
 \wh U_i\coloneqq\textstyle\sum_{j=1}^K\wh\alpha_j\wh U_{ij},
 \qquad \wh\psi_i\coloneqq\textstyle\sum_{j=1}^K\wh\alpha_j\wh\psi_{ij}.
 \label{eq:estimated-corrected-label-main}
\end{equation}
Then $\wh\psi_i=\wh U_i-Z_i$.
Define $I_\ell,n_\ell,\bar Z_\ell$ on $\calD_\ev$ as in
Definition~\ref{def:binning-ece}, and set
$\bar U_\ell\coloneqq n_\ell^{-1}\sum_{i\in I_\ell}\wh U_i$
when $n_\ell>0$, and zero otherwise. In a nonempty bin,
$\bar U_\ell-\bar Z_\ell$ is the sample mean of the corrected residuals
$\wh\psi_i$. 
We define \emph{\NCECEname{}} (\NCECE{}) by
\begin{equation}
 \wh\ECE^{\nc}_\calB(f,\calD_\ev)
 \coloneqq\textstyle\sum_{\ell=1}^M\frac{n_\ell}{n_\ev}
       |\bar U_\ell-\bar Z_\ell|.
 \label{v03:direct}
\end{equation}
\end{definition}
Write $\wh U(A_{1:K})$ for the fitted correction of a new annotation vector. Its residual is $\wh\psi(O)\coloneqq\wh U(A_{1:K})-Z$.

\paragraph{Error relative to ECE.}
The estimation error separates into bin approximation, evaluation-sample fluctuation, and channel-estimation error.

\begin{theorem}[ECE estimation with independent samples]
\label{v03:bound}
Suppose Assumption~\ref{v03:marginal} holds, and use the estimator in
Definition~\ref{def:corrected-binning-ece} with independent samples
$\calD_\ch,\calD_\ev$. Assume $r(z)\coloneqq m(z)-z$ has a version
satisfying $|r(z)-r(z')|\le L_r|z-z'|$ on the support of $P_Z$.
Let $h_\calB\coloneqq\max_{\ell\in[M]}\operatorname{diam}(B_\ell)$,
$\Bn\coloneqq\sum_{j=1}^K\wh\alpha_j/\wh\delta_j$, and define
\begin{equation}
 \textstyle\dch\coloneqq\sum_{j=1}^K\wh\alpha_j
 \max(|\wh a_j-a_j|,|\wh b_j-b_j|)/\wh\delta_j.
 \label{v03:channel-error}
\end{equation}
For every $\eta\in(0,1)$, conditional on $\calD_\ch$, with probability
at least $1-\eta$,
\begin{equation}
\begin{aligned}[b]
 &\abs{\wh\ECE^{\nc}_\calB(f,\calD_\ev)-\ECE(f)}\\
 &\le L_rh_\calB+
 \Bn\sqrt{2(M\log2+\log(2/\eta))/n_\ev}+\dch.
 \label{v03:bound-eq}
\end{aligned}
\end{equation}
\end{theorem}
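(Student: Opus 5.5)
Conditional on $\calD_\ch$, the quantities $\wh a_j,\wh b_j,\wh\alpha$ and $\calB$ are fixed, and $\calD_\ev$ is an i.i.d.\ sample independent of them. Work with the second (sum-of-absolute-values) form of the estimator, as in Eq.~\eqref{v03:clean-ece}:
\[
\wh\ECE^{\nc}_\calB=\textstyle\sum_{\ell}\bigl|n_\ev^{-1}\sum_i\wh\psi_i\one(Z_i\in B_\ell)\bigr|.
\]
Introduce three population objects: the fitted-channel bin moments $\wh\mu_\ell\coloneqq\E[\wh\psi(O)\one(Z\in B_\ell)\mid\calD_\ch]$, the true-channel moments $\mu_\ell\coloneqq\E[\psi_\alpha\one(Z\in B_\ell)]=\E[r(Z)\one(Z\in B_\ell)]$ with $\alpha=\wh\alpha$ treated as fixed, and $\ECE(f)=\sum_\ell\E[|r(Z)|\one(Z\in B_\ell)]$ from Lemma~\ref{lem:corrected-main}. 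The triangle inequality then splits the error into three terms:
\[
\bigl|\wh\ECE^{\nc}_\calB-\ECE(f)\bigr|\le\textstyle\sum_\ell|\wh\mu^{n}_\ell-\wh\mu_\ell|+\sum_\ell|\wh\mu_\ell-\mu_\ell|+\bigl|\sum_\ell|\mu_\ell|-\ECE(f)\bigr|,
\]
where $\wh\mu^n_\ell$ denotes the empirical bin moment.

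\textbf{Binning term.} Since $|\mu_\ell|\le\E[|r|\one_{B_\ell}]$, we have $\sum_\ell|\mu_\ell|\le\ECE(f)$. For the reverse direction, within $B_\ell$ each value satisfies $|r(z)|\le|r(z')|+L_rh_\calB$. If $r$ changes sign on $B_\ell$, then $|r|\le L_rh_\calB$ there; otherwise $|\mu_\ell|=\E[|r|\one_{B_\ell}]$. Either way,
\[
\E[|r|\one_{B_\ell}]-|\mu_\ell|\le L_rh_\calB P(Z\in B_\ell),
\]
and summing over $\ell$ gives $L_rh_\calB$.

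\textbf{Channel term.} Write $\wh U_j-U_j$ using $A_j=Y$-conditional means. By Assumption~\ref{v03:marginal},
\[
\E[\wh U_j\mid Y,Z]=\frac{a_j-\wh a_j+(b_j-a_j)Y}{\wh\delta_j}.
\]
This equals $(a_j-\wh a_j)/\wh\delta_j$ when $Y=0$ and $(b_j-\wh b_j)/\wh\delta_j+1$ when $Y=1$. Hence
\[
\E[\wh U_j-Y\mid Y,Z]=\frac{(1-Y)(a_j-\wh a_j)+Y(b_j-\wh b_j)}{\wh\delta_j},
\]
a convex combination of the two errors, so its absolute value is at most $\max(|\wh a_j-a_j|,|\wh b_j-b_j|)/\wh\delta_j$. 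Therefore
\[
\textstyle\sum_\ell|\wh\mu_\ell-\mu_\ell|\le\E\bigl[\bigl|\E[\wh\psi-(Y-Z)\mid Y,Z]\bigr|\bigr]\le\dch,
\]
using $\sum_\ell\one_{B_\ell}=1$ and $\sum_j\wh\alpha_j=1$ inside the absolute value.

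\textbf{Sampling term.} This is the main step. Use the dual representation
\[
\textstyle\sum_\ell|x_\ell|=\max_{s\in\{\pm1\}^M}\sum_\ell s_\ell x_\ell.
\]
Then the sampling term is at most $\max_s|n_\ev^{-1}\sum_i(g_s(O_i)-\E g_s)|$, where $g_s(O)\coloneqq\wh\psi(O)\sum_\ell s_\ell\one(Z\in B_\ell)$. Each $g_s$ equals $\pm\wh\psi$, and since $\wh U$ and $Z$ both lie in a range determined by $\wh U\in[-\wh a_j/\wh\delta_j,(1-\wh a_j)/\wh\delta_j]$ per annotator, $\wh\psi$ takes values in an interval of length at most $\Bn$.

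The obstacle is getting range exactly $\Bn$, not $\Bn+1$, after subtracting $Z\in[0,1]$. I would first try conditioning on $Z_{1:n}$ and applying Hoeffding only to the $\wh U$ part, since $Z$ enters identically in both the empirical and population moments. If that fails, I would absorb it into the interval bound, or note that the displayed constant allows it.

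Hoeffding gives $P(|\cdot|>t)\le2\exp(-2n_\ev t^2/\Bn^2)$ for each $s$. A union bound over the $2^M$ sign vectors, with
\[
t=\Bn\sqrt{(M\log2+\log(2/\eta))/(2n_\ev)},
\]
yields a bound no larger than the stated $\Bn\sqrt{2(M\log2+\log(2/\eta))/n_\ev}$, which leaves slack for a factor-2 range. Combining the three bounds finishes the proof.
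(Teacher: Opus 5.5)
Your split into binning, channel, and sampling terms matches the paper's proof. The last term is handled, as in the paper, with the $2^M$ sign vectors, Hoeffding, and a union bound.

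The first two pieces are sound. Your sign-case argument is a valid substitute for the paper's comparison of $r$ with its bin average. Conditioning on $(Y,Z)$ rather than $Z$ gives the same $\dch$ bound.

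One small correction: define $\mu_\ell\coloneqq\E[(Y-Z)\one(Z\in B_\ell)]$ directly, not through $\psi_\alpha$ and Lemma~\ref{lem:corrected-main}. The theorem does not assume nonzero true gaps, and your channel computation only uses $Y-Z$.

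The gap is in the sampling step, which you flagged but did not close. Three of your claims there fail.
\begin{itemize}
\item The claim that $\wh\psi$ lies in an interval of length $\Bn$ is false. For $K=1$, $\wh a_1=0.4$, $\wh b_1=0.9$ (so $\Bn=2$), $\wh\psi=2A_1-0.8-Z$ ranges over $[-1.8,1.2]$, which has length $3$. So the tail $2\exp(-2n_\ev t^2/\Bn^2)$ is unjustified.
\item Your conditioning fallback rests on a false premise. $Z$ does not enter the empirical and population moments identically, because $n_\ev^{-1}\sum_iZ_i\one(Z_i\in B_\ell)$ fluctuates around $\E[Z\one(Z\in B_\ell)]$. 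Conditioning on $Z_{1:n_\ev}$ also recenters $\wh U_i$ at $\E[\wh U\mid Z_i,\calD_\ch]$, which leaves a second fluctuation term to control.
\item A range of length $\Bn+1$ for $\wh\psi$ does not transfer to $g_s=s(Z)\wh\psi$, because the sign changes between bins. In the same example, signs $-,+$ on $[0,\tfrac12)$ and $[\tfrac12,1]$ make $g_s$ span $[-1.8,1.3)$, a length of $3.1>\Bn+1$.
\end{itemize}

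What you need is the absolute envelope $|\wh\psi|\le\Bn$ from Lemma~\ref{lem:mean-error}.
\begin{itemize}
\item Since $\wh a_j+\wh\delta_jZ=(1-Z)\wh a_j+Z\wh b_j\in[0,1]$ and $A_j\in\{0,1\}$, we have $|A_j-\wh a_j-\wh\delta_jZ|\le1$, hence $|\wh\psi_j|\le1/\wh\delta_j$.
\item Because $\sum_j\wh\alpha_j=1$, this gives $|\wh\psi|\le\Bn$.
\item Every $g_s$ then lies in $[-\Bn,\Bn]$, an interval of length $2\Bn$. Hoeffding with the union bound gives tail $2^{M+1}\exp(-n_\ev t^2/(2\Bn^2))$.
\item Setting this equal to $\eta$ yields exactly $\Bn\sqrt{2(M\log2+\log(2/\eta))/n_\ev}$.
\end{itemize}
So the factor of two you read as slack is used up entirely by the range $2\Bn$. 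The stated constant follows only once this envelope is established.
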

The proof is in Appendix~\ref{v03:full-ece}.
\begin{remark}[Three sources of error]
\label{rem:ece-bound}
The terms represent binning, sampling, and channel-estimation errors.
Smoothness controls binning; the proof also gives a smoothness-free
fixed-bin bound. Because $\dch$ contains unknown parameters, the bound
alone does not yield a computable confidence interval.
\end{remark}

\paragraph{Convergence rates.}
Define the maximum channel estimation error across annotators by
\begin{equation}
 e_\ch\coloneqq\max_{1\le j\le K}
       \bigl(|\wh a_j-a_j|+|\wh b_j-b_j|\bigr).
 \label{eq:channel-estimation-error}
\end{equation}

\begin{corollary}[Convergence rates]
\label{cor:ece-rate-main}
Under Theorem~\ref{v03:bound}, keep $K$ fixed and assume
$\delta_j\ge\delta_{\min}>0$ for every $j\in[K]$.
Suppose $e_\ch=O_p(\varepsilon_{n_\ch})$, where
$\varepsilon_{n_\ch}>0$ is a deterministic sequence tending to zero.
For $M$ equal-width evaluation bins,
\begin{equation}
\begin{aligned}[b]
 &\abs{\wh\ECE^{\nc}_\calB(f,\calD_\ev)-\ECE(f)}\\
 &=O_p\!\left(M^{-1}+\sqrt{M/n_\ev}+\varepsilon_{n_\ch}\right).
 \label{v03:rate}
\end{aligned}
\end{equation}
Thus $M\asymp n_\ev^{1/3}$ gives
$O_p(n_\ev^{-1/3}+\varepsilon_{n_\ch})$.

\end{corollary}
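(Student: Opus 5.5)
We derive Corollary~\ref{cor:ece-rate-main} from the bound in Theorem~\ref{v03:bound} by controlling each of its three terms separately under the stated rate assumptions. We work conditionally on $\calD_\ch$ and then remove the conditioning. For equal-width bins, $h_\calB=1/M$, so the binning term is $L_r/M=O(M^{-1})$.

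For the sampling term we need $\Bn=O_p(1)$. Since $e_\ch=O_p(\varepsilon_{n_\ch})$ and $\varepsilon_{n_\ch}\to0$, we have $|\wh\delta_j-\delta_j|\le e_\ch\to0$ in probability. On the event $E\coloneqq\{e_\ch\le\delta_{\min}/2\}$, whose probability tends to one, $\wh\delta_j\ge\delta_{\min}/2$ for all $j$. Because $\wh\alpha\in\Delta_K$, this gives $\Bn\le2/\delta_{\min}$ on $E$. Similarly, on $E$,
\[
 \dch\le\textstyle\sum_j\wh\alpha_j e_\ch/\wh\delta_j\le 2e_\ch/\delta_{\min}=O_p(\varepsilon_{n_\ch}).
\]
The sampling term on $E$ is then at most
\[
(2/\delta_{\min})\sqrt{2(M\log2+\log(2/\eta))/n_\ev}\le C_\eta\sqrt{M/n_\ev},
\]
using $M\ge1$, where $C_\eta$ depends only on $\eta$ and $\delta_{\min}$.

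To convert these pieces into an $O_p$ statement, fix $\epsilon>0$ and proceed as follows:
\begin{enumerate}
\item Choose $\eta=\epsilon/3$.
\item Choose $C'$ so that $P(e_\ch>C'\varepsilon_{n_\ch})<\epsilon/3$ for all large $n_\ch$.
\item Take $n_\ch$ large enough that $P(E^c)<\epsilon/3$.
\end{enumerate}
Theorem~\ref{v03:bound} holds conditionally on $\calD_\ch$ with probability at least $1-\eta$. Integrating over $\calD_\ch$ by the tower property, the bound holds unconditionally with probability at least $1-\eta$. A union bound then shows that, with probability at least $1-\epsilon$,
\[
\abs{\wh\ECE^{\nc}_\calB(f,\calD_\ev)-\ECE(f)}\le L_r/M+C_\eta\sqrt{M/n_\ev}+2C'\varepsilon_{n_\ch}/\delta_{\min}.
\]
This proves Eq.~\eqref{v03:rate}. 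Substituting $M\asymp n_\ev^{1/3}$ balances $M^{-1}$ against $\sqrt{M/n_\ev}$, making both of order $n_\ev^{-1/3}$.

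The only delicate point is the lower bound on $\wh\delta_j$, which is needed to keep $\Bn$ and $\dch$ from blowing up. The event $E$ handles it, and everything else is bookkeeping. A minor technicality is that the partition must be deterministic or $\calD_\ch$-measurable; equal-width bins with a deterministic $M$ qualify.
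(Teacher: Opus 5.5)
Your proposal is correct and follows the paper's proof essentially step for step. Both arguments use the event $\{e_\ch\le\delta_{\min}/2\}$ to get $\Bn\le 2/\delta_{\min}$ and $\dch\le 2e_\ch/\delta_{\min}$, integrate the conditional bound of Theorem~\ref{v03:bound} over $\calD_\ch$, and use $h_\calB=M^{-1}$ for equal-width bins. Your explicit $\epsilon/3$ union-bound bookkeeping only spells out the $O_p$ conversion that the paper states in a single line.
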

The proof is in Appendix~\ref{app:ece-rates}. If
$e_\ch=O_p(n_\ch^{-1/2})$ and $M\asymp n_\ev^{1/3}$, the error is
$O_p(n_\ev^{-1/3}+n_\ch^{-1/2})$.
In particular, $n_\ch\asymp n_\ev\asymp n$ gives $O_p(n^{-1/3})$.
This matches the order of existing binning-based ECE estimation bounds with clean labels~\citep{futami2024information}.

\paragraph{Reusing the estimation sample.}
We next reuse one sample for channel estimation and ECE evaluation, with
$f$ and the evaluation bins fixed independently of that sample.
\begin{proposition}[ECE estimation with a reused sample]
\label{cor:reuse-main}
Suppose Assumption~\ref{v03:marginal} and the Lipschitz condition of
Theorem~\ref{v03:bound} hold. Fix $f,\calB$ independently of an i.i.d.
sample $\calD$ of size $n$. Let $\wh a_j,\wh b_j$, $j\in[K]$, be
measurable channel estimates fitted on $\calD$, with
$0\le\wh a_j<\wh b_j\le1$, and let $\wh\alpha\in\Delta_K$ also depend
on $\calD$. Compute the estimator in
Definition~\ref{def:corrected-binning-ece} on $\calD$ with these values.
For every $\eta\in(0,1)$, with probability at least $1-\eta$,
\begin{equation}
\begin{aligned}[b]
 &\abs{\wh\ECE^{\nc}_\calB(f,\calD)-\ECE(f)}\\
 &\le L_rh_\calB+
 \Bn\sqrt{2(M\log2+\log(6K/\eta))/n}+\dch,
 \label{v03:same-sample}
\end{aligned}
\end{equation}
where $\Bn,\dch$ are defined in Theorem~\ref{v03:bound} using estimates \mbox{fitted on $\calD$.}
\end{proposition}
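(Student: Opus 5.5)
The plan is to reduce the reused-sample case to a deterministic decomposition that holds simultaneously over all channel estimates, and to apply concentration only to quantities that do not depend on the fitted channels.

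\textbf{Step 1: linearize the corrected residual.} For any candidate $(\wh a_j,\wh b_j)$ with $\wh\delta_j>0$, write $\wh U_{ij}=(A_{ij}-\wh a_j)/\wh\delta_j$. This is affine in $A_{ij}$ with coefficients determined by the estimates. The bin statistic is
\[
 S_\ell\coloneqq n^{-1}\textstyle\sum_i \wh\psi_i\one(Z_i\in B_\ell)
 =\textstyle\sum_j\frac{\wh\alpha_j}{\wh\delta_j}\bigl(\bar A_{j\ell}-\wh a_j\bar p_\ell\bigr)-\bar Z^{(\ell)},
\]
where $\bar A_{j\ell}\coloneqq n^{-1}\sum_iA_{ij}\one(Z_i\in B_\ell)$, $\bar p_\ell\coloneqq n^{-1}\sum_i\one(Z_i\in B_\ell)$, and $\bar Z^{(\ell)}$ is defined analogously. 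The data therefore enter only through the channel-free empirical moments $(\bar A_{j\ell},\bar p_\ell,\bar Z^{(\ell)})$. The fitted parameters $(\wh a_j,\wh\delta_j,\wh\alpha_j)$ act as arbitrary coefficients.

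\textbf{Step 2: deterministic decomposition, uniform in the coefficients.} As in the proof of Theorem~\ref{v03:bound}, I would compare $\sum_\ell|S_\ell|$ with its population counterpart $\sum_\ell|\E[\wh\psi(O)\one(Z\in B_\ell)]|$, where the expectation is taken with the coefficients frozen. The population counterpart is then compared with $\sum_\ell|\E[r(Z)\one(Z\in B_\ell)]|$. Using Assumption~\ref{v03:marginal}, $\E[\wh U_j\mid Z]=m(Z)+\bigl((a_j-\wh a_j)+(\delta_j-\wh\delta_j)m(Z)\bigr)/\wh\delta_j$. Because $m\in[0,1]$, the error term is a convex combination of $(a_j-\wh a_j)$ and $(b_j-\wh b_j)$ divided by $\wh\delta_j$. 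Summing over bins with weights $P(Z\in B_\ell)$ therefore gives at most $\dch$. The binning step uses the Lipschitz bound $L_rh_\calB$ exactly as before. These steps are purely algebraic, hold for every coefficient value, and so are unaffected by reuse.

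\textbf{Step 3: the sampling term via sign vectors, which is the main obstacle.} The difficulty is that the coefficients depend on $\calD$, so we cannot condition on them as in Theorem~\ref{v03:bound}. I would use
\[
\sum_\ell|x_\ell|-\sum_\ell|y_\ell|\le\max_{s\in\{\pm1\}^M}\sum_\ell s_\ell(x_\ell-y_\ell).
\]
I would then bound
\[
\textstyle\sum_\ell s_\ell(S_\ell-\E S_\ell)=\sum_j\frac{\wh\alpha_j}{\wh\delta_j}D_j(s)-\sum_j\frac{\wh\alpha_j\wh a_j}{\wh\delta_j}D_0(s)-D_Z(s),
\]
where $D_j(s)$ is the centered empirical mean of $\sum_\ell s_\ell A_j\one(Z\in B_\ell)$, $D_0$ is that of $\sum_\ell s_\ell\one(Z\in B_\ell)$, and $D_Z$ is that of $\sum_\ell s_\ell Z\one(Z\in B_\ell)$. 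None of these depends on the estimates. Each summand lies in an interval of length $1$ (or $2$ for signed indicator sums), so Hoeffding's inequality plus a union over the $2^M$ sign vectors and the roughly $3K$ families (the $K$ annotator moments, the occupancy moment, and the score moment, with two tails each) yields $\sqrt{2(M\log2+\log(6K/\eta))/n}$ per family. The delicate point is arranging the coefficients so that their total is at most $\Bn$. I would try to regroup as $\sum_j\frac{\wh\alpha_j}{\wh\delta_j}\bigl[D_j-\wh a_jD_0-\wh\delta_jD_Z\bigr]$, using $\sum_j\wh\alpha_j=1$ to absorb $D_Z$. I would then bound each bracket by a single convexity argument, since $A_j-\wh a_j-\wh\delta_jZ$ ranges over an interval of length at most $1$ given $Z$. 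If a clean single-family bound fails, I would accept a union over $K$ annotator families times the extra factors that produce the $6K$ inside the logarithm. Combining the three pieces gives the stated bound.
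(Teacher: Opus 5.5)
\textbf{Gap in Step 3.} Your Steps 1 and 2 are sound, and your overall architecture matches the paper's: a uniform event over a finite class of channel-free functions, channel bias bounded by $\dch$, and binning bounded by $L_rh_\calB$. The gap is in Step 3, which is exactly where the constant in the statement is decided.

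With your fixed families $A_jw_s$, $w_s$, $Zw_s$, the bracket $D_j-\wh a_jD_0-\wh\delta_jD_Z$ carries signed coefficients $(1,-\wh a_j,-\wh\delta_j)$. On the Hoeffding event, the triangle inequality therefore gives $\sum_j\wh\alpha_j(1+\wh b_j)t_n/\wh\delta_j$. This can be as large as $2\Bn t_n$, and is close to it whenever $\wh b_j\approx1$, so it is not $\Bn t_n$.

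Your proposed repair does not work as stated. The fact that $A_j-\wh a_j-\wh\delta_jZ$ lies in an interval of length at most one given $Z$ is a property of a \emph{data-dependent} function. Hoeffding's inequality cannot be applied to it when $\wh a_j,\wh\delta_j$ are fitted on the same sample, and that is the whole difficulty of reuse. Your union count is also off. $K$ annotator families plus an occupancy family plus a score family make $K+2$ families, not $3K$, so your class does not produce the $6K$ in the logarithm.

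\textbf{Missing idea.} Choose the finite class so that every admissible fitted residual is a \emph{convex} combination of its members. For any $0\le a'<b'\le1$,
\[
A_j-a'-(b'-a')Z=(1-b')A_j+(b'-a')(A_j-Z)+a'(A_j-1),
\]
with nonnegative coefficients summing to one. Equivalently, the centered empirical moment $(\E_\calD-\E)[(A_j-a'-(b'-a')Z)w_s]$ is affine in $(a',b')$. Its absolute value over the triangle $0\le a'\le b'\le1$ is therefore maximized at the vertices $(0,0),(0,1),(1,1)$. This vertex argument is the convexity argument you were reaching for, rather than one based on the range of the fitted function.

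The paper takes $h_{j,0}=A_j$, $h_{j,1}=A_j-Z$, $h_{j,2}=A_j-1$. Each $h_{j,v}w_s$ takes values in $[-1,1]$. Hoeffding's inequality with a union bound over the $3K2^M$ fixed functions and two tails gives $t_n=\sqrt{2(M\log2+\log(6K/\eta))/n}$.

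On that event, simultaneously for all admissible estimates and convex weights, your own regrouping then yields $\max_s|(\E_\calD-\E)[\wh\psi w_s]|\le\sum_j\wh\alpha_jt_n/\wh\delta_j=\Bn t_n$. Note that $(A_j-Z)w_s$ and $(A_j-1)w_s$ must be concentrated directly. Recovering them from $D_j,D_Z,D_0$ by the triangle inequality reintroduces the factor of two.
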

The proof is in Appendix~\ref{v03:full-same-sample};
Appendix~\ref{app:cross-fitting} treats the \mbox{estimator based on cross-fitting}.

With fixed $K$, $\delta_j\ge\delta_{\min}>0$ for all $j\in[K]$, and
$e_\ch=O_p(\varepsilon_n)$, Proposition~\ref{cor:reuse-main} also gives
the rate in Corollary~\ref{cor:ece-rate-main}, with
$\calD_\ev,n_\ev,n_\ch$ replaced by $\calD,n,n$.

\section{Channel and Weight Estimation}
\label{v03:identification}
\subsection{Setting and assumptions}
\label{sec:channel-setting}
Section~\ref{v03:estimation} requires channel probabilities $a_j,b_j$.
Without true labels, response frequencies mix annotation errors and class
proportions; we identify these quantities using joint responses and score groups.

We retain Assumption~\ref{v03:marginal} and assume that annotations are
independent conditional on the true label and score, extending the class-conditional noise model proposed by \citet{natarajan2013learning} to multiple annotators.
\RevisionBegin
\Needspace{3\baselineskip}
\RevisionEnd
\begin{assumption}[
Conditionally independent annotations
]
\label{v03:product}
For every $s=(s_1,\ldots,s_K)\in\{0,1\}^K$,
\[
 P(A_{1:K}=s\mid Y,Z)
 =\prod_{j=1}^{K}P(A_j=s_j\mid Y,Z).
\]
\end{assumption}
Together, Assumptions~\ref{v03:marginal} and~\ref{v03:product} give independence
conditional on $Y$ and channels shared across score groups (Appendix \ref{app:assumptions}).

The observation distribution must determine the parameters
(identification). One annotator, or two without scores, are generally
insufficient (Appendix~\ref{app:nonidentification}).
\begin{remark}[Label orientation]
\label{rem:orientation-main}
Relabeling $Y^\star\coloneqq1-Y$ exchanges $a_j,b_j$ and complements
class proportions without changing the observations. To resolve this
exchange, we specify an orientation, such as $a_j<b_j$ for a designated
annotator. Orientation alone does not resolve this information shortage.
\end{remark}

\subsection{Two annotators and two score groups}
\label{sec:two-group-method}
Let $K=2$. Choose a threshold $t\in(0,1)$ in advance or on data independent
of $\calD_\ch$, and define $G\coloneqq\one(Z\ge t)$.
Here $G=0$ is the low-score group and $G=1$ the high-score group.
The group indicator $G$ is not the true label $Y$: either group may
contain both classes. For $g\in\{0,1\}$, assume $P(G=g)>0$ and let
$\mu_g\coloneqq P(Y=1\mid G=g)$ be its unknown positive-class proportion.
We jointly estimate $\theta\coloneqq(a_1,b_1,a_2,b_2,\mu_0,\mu_1)$.
The groups share channels; \mbox{their class proportions may differ.}

\paragraph{Estimation idea and observable moments.}
We recover channel probabilities from response-pair means and covariances
within each score group, together with differences between the group means.
For each $g\in\{0,1\}$, define
$q_{1,g}\coloneqq\E[A_1\mid G=g]$,
$q_{2,g}\coloneqq\E[A_2\mid G=g]$, and
$q_{12,g}\coloneqq\E[A_1A_2\mid G=g]$.
They are the probabilities that annotator 1, annotator 2, and both answer
positively, respectively. Averaging over the true label gives the following
three identities for each $g\in\{0,1\}$, hence six moment equations in total:
\begin{align}
 q_{1,g}&=a_1+\delta_1\mu_g,\quad q_{2,g}=a_2+\delta_2\mu_g,\notag\\
 q_{12,g}&=(1-\mu_g)a_1a_2+\mu_gb_1b_2,
 \qquad g\in\{0,1\}.
 \label{v03:moments}
\end{align}
Conditional independence gives the joint-positive terms and,
for $g\in\{0,1\}$,
\begin{align*}
 q_{1,1}-q_{1,0}&=\delta_1(\mu_1-\mu_0),\\
 q_{2,1}-q_{2,0}&=\delta_2(\mu_1-\mu_0),\\
 q_{12,g}-q_{1,g}q_{2,g}&=\delta_1\delta_2\mu_g(1-\mu_g).
\end{align*}
The last line is the within-group covariance. These relations yield a closed-form inverse; its lengthy expression is
given in Appendix~\ref{app:k2}.

\begin{theorem}[Two-group identification]
\label{v03:k2}
Suppose Assumptions~\ref{v03:marginal} and~\ref{v03:product} hold with
$K=2$, $0\le a_j<b_j\le1$ for $j\in[2]$, and
$0<\mu_0,\mu_1<1$ with $\mu_0\ne\mu_1$.
For the two positive-probability groups above, the six moments
\[
 \Gamma(\theta)\coloneqq
 (q_{1,0},q_{2,0},q_{12,0},q_{1,1},q_{2,1},q_{12,1})
\]
uniquely determine $\theta$.
\end{theorem}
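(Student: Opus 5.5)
The plan is to show directly that the map $\Gamma$ is injective on the stated parameter set. I will derive, from \eqref{v03:moments} and the three difference and covariance relations displayed just before Theorem~\ref{v03:k2}, a chain of equations in which each unknown is a forced function of the six moments and of quantities already recovered. Write $D_j\coloneqq q_{j,1}-q_{j,0}=\delta_j\Delta$ for $j\in[2]$, where $\Delta\coloneqq\mu_1-\mu_0$. Write also $C_g\coloneqq q_{12,g}-q_{1,g}q_{2,g}=\delta_1\delta_2\mu_g(1-\mu_g)$ for $g\in\{0,1\}$. These four quantities are functions of $\Gamma(\theta)$ alone.

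\textbf{Step 1 (normalizing away the channel gaps).} Because $\delta_j>0$ and $\Delta\ne0$, we have $D_1D_2=\delta_1\delta_2\Delta^2>0$. The ratios $c_g\coloneqq C_g/(D_1D_2)=\mu_g(1-\mu_g)/\Delta^2$ are therefore well defined and observable. They are strictly positive because $0<\mu_g<1$.

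\textbf{Step 2 (recovering $\Delta$, then $\mu_0,\mu_1$).} Subtract the two equations $c_g\Delta^2=\mu_g(1-\mu_g)$. This gives $(c_1-c_0)\Delta^2=\Delta(1-\mu_0-\mu_1)$, and dividing by $\Delta\ne0$ yields $s\coloneqq\mu_0+\mu_1=1-d\Delta$ with $d\coloneqq c_1-c_0$. Next add the two equations and use $\mu_0^2+\mu_1^2=(s^2+\Delta^2)/2$. This gives $(c_0+c_1)\Delta^2=s-(s^2+\Delta^2)/2$. Substituting $s=1-d\Delta$, the linear terms in $\Delta$ cancel, and we are left with
\[
 \Delta^2=\frac{1}{2(c_0+c_1)+d^2+1},
\]
which is determined by the moments.

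The main obstacle is the sign of $\Delta$, since the moment identities are symmetric under the label exchange of Remark~\ref{rem:orientation-main}. The orientation $a_1<b_1$ resolves it: $\delta_1>0$ forces $\operatorname{sign}(\Delta)=\operatorname{sign}(D_1)$, and $D_1\ne0$. Hence $\Delta$ is unique. Then $s=1-d\Delta$ is unique, and so are $\mu_0=(s-\Delta)/2$ and $\mu_1=(s+\Delta)/2$.

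\textbf{Step 3 (channels).} Given $\Delta$, set $\delta_j=D_j/\Delta$, then $a_j=q_{j,0}-\delta_j\mu_0$ and $b_j=a_j+\delta_j$ for $j\in[2]$. Every step is a forced consequence of the moment equations. So two parameter vectors $\theta,\theta'$ in the stated class with $\Gamma(\theta)=\Gamma(\theta')$ must coincide, which proves uniqueness.

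The product-moment equations $q_{12,g}$ enter only through the covariances $C_g$. Conversely, one can check that $\Gamma$ evaluated at the recovered $\theta$ reproduces the data, but this is not needed for uniqueness. These closed-form expressions are the explicit inverse that Appendix~\ref{app:k2} refers to, and they would also serve as the plug-in channel estimator.
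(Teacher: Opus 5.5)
Your proof is correct, and it takes a different route from the paper's.

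\textbf{The paper's route.} The paper eliminates the second annotator first. It forms $\beta=(q_{2,1}-q_{2,0})/(q_{1,1}-q_{1,0})=\delta_2/\delta_1$ and notes that $c_g/\beta=(q_{1,g}-a_1)(b_1-q_{1,g})$. Hence $h_g=c_g/\beta+q_{1,g}^2=(a_1+b_1)q_{1,g}-a_1b_1$ is affine in $q_{1,g}$. The two distinct values $q_{1,0}\ne q_{1,1}$ then identify the sum and product of $a_1,b_1$. These are recovered as the roots of $q^2-uq+v$, with orientation fixing their order. Next, $a_2,b_2$ follow from $q_{2,g}=\tau+\beta q_{1,g}$, and the $\mu_g$ come last.

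\textbf{Your route.} You eliminate both channels at once through the scale-free ratios $C_g/(D_1D_2)=\mu_g(1-\mu_g)/\Delta^2$. You then solve this two-unknown system for the class proportions first, using the closed form $\Delta^2=1/(2(c_0+c_1)+d^2+1)$, whose denominator is at least one. The channels are recovered linearly at the end.

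\textbf{What each buys.} Your version treats the two annotators symmetrically. It also makes transparent that orientation enters only through $\operatorname{sign}(\Delta)=\operatorname{sign}(D_1)$. In Step~1 you only need $D_1D_2\ne0$, not $D_1D_2>0$, so the argument uses just $\delta_1>0$ and $\delta_2\ne0$. The paper's version delivers the channel endpoints directly. It also exhibits the discriminant $u^2-4v=\delta_1^2$, which the finite-sample inversion theorem reuses to show that the square root and denominators of $H$ stay away from their singularities.

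\textbf{One correction.} Your closing remark is not accurate: your formulas are not the inverse $H$ of Appendix~\ref{app:k2} but a different closed form. The two agree on the image of $\Gamma$, as any inverse must. Applied to sample moments, however, they are different functions. Using yours as the plug-in estimator would therefore require redoing the feasibility conditions and the constants $R_0,L_H$ in the stability analysis.
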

Appendix~\ref{app:k2} gives the inverse and proof. The class proportions
are estimated, not supplied. Calibration of $f$ is unnecessary, but a
low/high score split alone does not ensure $\mu_0\ne\mu_1$.

\paragraph{Finite-sample estimation.}
On $\calD_\ch$, let $G_i\coloneqq\one(Z_i\ge t)$ and
$n_g\coloneqq\sum_{i=1}^{n_\ch}\one(G_i=g)$.
For $g\in\{0,1\}$ with $n_g>0$, estimate the three moments by
\begin{align}
 \wh q_{j,g}&\coloneqq n_g^{-1}\textstyle\sum_{i:G_i=g}A_{ij},
 \qquad j\in[2],\notag\\
 \wh q_{12,g}&\coloneqq n_g^{-1}\textstyle\sum_{i:G_i=g}A_{i1}A_{i2}.
 \label{eq:group-empirical-main}
\end{align}
Let $\wh\Gamma$ collect the six estimates in the order of $\Gamma(\theta)$.
The closed-form estimator is $\wh\theta_{\mom}\coloneqq H(\wh\Gamma)$,
where $H$ is the inverse in Theorem~\ref{v03:k2}, whenever it is feasible.

Sampling error can make the closed-form inverse infeasible. We therefore
also fit the same moment model by constrained generalized method of moments
(GMM; Appendices~\ref{v03:channel-experiments} and~\ref{app:gmm}). Under
separation and interiority, the feasible inverse and an exact GMM minimizer
satisfy $e_\ch=O_p(n_\ch^{-1/2})$ (Theorems~\ref{thm:k2-direct-finite}
and~\ref{thm:gmm-finite}), supplying the rate in
Corollary~\ref{cor:ece-rate-main}. Constraints do not resolve weak identification.

\subsection{Three or more annotators}
\label{sec:three-annotator-method}
With at least three informative annotators, the response vectors alone
suffice under Assumptions~\ref{v03:marginal} and~\ref{v03:product}:
no threshold or score-group separation is required.

\paragraph{Response distribution.}
Let $K\ge3$, $\mu\coloneqq P(Y=1)\in(0,1)$, and
$\theta\coloneqq(\mu,a_1,b_1,\ldots,a_K,b_K)$, with true value $\theta_0$.
Under Assumptions~\ref{v03:marginal} and~\ref{v03:product}, the responses
are independent conditional on $Y$. Marginalizing $Y$ therefore gives
the joint response distribution $P(A_{1:K}=s)=p_{\theta_0}(s)$, where
\begin{align}
 p_\theta(s)&\coloneqq(1-\mu)\prod_{j=1}^{K}a_j^{s_j}(1-a_j)^{1-s_j}\notag\\
 &\quad+\mu\prod_{j=1}^{K}b_j^{s_j}(1-b_j)^{1-s_j},
 \quad s\in\{0,1\}^K.
 \label{v03:ds}
\end{align}
Eq.~\eqref{v03:ds} is the binary Dawid--Skene model
\citep{dawid1979maximum}, a mixture of response distributions for the two classes.

\paragraph{Maximum-likelihood estimation.}
Using only responses, not predictions $Z=f(X)$, we jointly estimate
$\mu$ and $(a_j,b_j)_{j\in[K]}$ on a compact set
$\Theta\subset(0,1)^{2K+1}$ containing $\theta_0$ with the prescribed
orientation. On $\calD_\ch$, define the log likelihood and a measurable maximizer:
\begin{align}
 L_{n_\ch}(\theta)&\coloneqq n_\ch^{-1}\textstyle\sum_{i=1}^{n_\ch}
           \log p_\theta(A_{i,1:K}),\notag\\
 \wh\theta_{\mathrm{ML}}&\in\argmax_{\theta\in\Theta}L_{n_\ch}(\theta).
 \label{eq:ds-mle-main}
\end{align}
The channel coordinates give $\wh a_j,\wh b_j$ for the correction in
Section~\ref{v03:estimation}. Numerical likelihood maximization uses
expectation--maximization (EM; Appendix~\ref{v03:algorithms}).
A finite EM run need not reach a global maximizer~\citep{wu1983convergence}.

\paragraph{Why three annotations suffice.}
The third-order central moment, together with the three pairwise
covariances, determines $\mu$ up to its complement. The prescribed
orientation resolves this exchange; response means and covariances then
recover the channels. Proposition~\ref{prop:ds-main} gives the exact
inverse, including $\mu=1/2$ \citep{allman2009identifiability}.

For fixed $K$, Theorem~\ref{thm:kge3-regular-mle} gives
$O_p(n_\ch^{-1/2})$ error under its identification, interiority, and
Fisher-information conditions. Its channel estimates can be used in
Corollary~\ref{cor:ece-rate-main} when the positive-gap conditions hold.

Prediction information can also be used: divide the prediction range into
pre-specified intervals and jointly estimate their positive-class proportions
and each annotator's channel probabilities, shared across intervals.
The EM algorithm (Joint score-bin EM) and numerical comparisons are in
Appendices~\ref{v03:algorithms} and~\ref{v03:experimental-details}, respectively.

\subsection{Choosing aggregation weights}
\label{sec:annotator-weights}
Channel and class-proportion estimates also determine the aggregation
weights in Section~\ref{v03:estimation}. We design the weights by
minimizing the known-channel residual variance, then substitute the
estimated parameters. For fixed $f$ and nonzero channel gaps, choose
constant $\alpha\in\Delta_K$ to minimize $V_{\mathrm{res}}(\alpha)\coloneqq\Var(\psi_\alpha)$, where
\begin{equation}
 V_{\mathrm{res}}(\alpha)=\Var(Y-Z)+\E[(U_\alpha-Y)^2].
 \label{eq:residual-variance-objective}
\end{equation}
The equality follows from $\E[U_\alpha-Y\mid Y,Z]=0$; only the second term depends on the weights. For each $j\in[K]$, the corrected annotation-noise variance $\bar v_j\coloneqq\E[(U_j-Y)^2]=\E[\Var(U_j\mid Y,Z)]$ satisfies
\[
 \bar v_j=[(1-\mu)a_j(1-a_j)+\mu b_j(1-b_j)]/\delta_j^2.
\]
Thus the channel and class-proportion estimates give a plug-in estimate
of this variance without true labels.

\begin{lemma}[Inverse-variance weights]
\label{lem:iv-main}
Under Assumptions~\ref{v03:marginal} and~\ref{v03:product}, with
$\delta_j\ne0$ for all $j\in[K]$,
$V_{\mathrm{res}}(\alpha)=\Var(Y-Z)+\sum_{j=1}^K\alpha_j^2\bar v_j$.
If every $\bar v_j>0$, the unique minimizer over $\Delta_K$ is
\begin{equation}
 \textstyle\alpha_j^{\IV}\coloneqq
 \bar v_j^{-1}/\bigl(\sum_{k=1}^K\bar v_k^{-1}\bigr),\qquad j\in[K].
 \label{v03:iv}
\end{equation}
\end{lemma}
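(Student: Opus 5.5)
The proof has two parts: first the variance decomposition, then a constrained quadratic minimization.

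\emph{Variance decomposition.} Start from Eq.~\eqref{eq:residual-variance-objective} and write $\psi_\alpha=(Y-Z)+(U_\alpha-Y)$. Under Assumption~\ref{v03:marginal}, for every $j$,
\[
\E[U_j\mid Y,Z]=(a_j+\delta_jY-a_j)/\delta_j=Y .
\]
Since $\alpha$ is constant and sums to one, this gives $\E[U_\alpha-Y\mid Y,Z]=0$. Two consequences follow. The cross term $\E[(Y-Z-\E[Y-Z])(U_\alpha-Y)]$ vanishes by the tower property, because $Y-Z$ is $(Y,Z)$-measurable. Also $\E[U_\alpha-Y]=0$. Together these yield
\[
V_{\mathrm{res}}(\alpha)=\Var(Y-Z)+\E[(U_\alpha-Y)^2].
\]

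Next, use $\sum_j\alpha_j=1$ to expand $U_\alpha-Y=\sum_j\alpha_j(U_j-Y)$, so that
\[
\E[(U_\alpha-Y)^2]=\sum_{j,k}\alpha_j\alpha_k\E[(U_j-Y)(U_k-Y)].
\]
The off-diagonal terms vanish: condition on $(Y,Z)$ and apply Assumption~\ref{v03:product}. For $j\ne k$, $A_j$ and $A_k$ are conditionally independent given $(Y,Z)$, so $U_j-Y$ and $U_k-Y$ are conditionally independent with conditional mean zero. Hence
\[
\E[(U_j-Y)(U_k-Y)\mid Y,Z]=0 .
\]
This is the only place Assumption~\ref{v03:product} enters, and it is the one step needing care: the factorization must hold given $(Y,Z)$, not merely given $Y$, which is exactly how the assumption is stated. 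The diagonal terms are $\alpha_j^2\bar v_j$, which gives the claimed form $V_{\mathrm{res}}(\alpha)=\Var(Y-Z)+\sum_j\alpha_j^2\bar v_j$.

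\emph{Minimization.} It remains to minimize $\sum_j\alpha_j^2\bar v_j$ over $\Delta_K$ when every $\bar v_j>0$. The objective is strictly convex, being a positive-definite diagonal quadratic, and $\Delta_K$ is compact and convex, so a unique minimizer exists. To identify it, first minimize over the affine hyperplane $\sum_j\alpha_j=1$ with a Lagrange multiplier. Stationarity $2\alpha_j\bar v_j=\lambda$ gives $\alpha_j\propto\bar v_j^{-1}$, and normalizing gives $\alpha^{\IV}$ in Eq.~\eqref{v03:iv}. Its entries are positive, so $\alpha^{\IV}$ lies in $\Delta_K$. As the minimizer over the larger hyperplane, it is also the minimizer over $\Delta_K$, and uniqueness follows from strict convexity.

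Alternatively, a direct Cauchy--Schwarz argument gives both the bound and the equality case:
\[
1=\Bigl(\sum_j\alpha_j\Bigr)^2\le\Bigl(\sum_j\alpha_j^2\bar v_j\Bigr)\Bigl(\sum_j\bar v_j^{-1}\bigr),
\]
with equality if and only if $\alpha_j\bar v_j$ is constant in $j$.
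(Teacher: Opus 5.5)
Your proof is correct and matches the paper's argument. It uses the same uncorrelated split $\psi_\alpha=(Y-Z)+(U_\alpha-Y)$ via $\E[U_\alpha-Y\mid Y,Z]=0$ and conditional independence to remove the cross-annotator terms. It then finds the minimizer by Cauchy--Schwarz or your equivalent Lagrange-multiplier route, with strict convexity for uniqueness.

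One small correction to your aside: the off-diagonal terms would already vanish under conditional independence given $Y$ alone, since $\E[U_j-Y\mid Y]=0$ by the tower property. So the $(Y,Z)$ version is what the assumption happens to supply, not what that step strictly requires.
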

The proof and zero-variance case are in Appendix~\ref{app:weights}.
For known channels and a fixed evaluation partition, these weights also
minimize the total variance of the unbiased bin-moment estimates
(Proposition~\ref{prop:bin-moment-variance}). In practice, we replace
$a_j,b_j,\mu$ by their estimates and normalize the inverse estimated
variances; Appendix~\ref{app:feasible-weights} specifies this plug-in
rule. \mbox{This optimality result assumes known channels.}

\section{Recalibration}
\label{v03:recalibration}
We fit a measurable post-processing map $g:[0,1]\to[0,1]$ to improve the calibration of $f$.
Let $\calD_\rec\coloneqq\{(Z_i,A_{i,1:K})\}_{i\in[n_\rec]}$ be an i.i.d.
sample from $P_O$, independent of $\calD_\ch$.
We fit $g$ on $\calD_\rec$ using the corrected labels
$\wh U_i=\wh U(A_{i,1:K})$ defined as in
Eq.~\eqref{eq:estimated-corrected-label-main}.

\subsection{Corrected-loss fitting}
We use unbiased loss correction for class-conditional label
noise~\citep{natarajan2013learning}. Logarithmic loss is affine in the
binary label, so $\E[U_\alpha\mid Y,Z]=Y$ recovers the clean-label
expected loss when the channels are known.
We fit the map using corrected labels $\wh U_i$ computed from estimated channels.
\begin{definition}[Corrected logarithmic loss]
\label{def:corrected-loss-main}
Let $\calG$ be a class of measurable maps $[0,1]\to(0,1)$.
For $p\in(0,1)$, $i\in[n_\rec]$, and $g\in\calG$, define the
corrected loss and empirical risk by
\begin{align}
 \wh\ell_i(p)&\coloneqq-\wh U_i\log p-(1-\wh U_i)\log(1-p),\notag\\
 \wh R(g)&\coloneqq n_\rec^{-1}\textstyle\sum_{i=1}^{n_\rec}\wh\ell_i(g(Z_i)).
 \label{v03:loss}
\end{align}
\end{definition}
We minimize this risk over temperature or Platt scaling
maps~\citep{Guo2017,platt1999probabilistic}.
Appendix~\ref{app:parametric-recalibration} defines these families with clipped input
scores and compact parameter sets. These restrictions keep predictions
away from zero and one and prevent an unbounded objective when corrected
labels lie outside $[0,1]$.
\begin{remark}[Risk and calibration guarantees]
\label{rem:risk-calibration}
Minimizing the corrected empirical risk attains clean log risk close to
the best in the chosen family. Theorem~\ref{thm:parametric-recalibration-risk}
bounds the excess risk by sampling, channel-estimation, and optimization
errors. This is a log-risk guarantee, not a guarantee that every fitted
temperature or Platt map reduces $\ECE(g\circ f)$.
\end{remark}

\subsection{Fixed-bin recalibration}
Histogram binning offers a nonparametric alternative \citep{gupta2021distribution}.
The fitting bins need not coincide with the channel-estimation groups or
the evaluation bins.
\begin{definition}[Projected histogram recalibration]
\label{def:histogram-main}
For a positive integer $J$, let $\calH\coloneqq\{H_h\}_{h\in[J]}$ be a
measurable partition of $[0,1]$ chosen independently of $\calD_\rec$.
For $h\in[J]$, set $I_h\coloneqq\{i\in[n_\rec]:Z_i\in H_h\}$ and
$n_h\coloneqq|I_h|$. Write
$\Pi_{[0,1]}(u)\coloneqq\min(1,\max(0,u))$ for projection onto $[0,1]$.
If $n_h>0$, define
\begin{equation}
 \wt m_h\coloneqq n_h^{-1}\textstyle\sum_{i\in I_h}\wh U_i,
 \qquad \wh m_h\coloneqq\Pi_{[0,1]}(\wt m_h).
 \label{v03:hist-est}
\end{equation}
For an empty bin, choose $\wh m_h\in[0,1]$ arbitrarily.
The recalibration map is
$\wh m_\calH(z)\coloneqq\sum_{h=1}^{J}\wh m_h\one(z\in H_h)$.
\end{definition}
The bin mean estimates $\E[Y\mid Z\in H_h]$, and projection makes its output a probability.
\begin{proposition}[Calibration of projected bin averages]
\label{v03:histogram}
Suppose Assumption~\ref{v03:marginal} holds,
$\calD_\rec\indep\calD_\ch$, and
$0\le\wh a_j<\wh b_j\le1$ for $j\in[K]$.
Let $\calH$ and $\wh\alpha\in\Delta_K$ be fixed conditional on
$\calD_\ch$. For every $\eta\in(0,1)$, conditional on $\calD_\ch$,
with probability at least $1-\eta$,
\begin{equation}
 \ECE(\wh m_\calH\circ f)
 \le\dch+(\Bn+1)
 \sqrt{\frac{2(J\log2+\log(4/\eta))}{n_\rec}}.
 \label{v03:hist-bound}
\end{equation}
\end{proposition}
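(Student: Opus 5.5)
Since $\wh m_\calH\circ f$ is constant on each fitting bin $H_h$, its calibration reduces to comparing each $\wh m_h$ with the clean bin mean. With $W\coloneqq\wh m_\calH(Z)$ we have $W=\wh m_h$ on $\{Z\in H_h\}$. For bins with distinct values, $\E[Y\mid W=\wh m_h]$ is the clean mean over the union of bins sharing that value. Jensen's inequality (merging bins can only decrease the sum of absolute deviations) then gives
\[
\ECE(\wh m_\calH\circ f)\le\textstyle\sum_{h=1}^J p_h\,|m_h-\wh m_h|,
\]
where $p_h\coloneqq P(Z\in H_h)$ and $m_h\coloneqq\E[Y\mid Z\in H_h]$ (terms with $p_h=0$ are dropped). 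This holds conditionally on $\calD_\ch$ and $\calD_\rec$, because $\calH$ and the $\wh m_h$ are then fixed. Since $m_h\in[0,1]$ and $\Pi_{[0,1]}$ is $1$-Lipschitz, $|m_h-\wh m_h|\le|m_h-\wt m_h|$ for nonempty bins.

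Next I split $\wt m_h-m_h$ into a channel part and a sampling part. Conditional on $\calD_\ch$, let $\bar u_h\coloneqq\E[\wh U\mid Z\in H_h]$ (the expectation is over a fresh $O$). Assumption~\ref{v03:marginal} gives $\E[\wh U_j\mid Y,Z]=(a_j-\wh a_j+\delta_jY)/\wh\delta_j$, hence
\[
\E[\wh U_j-Y\mid Y,Z]=\bigl[(a_j-\wh a_j)(1-Y)+(b_j-\wh b_j)Y\bigr]/\wh\delta_j,
\]
whose absolute value is at most $\max(|\wh a_j-a_j|,|\wh b_j-b_j|)/\wh\delta_j$. Weighting by $\wh\alpha_j$ gives $|\bar u_h-m_h|\le\dch$ for every $h$, so $\sum_h p_h|\bar u_h-m_h|\le\dch$.

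The main obstacle is the sampling term: we need $\sum_h p_h|\wt m_h-\bar u_h|$ together with the fact that $p_h$ is a population weight while $\wt m_h$ is an average over a random count $n_h$. I would write
\[
p_h(\wt m_h-\bar u_h)=\frac{p_h}{\wh p_h}\cdot\frac1{n_\rec}\sum_i(\wh U_i-\bar u_h)\one(Z_i\in H_h),
\]
with $\wh p_h\coloneqq n_h/n_\rec$. It is cleaner to add and subtract $\wh p_h$, bounding $\sum_h p_h|\wt m_h-\bar u_h|$ by
\[
\textstyle\sum_h\bigl|n_\rec^{-1}\sum_i(\wh U_i-\bar u_h)\one(Z_i\in H_h)\bigr|+\sum_h|\wh p_h-p_h|\,|\wt m_h-\bar u_h|.
\]
Taking the projection first instead gives a range-$1$ factor: replacing $\wt m_h$ by $\wh m_h$ and $m_h$ in the second piece, use $|\wh m_h-m_h|\le1$ to get $\sum_h|\wh p_h-p_h|$. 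Both remaining sums are suprema over sign vectors $\sigma\in\{\pm1\}^J$ of i.i.d. averages. In the first, the summand $\sum_h\sigma_h(\wh U-\bar u_h)\one(Z\in H_h)$ lies in an interval of length at most $\Bn$, since $\wh U$ ranges over an interval of width $\sum_j\wh\alpha_j/\wh\delta_j$. Hoeffding's inequality with a union bound over $2^J$ signs and failure probability $\eta/2$ yields $\Bn\sqrt{2(J\log2+\log(4/\eta))/n_\rec}$. The second sum is the $L_1$ deviation of the empirical bin distribution. The same argument, with range-$1$ summands (indeed range $2$, but Bretagnolle--Huber--Carol or a direct sign argument gives the stated constant), yields $\sqrt{2(J\log2+\log(4/\eta))/n_\rec}$.

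For the final assembly, the bound $\sum_h p_h|\wh m_h-m_h|\le\sum_h\wh p_h|\wt m_h-m_h|+\sum_h|\wh p_h-p_h|$ combines the pieces. The first term is at most the sampling sum plus $\dch$. Here $\sum_h\wh p_h|\bar u_h-m_h|\le\dch$ because the weights $\wh p_h$ sum to one, and empty bins contribute $p_h|\wh m_h-m_h|\le|\wh p_h-p_h|$ on the correction side. Summing the two Hoeffding bounds gives $(\Bn+1)\sqrt{\cdot}$ plus $\dch$, with total failure probability at most $\eta$ after a union bound. The delicate point is matching the constants in the $L_1$ deviation term; if the range is genuinely $2$, I would centre each indicator and apply a McDiarmid bound to $\sum_h|\wh p_h-p_h|$, whose bounded differences are $2/n_\rec$.
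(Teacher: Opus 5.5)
Your proof is correct and is essentially the paper's argument. It has the same three terms: $\dch$ from the conditional bias of $\wh U$, a $2^J$-sign Hoeffding bound on the $\wh U$ bin moments, and one on the empirical bin masses. Your $1$-Lipschitz projection step $p_h|\wh m_h-m_h|\le\wh p_h|\wt m_h-m_h|+|\wh p_h-p_h|$ plays the role of Lemma~\ref{lem:projected-bin-mass}, which instead compares the uncentred moment $\E_{\calD_\rec}[\wh U\one(Z\in H_h)]$ with $p_h m_h$; the difference is immaterial.

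On the constants you worried about: the signed summand $\sum_h\sigma_h(\wh U-\bar u_h)\one(Z\in H_h)$ need not lie in an interval of length $\Bn$, because different bins shift it by different $\bar u_h$. It is, however, bounded by $\Bn$ in absolute value. Two-sided Hoeffding with ranges $2\Bn$ and $2$, a union bound over $2^J$ signs, and $\eta/2$ per sum then give exactly $(\Bn+1)\sqrt{2(J\log2+\log(4/\eta))/n_\rec}$. No Bretagnolle--Huber--Carol or McDiarmid refinement is needed.
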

The proof is in Appendix~\ref{v03:recalibration-proofs}.

\section{Experiments}
\label{v03:experiments}
We use three settings: fully synthetic toy data to study ECE estimation
and sample reuse; real tabular data with simulated label noise to test
recalibration; and real images with collected human labels to stress-test
both procedures without guaranteed theoretical assumptions.
Appendix~\ref{v03:experimental-details} gives details and additional results.

\subsection{Toy data}
\label{sec:identification-experiments}
We examine whether noisy labels suffice to estimate the ECE of a fixed
predictor, with either known or estimated noise probabilities.
We generate $X\sim\tfrac12N(-1,1)+\tfrac12N(1,1)$, fix
$f(X)=\sigma(0.3-X)$, and draw $Y\mid Z\sim\Ber(m(Z))$, where
$Z=f(X)$ and $\sigma$ is the logistic function.
Each of three annotators labels every example, independently conditional
on $Y$, with false-positive probabilities $(0.08,0.13,0.18)$ and
false-negative probabilities $(0.16,0.11,0.06)$.
Here $m(z)=P(Y=1\mid Z=z)$ is the true positive-label probability.
Keeping the input distribution, $f$, and noise probabilities unchanged,
we vary only $m$: $m(z)=z$ (Calibrated),
$m(z)=\sigma(0.65\log\tfrac{z}{1-z}-0.3)$ (Shifted), or
$m(z)=\Pi_{[0.02,0.98]}(z+0.12\sin(2\pi z))$ (Nonlinear), where $\Pi$
clips to the indicated interval. The latter two make $f$ miscalibrated;
no recalibration is performed. Figure~\ref{v31:fig-estimation} uses only
Calibrated, whose true ECE is zero; Table~\ref{v31:tab-reuse} uses all three.

\begin{table}[!htb]
\centering\small
\caption{
Toy ECE estimation at $N=8000$: mean $\pm$ SD of absolute error over 200 repetitions ($\times100$).}
\label{v31:tab-reuse}
\setlength{\tabcolsep}{3pt}
\adjustbox{max width=\linewidth}{\begin{tabular}{@{}lrrr@{}}
\toprule
Condition & \shortstack{Sample\\splitting} & \shortstack{Full-sample\\reuse}
& \shortstack{5-fold\\cross-fitting}\\
\midrule
Calibrated & $1.960 \pm 0.480$ & $\boldsymbol{1.343 \pm 0.322}$ & $\boldsymbol{1.343 \pm 0.322}$\\
Shifted & $0.751 \pm 0.605$ & $\boldsymbol{0.448 \pm 0.358}$ & $\boldsymbol{0.448 \pm 0.358}$\\
Nonlinear & $0.618 \pm 0.508$ & $\boldsymbol{0.427 \pm 0.336}$ & $\boldsymbol{0.427 \pm 0.336}$\\
\bottomrule
\end{tabular}}
\end{table}

\newcommand{\TabularRecalibrationRows}{ACSIncome & $6.58 \pm 0.07$ & $12.22 \pm 0.18$ & $3.00 \pm 0.24$ & $12.31 \pm 0.19$ & $2.35 \pm 0.16$ & $9.17 \pm 0.14$ & $\mathbf{0.80 \pm 0.08}$ \\
CreditDefault & $8.43 \pm 0.38$ & $12.50 \pm 0.21$ & $5.54 \pm 0.35$ & $10.09 \pm 0.22$ & $2.37 \pm 0.28$ & $9.06 \pm 0.18$ & $\mathbf{1.51 \pm 0.26}$ \\
HMDA & $6.44 \pm 0.12$ & $15.33 \pm 0.25$ & $4.44 \pm 0.35$ & $14.01 \pm 0.15$ & $1.96 \pm 0.11$ & $11.39 \pm 0.11$ & $\mathbf{0.79 \pm 0.09}$ \\
MEPS & $10.65 \pm 0.85$ & $19.12 \pm 0.66$ & $12.09 \pm 0.92$ & $11.76 \pm 0.25$ & $3.37 \pm 0.43$ & $10.56 \pm 0.33$ & $\mathbf{2.39 \pm 0.39}$ \\
}

\begin{table*}[t]
\centering\small
\caption{
Recalibration on tabular data: dataset-wise Oracle ECE ($\times100$), mean $\pm$ SD over ten seeds.}
\label{v03:tab-tabular}
\setlength{\tabcolsep}{3pt}
\renewcommand{\arraystretch}{1.15}
\adjustbox{max width=\textwidth}{\begin{tabular}{@{}lrrrrrrr@{}}
\toprule
\multirow{2}{*}{Dataset} & \multirow{2}{*}{Base} & \multicolumn{2}{c}{Temperature} & \multicolumn{2}{c}{Platt} & \multicolumn{2}{c}{Fixed-bin}\\
\cmidrule(lr){3-4}\cmidrule(lr){5-6}\cmidrule(lr){7-8}
 & & Naive & Corrected & Naive & Corrected & Naive & Corrected\\
\midrule
\TabularRecalibrationRows
\bottomrule
\end{tabular}}
\end{table*}

\paragraph{ECE estimation.}
Soft-vote ECE and Majority-vote ECE replace the corrected label $\widehat U_i$ in Eq.~\eqref{v03:direct} by $K^{-1}\sum_{j=1}^K A_{ij}$ and $\one\{\sum_{j=1}^K A_{ij}>K/2\}$, respectively, with $K=3$.
\NCECE{} uses true noise probabilities (\emph{given rates}) or estimates obtained from noisy responses using \emph{Joint score-bin EM} (Appendix~\ref{app:em-grouped}); both use equal weights.
ECE uses ten equal-width bins. Clean-label ECE replaces $\widehat U_i$ by $Y_i$ only as a reference.
For $N\in\{500,1000,2000,4000,8000\}$, half the examples estimate noise probabilities; all methods use the remaining half for ECE.
We report mean absolute error (MAE) relative to population ECE over 200 repetitions; error bars show one Monte Carlo standard error.

\begin{figure}[t]
\centering
\includegraphics[width=0.7\linewidth,trim=0bp 7bp 0bp 14bp,clip]{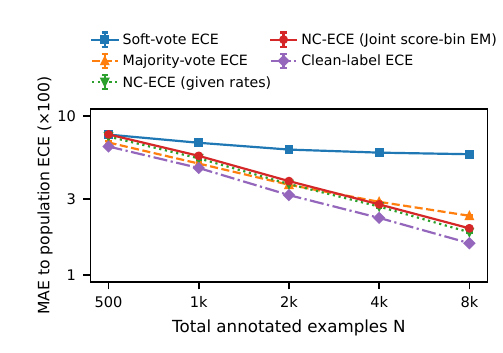}
\caption{Toy ECE-estimation MAE ($\times100$, $m(z)=z$).}
\label{v31:fig-estimation}
\end{figure}

Figure~\ref{v31:fig-estimation} shows decreasing \NCECE{} errors, with
estimated rates performing close to given rates, illustrating
Lemma~\ref{lem:corrected-main} and Theorem~\ref{v03:bound}.
Majority voting is wrong only when at least two of the three annotators are wrong, which is unlikely with low error rates and conditionally independent errors.  These nonzero rates still change
label expectations; correction need not have lower finite-sample MAE.
Appendix 
Figure~\ref{v31:fig-other-functions}
shows the Shifted and
Nonlinear comparisons: voting is close to \NCECE{} in Shifted, whereas
the difference is clearer at larger sample sizes in Nonlinear.

\paragraph{Sample reuse.}
For each of the three conditions, we compare how the same $N$ examples,
each labeled by three annotators, are used:
\emph{sample splitting} (half for noise estimation, half for ECE),
\emph{full-sample reuse} (all examples for both), and
\emph{five-fold cross-fitting} (estimate noise probabilities on four folds
and apply the correction to the remaining fold).
Cross-fitting combines the held-out residuals before computing ECE.
The predictor and bin boundaries remain fixed, and every method uses
Joint score-bin EM and equal weights.

Table~\ref{v31:tab-reuse} shows similar mean errors for reuse and
cross-fitting, both lower than for splitting in all three conditions.
Even when the true ECE is zero, finite-sample estimation
error remains. The calibrated case illustrates the reuse allowed by
Proposition~\ref{cor:reuse-main}, not a theoretical ranking of the methods.

\subsection{Recalibration on tabular data}
We examine whether recalibration improves an existing predictor when
only noisy labels are available for post-hoc fitting. We train decision trees, random forests,
SVMs, naive Bayes, logistic regression, and MLPs with the original labels
of ACSIncome, CreditDefault, HMDA, and MEPS \citep{hansen2024multicalibration}. Each held-out example receives
labels from $K=2,3,4$ simulated annotators, generated by flipping its
original label with annotator- and class-specific probabilities.
Joint score-bin EM estimates the unknown noise probabilities 
on the recalibration sample (Appendix~\ref{v03:realdata}),
and the methods of Section~\ref{v03:recalibration} recalibrate the predictions
using only these noisy labels. Naive uses averaged noisy annotations as targets.
Corrected adjusts each annotation using estimated noise probabilities and combines the corrected labels with inverse-variance weights.
Table~\ref{v03:tab-tabular} checks improvement using original labels of
separate test examples. Corrected Platt and fixed-bin methods reduce mean
ECE relative to Base on all four datasets, whereas corrected temperature scaling worsens
ECE on MEPS. Improvements are not universal.

\subsection{Real label noise: a stress test}
We stress-test ECE evaluation and recalibration on CIFAR-10N, where
class-conditional noise (Assumption~\ref{v03:marginal}) and conditional
independence (Assumption~\ref{v03:product}) are not assured.
Each image has its original CIFAR-10 label and three additional
classifications by distinct people, who can vary across images.
The original multiclass study reports feature-dependent noise
\citep{wei2022learning}. No artificial corruption is introduced.

Both label sets are mapped to animal/vehicle. CLIP ViT-B/32's animal probability is the score \citep{radford2021learning}.
We compare the base predictor, fixed-bin fitting to
mean responses (Naive), and noise-corrected fixed-bin fitting (Corrected).
For recalibration and evaluation, marginal Dawid--Skene EM estimates
noise probabilities from responses alone on separate images.
We reserve original CIFAR-10 labels \citep{krizhevsky2009learning} as benchmark
reference labels for binned \emph{Oracle ECE}; they are not used for fitting or
selection.

\begin{table}[htbp]
\centering\small
\caption{CIFAR-10N fixed-bin recalibration: mean $\pm$ SD over five pilot splits ($\times100$).}
\label{v03:tab-human}
\scriptsize
\setlength{\tabcolsep}{1.8pt}
\begin{tabular}{@{}lrrr@{}}
\toprule
Method & Oracle ECE & Soft-vote ECE & \NCECE{}\\
\midrule
Base & $2.129 \pm 0.000$ & $2.825 \pm 0.000$ & $2.436 \pm 0.000$\\
Naive fixed-bin & $4.351 \pm 0.243$ & $\boldsymbol{0.450 \pm 0.141}$ & $4.695 \pm 0.226$\\
Corrected fixed-bin & $\boldsymbol{0.470 \pm 0.157}$ & $4.291 \pm 0.101$ & $\boldsymbol{0.482 \pm 0.069}$\\
\bottomrule
\end{tabular}
\end{table}

Table~\ref{v03:tab-human} averages five fits sharing held-out samples.
Naive recalibration reduces Soft-vote ECE but worsens reference-label ECE;
correction reverses both changes. In this fixed-bin comparison, \NCECE{}
is close to Oracle ECE values and tracks their direction of
change using noisy responses alone; the noise
assumptions remain unverified.

\section{Conclusion and Limitations}
\label{v03:conclusion}
Corrected annotations enable ECE evaluation and recalibration from noisy responses. Our bounds separate binning, sampling, and
channel-estimation errors, including fixed-bin sample reuse.
Corrected-loss and histogram fitting provide clean-risk and calibration
guarantees, respectively.

\looseness=-1 Our framework assumes score-invariant error rates within each true class.
Identifying unknown channels additionally requires conditional independence
of annotators given the true label and score, and label orientation.
Even under these strong assumptions, small channel gaps can amplify
correction variance, while weak separation between group class proportions
can destabilize two-annotator estimation. 
Likelihood rates need not hold after finitely many EM steps.
Addressing the limitations of these assumptions and extending the framework to multiclass labels remain directions for future work.

\bibliographystyle{plainnat}
\bibliography{main}
\clearpage
\appendix
\allowdisplaybreaks[0]
\numberwithin{equation}{section}
\numberwithin{theorem}{section}
\numberwithin{proposition}{section}
\numberwithin{assumption}{section}
\numberwithin{lemma}{section}
\numberwithin{corollary}{section}
\numberwithin{definition}{section}
\numberwithin{remark}{section}
\numberwithin{example}{section}
\section*{Supplementary Material}
The supplement gives proofs, additional statistical results, algorithms, and
experimental protocols. Appendix~\ref{v03:notation} collects the notation;
Appendix~\ref{v03:settings} discusses the annotation model.
Appendices~\ref{v03:main-proofs}--\ref{v03:recalibration-proofs} establish the
results of Sections~\ref{v03:estimation}--\ref{v03:recalibration}.
Algorithms and experiments are given in Appendices~\ref{v03:algorithms}
and~\ref{v03:experimental-details}, followed by additional related work
and discussion in Appendix~\ref{v03:background}.
\section{Notation}
\label{v03:notation}
The input space is standard Borel. Predictors, post-processing maps, and
bin-membership indicators are measurable, jointly with any fitting data.
A learned predictor is fixed by conditioning on its training data and randomness;
all subsequent samples are independent of that information. We assume the same
$K$ response slots are observed for every example; sparse responses require
additional missingness assumptions and example-dependent aggregation.
Conditional expectations hold almost surely. Pointwise statements about
$m(z)=\E[Y\mid Z=z]$ use a measurable version; regularity assumptions require
a version with the stated property.

\begin{center}
\begin{tabularx}{\textwidth}{@{}p{0.27\textwidth}X@{}}
\toprule
Symbol & Meaning\\
\midrule
$Z=f(X)$, $O=(Z,A_{1:K})$ & Prediction and observed post-hoc data; their laws are $P_Z,P_O$.\\
$a_j,b_j,\delta_j$ & False-positive probability, true-positive probability, and $b_j-a_j$.\\
$\rho_{0,j},\rho_{1,j}$ & Alternative noise coordinates $a_j,1-b_j$.\\
$\mu$, $r(z)$ & $P(Y=1)$ and $m(z)-z$.\\
$\calB,M,\ECE_\calB(f)$ & Evaluation partition, bin count, and population binned error in Eq.~\eqref{v03:binned}.\\
$\calC,L,G,\mu_g$ & Identification partition, group count, group indicator, and $P(Y=1\mid G=g)$. Two-group formulas use $g\in\{0,1\}$.\\
$\calH,J$ & Recalibration partition and its bin count.\\
$\calD_s,n_s$ & Sample and size for $s\in\{\ch,\rec,\val,\ev\}$. Split-sample results use mutually independent samples.\\
$\E_{\calD_s}[h]$ & $n_s^{-1}\sum_{i=1}^{n_s}h(O_i^{(s)})$.\\
$U_j,\psi_j,U_\alpha,\psi_\alpha$ & Known-channel corrected labels and residuals; hats denote fitted corrections.\\
$\Bn,\dch$ & Bounds for a fitted correction and its conditional-mean error.\\
$e_\ch,\varepsilon_{n_\ch}$ & Maximum endpoint error and its deterministic convergence-rate sequence.\\
$V_{\mathrm{res}}(\alpha)$ & Variance of one corrected residual, not of the ECE estimator.\\
\bottomrule
\end{tabularx}
\end{center}

Expectations of fitted functions integrate over a fresh independent example,
with the fit held fixed. Conditional risk and bias statements require finite
integrals for almost every fit, not finite averages over the fitting data.
Positive fitted gaps give a finite $\Bn$ for each fit but need not give
$\E[\Bn]<\infty$. We use the usual $O_p,o_p,\asymp$ notation and natural logarithms.
Norms $\|\cdot\|_2,\|\cdot\|_\infty,\|\cdot\|_\op$ are Euclidean,
maximum-coordinate, and induced operator norms; $DF$, $\nabla$, and $\nabla^2$
denote Jacobian, gradient, and Hessian. Matrix inequalities use the
positive-semidefinite order.

A fixed nonempty informative subset $\calJ\subseteq[K]$ can replace $[K]$ in
correction formulas, with weights in
$\Delta_\calJ=\{\alpha_j\ge0:\sum_{j\in\calJ}\alpha_j=1\}$.
An excluded zero-gap channel is not evaluated as a zero-weighted inverse.
For equal-width partitions, bins are left-closed and right-open, except that
the last bin includes $1$. The same convention applies to $M,L,J$ bins.
Write $\Pi_{[a,b]}(u)=\min(b,\max(a,u))$ and interpret factors with exponent
zero in Bernoulli mass functions as one, including $0^0=1$.

\section{Calibration and the Annotation Model}
\label{v03:settings}
\subsection{Population and empirical binned errors}
For a fixed partition, the clean estimator and its population counterpart are
\begin{align}
 \wh\ECE_\calB(f,S)&=\sum\nolimits_{\ell=1}^M
 \left|\frac1n\sum\nolimits_{i=1}^n(Y_i-Z_i)\one(Z_i\in B_\ell)\right|,
 \label{eq:clean-ece}\\
 \ECE_\calB(f)&=\sum\nolimits_{\ell=1}^M
 \left|\E[(Y-Z)\one(Z\in B_\ell)]\right|.
 \label{v03:binned}
\end{align}
For a positive-mass bin, conditional Jensen gives
$|\E[r(Z)\mid Z\in B_\ell]|\le\E[|r(Z)|\mid Z\in B_\ell]$.
Multiplying by $p_\ell=P(Z\in B_\ell)$ and summing proves
$\ECE_\calB(f)\le\ECE(f)$. Fixed-bin estimation can therefore leave binning
error even as sampling error vanishes.
An independently fitted partition is held fixed in these expectations.
\begin{example}[Cancellation within a bin]
\label{ex:bin-cancellation}
If $Z$ is uniform on $\{1/4,3/4\}$ and $Y\sim\Ber(1/2)$ is independent of $Z$,
then $\ECE(f)=1/4$ but one bin gives $\ECE_\calB(f)=|\E[Y-Z]|=0$.
\end{example}

\subsection{Conditional distributions and identification}
\label{app:assumptions}
Because annotations are binary, Assumption~\ref{v03:marginal} is equivalent to
\begin{equation}\textstyle
 P(A_j=1\mid Y,Z)=a_j+\delta_jY,
 \qquad \delta_j=b_j-a_j.\label{eq:marginal-channel}
\end{equation}
Taking conditional expectations yields
\begin{equation}\textstyle
 q_j(z)=\E[A_j\mid Z=z]=a_j+\delta_jm(z).\label{eq:affine-response}
\end{equation}
Known-channel correction uses this marginal relation and nonzero gaps;
it requires neither positive orientation nor independence among annotators.
Assumption~\ref{v03:product} additionally gives
\begin{equation}\textstyle
 P(A_{1:K}=s\mid Y=y,Z=z)=\prod\nolimits_{j=1}^Kp_j(y)^{s_j}(1-p_j(y))^{1-s_j},
 \qquad p_j(y)=a_j+\delta_jy.\label{eq:product-channel}
\end{equation}
This implies independence of the annotations given $Y$ and
$A_{1:K}\indep Z\mid Y$, producing the likelihood model
\citep{dawid1979maximum}. Dependence can violate this product relation while
preserving the marginal correction identity. If the marginal channels themselves
vary with $Z$ given $Y$, a single shared correction need not remain valid.
Table~\ref{v03:tab-regimes} separates the requirements.
\begin{table}[t]
\centering\small
\caption{Information sufficient for correction or identification.
Unknown-channel estimation also requires label orientation; rate results
impose the regularity conditions in Appendix~\ref{v03:identification-proofs}.}
\label{v03:tab-regimes}
\begin{tabularx}{\linewidth}{@{}p{0.25\linewidth}XX@{}}
\toprule
Task & Annotation model & Additional condition\\
\midrule
Known-channel correction & Assumption~\ref{v03:marginal} & Nonzero gaps for inverted channels\\
Unknown, $K=2$ & Assumptions~\ref{v03:marginal},~\ref{v03:product} & Two informative channels; two positive-mass groups with distinct class proportions strictly between zero and one\\
Unknown, $K\ge3$ & Assumptions~\ref{v03:marginal},~\ref{v03:product} & Three informative annotators; no score separation\\
\bottomrule
\end{tabularx}
\end{table}

\begin{remark}[Label orientation]
\label{rem:orientation-app}
Replacing $\mu$ by $1-\mu$ and exchanging $a_j,b_j$ leaves the marginal response
law unchanged. Identification therefore needs a prescribed orientation,
such as $a_j<b_j$ for a specified informative annotator.
\end{remark}

\subsection{Aggregation before correction}
\label{app:aggregation}
\begin{proposition}[Expectation of an aggregated annotation]
\label{thm:aggregate-affine}
Suppose $P_{A_{1:K}\mid Y,Z}=P_{A_{1:K}\mid Y}$. For a fixed finite-valued
$v:\{0,1\}^K\to\R$, let $c_y=\E[v(A_{1:K})\mid Y=y]$. Then
\begin{equation}\textstyle
 \E[v(A_{1:K})\mid Z]=c_0+(c_1-c_0)m(Z).\label{eq:aggregate-affine}
\end{equation}
The affine expression $c_0+(c_1-c_0)t$ equals $t$ for all $t\in[0,1]$
if and only if $c_0=0,c_1=1$.
\end{proposition}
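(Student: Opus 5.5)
The plan is to condition on $Y$ and then integrate $Y$ out given $Z$ by the tower property. The key step is to show that, given $(Y,Z)$, the conditional mean of $v(A_{1:K})$ depends on $Y$ alone. By hypothesis $P_{A_{1:K}\mid Y,Z}=P_{A_{1:K}\mid Y}$, and $v$ is finite-valued, so it is bounded on the finite set $\{0,1\}^K$. Hence
\[
 \E[v(A_{1:K})\mid Y,Z]=\E[v(A_{1:K})\mid Y]=c_Y .
\]
Since $Y\in\{0,1\}$, we can write $c_Y=c_0+(c_1-c_0)Y$ almost surely.

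Applying the tower property with respect to $\sigma(Z)\subseteq\sigma(Y,Z)$ then gives
\[
 \E[v(A_{1:K})\mid Z]=\E\bigl[c_0+(c_1-c_0)Y\mid Z\bigr]=c_0+(c_1-c_0)m(Z),
\]
using $m(Z)=\E[Y\mid Z]$ almost surely. This is Eq.~\eqref{eq:aggregate-affine}.

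For the equivalence, sufficiency is immediate: if $c_0=0$ and $c_1=1$, the affine map is $t\mapsto t$. For necessity, suppose $c_0+(c_1-c_0)t=t$ for all $t\in[0,1]$. Evaluating at $t=0$ gives $c_0=0$, and evaluating at $t=1$ gives $c_1=1$.

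No step is a real obstacle. The only point needing care is measure-theoretic: the conditional-distribution hypothesis must be read as holding for a regular version, so that $\E[v\mid Y,Z]$ is $\sigma(Y)$-measurable almost surely. Boundedness of $v$ makes all the expectations finite.
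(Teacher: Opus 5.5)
Your proof is correct and follows the paper's route: iterated expectation through $\E[v(A_{1:K})\mid Y,Z]=c_Y$ gives $\E[c_Y\mid Z]=(1-m(Z))c_0+m(Z)c_1$, and evaluating the affine map at $t=0$ and $t=1$ gives the equivalence. The one implicit point is that $c_0$ and $c_1$ are well defined, which the paper's standing assumption $0<P(Y=1)<1$ guarantees.
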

\begin{proof}
Iterated expectation gives $\E[v(A_{1:K})\mid Z]=\E[c_Y\mid Z]
=(1-m(Z))c_0+m(Z)c_1$. Evaluating the affine expression at $0$ and $1$
proves the last assertion.
\end{proof}
The proposition covers majority vote and fixed weighted averages.
For fixed $K$, more evaluation examples do not remove a conditional-mean
mismatch, although particular distributions may have equal scalar ECEs.
\begin{remark}[Prediction-dependent aggregation]
\label{rem:score-posterior}
For the correctly specified posterior $\tau(A_{1:K},Z)=\E[Y\mid A_{1:K},Z]$,
\begin{equation}\textstyle
 \E[\tau(A_{1:K},Z)\mid Z]=m(Z).\label{eq:posterior-mean}
\end{equation}
This follows by iterated expectation. Such prediction-dependent rules are not
excluded by Proposition~\ref{thm:aggregate-affine}; fitted posteriors require
their own estimation-error analysis.
\end{remark}

\section{Calibration Estimation: Statements and Proofs}
\label{v03:main-proofs}
\subsection{Weighted representations and corrected moments}
\label{app:weighted-derivations}
A bounded function $w$ selects or weights predictions. Bin indicators
recover the usual ECE moments. Allowing one sign in each bin expresses
the sum of absolute moments as a maximum, which is useful for concentration
bounds; it does not change how the estimator is computed.
Let $\calW_\infty$ be the class of measurable maps $[0,1]\to[-1,1]$ and
define the calibration moment
\begin{equation}\textstyle
 M_f(w)\coloneqq\E[(Y-Z)w(Z)],\qquad w\in\calW_\infty.
 \label{eq:weighted-moment}
\end{equation}
For $\calB\coloneqq\{B_\ell\}_{\ell\in[M]}$, let
$r_\ell\coloneqq\E[(Y-Z)\one(Z\in B_\ell)]$,
$w_s(z)\coloneqq\sum_{\ell=1}^{M}s_\ell\one(z\in B_\ell)$, and
$\calW_\calB\coloneqq\{w_s:s\in[-1,1]^M\}$.
\begin{lemma}[Weighted calibration representations]
\label{lem:weighted-representation}
For the fixed predictor $f$,
\begin{align}
 \ECE(f)&=\sup\nolimits_{w\in\calW_\infty}M_f(w)
         =\sup\nolimits_{w\in\calW_\infty}|M_f(w)|,\label{eq:weighted-ece}\\
 \ECE_\calB(f)&=\sup\nolimits_{w\in\calW_\calB}M_f(w)
         =\max\nolimits_{s\in\{-1,1\}^M}\sum\nolimits_{\ell=1}^{M}s_\ell r_\ell.
 \label{eq:binned-signs}
\end{align}
\end{lemma}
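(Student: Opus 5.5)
The plan is to reduce both identities to conditioning on $Z$, and then to maximize a linear functional over a box. Write $r(Z)=m(Z)-Z=\E[Y-Z\mid Z]$, which is the same residual function used in Theorem~\ref{v03:bound}. For any $w\in\calW_\infty$, since $w(Z)$ is $Z$-measurable and bounded, iterated expectation gives $M_f(w)=\E[r(Z)w(Z)]$. Because $|w|\le1$, we get $|M_f(w)|\le\E[|r(Z)|]=\ECE(f)$. This proves ``$\le$'' for both suprema in Eq.~\eqref{eq:weighted-ece}, and trivially $\sup M_f\le\sup|M_f|$.

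For the reverse inequality in Eq.~\eqref{eq:weighted-ece}, I will exhibit the maximizer $w^\star(z)\coloneqq\operatorname{sign}(r(z))$, with the convention $\operatorname{sign}(0)=1$. This function is measurable because $r$ is a measurable version and takes values in $\{-1,1\}\subset[-1,1]$. Then $M_f(w^\star)=\E[|r(Z)|]=\ECE(f)$. So the supremum is attained, and the chain $\ECE(f)\le\sup M_f\le\sup|M_f|\le\ECE(f)$ closes.

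For Eq.~\eqref{eq:binned-signs}, linearity gives $M_f(w_s)=\sum_\ell s_\ell\E[(Y-Z)\one(Z\in B_\ell)]=\sum_\ell s_\ell r_\ell$. This is because the bins partition $[0,1]$ and the indicators are bounded, so the finite sum passes through the expectation. The supremum over $s\in[-1,1]^M$ of a linear function of $s$ separates across coordinates. Each coordinate is maximized at $s_\ell=\operatorname{sign}(r_\ell)\in\{-1,1\}$, giving $\sum_\ell|r_\ell|=\ECE_\calB(f)$ as defined in Eq.~\eqref{v03:binned}. This value is also the maximum over the vertices $\{-1,1\}^M$, since that set is contained in the box and contains the coordinatewise maximizer.

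I do not expect a substantive obstacle. The only points needing care are measurability of the sign selector, which is handled by using the measurable version of $m$ fixed in Appendix~\ref{v03:notation}, and the convention at $r=0$ or $r_\ell=0$, where either sign attains the value.
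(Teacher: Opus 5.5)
Your proposal is correct and follows essentially the same argument as the paper: condition on $Z$ to write $M_f(w)=\E[r(Z)w(Z)]$, bound it by $\E|r(Z)|$ with equality at the measurable sign of $r$, and for bins choose each $s_\ell$ to agree in sign with $r_\ell$. The only cosmetic difference is that you relate the two suprema in Eq.~\eqref{eq:weighted-ece} by the sandwich $\ECE(f)\le\sup M_f\le\sup|M_f|\le\ECE(f)$, whereas the paper appeals to the sign-symmetry of $\calW_\infty$.
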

\begin{proof}
Since $w(Z)$ is measurable with respect to $Z$,
$M_f(w)=\E[(m(Z)-Z)w(Z)]$. Its absolute value is at most
$\E[|m(Z)-Z|]$. Equality is attained by the measurable sign of $m(z)-z$,
with any value in $[-1,1]$ at zero. The class is symmetric under sign
change, which gives both suprema. For a binwise-constant $w_s$,
$M_f(w_s)=\sum_{\ell=1}^{M}s_\ell r_\ell$.
Choosing each sign to agree with $r_\ell$ gives
$\sum_{\ell=1}^{M}|r_\ell|$, including zero-moment bins.
\end{proof}

\begin{lemma}[Corrected moments and unbiased empirical moments]
\label{thm:corrected-moment}
Suppose Assumption~\ref{v03:marginal} holds and $\delta_j\ne0$ for
$j\in[K]$. For $U_j\coloneqq(A_j-a_j)/\delta_j$ and
$\psi_j\coloneqq U_j-Z$, and every fixed $\alpha\in\Delta_K$,
\begin{align}
 \E[U_j\mid Y,Z]&=Y,\qquad \E[U_\alpha\mid Y,Z]=Y,\label{eq:correction-conditional}\\
 \E[\psi_j\mid Z]&=\E[\psi_\alpha\mid Z]=m(Z)-Z,\label{eq:residual-conditional}\\
 \E[\psi_jw(Z)]&=\E[\psi_\alpha w(Z)]=M_f(w),
 \quad w\in\calW_\infty.\label{eq:corrected-moment}
\end{align}
For an independent evaluation sample, the empirical corrected moment is
unbiased, conditional on any independent data determining $\alpha,w$.
\end{lemma}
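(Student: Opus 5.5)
The plan is to first establish the conditional identity \eqref{eq:correction-conditional} pointwise in $(Y,Z)$, then derive the other two displays by the tower property, and finally deduce unbiasedness by conditioning on the auxiliary data.

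\textbf{Step 1: the conditional identity.} By Assumption~\ref{v03:marginal}, $\E[A_j\mid Y,Z]=a_j+\delta_jY$. Since $a_j,\delta_j$ are constants and $\delta_j\ne0$, linearity of conditional expectation gives
\[
\E[U_j\mid Y,Z]=\bigl(\E[A_j\mid Y,Z]-a_j\bigr)/\delta_j=Y .
\]
For fixed $\alpha\in\Delta_K$, linearity again gives $\E[U_\alpha\mid Y,Z]=\sum_j\alpha_jY=Y$, because $\sum_j\alpha_j=1$.

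\textbf{Step 2: the residual and moment identities.} Since $Z$ is $\sigma(Y,Z)$-measurable, Step~1 gives $\E[\psi_\alpha\mid Y,Z]=Y-Z$. The tower property then yields $\E[\psi_\alpha\mid Z]=\E[Y\mid Z]-Z=m(Z)-Z$, which is \eqref{eq:residual-conditional}; the case of $\psi_j$ is the choice $\alpha=e_j$.

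For \eqref{eq:corrected-moment}, take $w\in\calW_\infty$. The factor $w(Z)$ is bounded and $Z$-measurable, and $\psi_\alpha$ is bounded because the $A_j$ are binary. Hence
\[
\E[\psi_\alpha w(Z)]=\E\bigl[w(Z)\,\E[\psi_\alpha\mid Z]\bigr]=\E[(m(Z)-Z)w(Z)]=M_f(w),
\]
where the last equality is the rewriting of $M_f$ used in the proof of Lemma~\ref{lem:weighted-representation}.

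\textbf{Step 3: unbiasedness of the empirical moment.} Let the empirical corrected moment be $n^{-1}\sum_i\psi_{\alpha,i}w(Z_i)$ on an evaluation sample independent of the data $\calD'$ that determines $\alpha$ and $w$. Condition on $\calD'$. Independence means the conditional law of each evaluation observation is still $P_{Y,Z,A_{1:K}}$, while $\alpha$ and $w$ become fixed, so Step~2 applies to each term. Averaging gives conditional expectation $M_f(w)$.

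\textbf{Main obstacle.} This last step is the only delicate one, and the difficulty is measurability. One needs $(\alpha,w)$ jointly measurable in $\calD'$ and $z$, so that a regular conditional distribution, or simply Fubini's theorem on the product measure, justifies freezing them. The paper's notation appendix assumes exactly this joint measurability, so I would invoke Fubini on the product space of $\calD'$ and the evaluation sample. Integrability is automatic, since for fixed $\alpha$ every term is bounded by $\max_j\bigl(1+1/|\delta_j|\bigr)+1$.
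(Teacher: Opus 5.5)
Your proof is correct and follows the paper's route. The channel assumption plus linearity gives $\E[U_\alpha\mid Y,Z]=Y$, the tower property yields the residual and moment identities, and unbiasedness follows by holding the independently determined $\alpha,w$ fixed. Your Fubini remark only makes explicit the joint-measurability convention stated in the paper's notation appendix.
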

\begin{proof}[Proof of Lemmas~\ref{lem:corrected-main} and~\ref{thm:corrected-moment}]
Eq.~\eqref{eq:marginal-channel} gives
$\E[(A_j-a_j)/\delta_j\mid Y,Z]=Y$.
Linearity and $\sum_{j=1}^{K}\alpha_j=1$ give the aggregate identity.
Conditioning again on $Z$ gives
$\E[U_j\mid Z]=\E[U_\alpha\mid Z]=m(Z)$. Subtracting $Z$ proves
Eq.~\eqref{eq:residual-conditional}. Substitution into the ECE definition
in Eq.~\eqref{v03:population} proves Eq.~\eqref{v03:identity}.
Multiplication by bounded $w(Z)$ and expectation yield
Eq.~\eqref{eq:corrected-moment}. All variables are integrable because the
finite set of inverted gaps is nonzero. Finally, expectation of an
empirical average equals its population expectation when the function is
fixed independently of the sample. No joint annotator independence is used.
\end{proof}

\subsection{Independent-sample ECE bound}
\label{v03:full-ece}
Write $\wh\psi(O)=\wh U(A_{1:K})-Z$ and define
\begin{equation}\textstyle
 \wh r_\ell\coloneqq\E_{\calD_\ev}[\wh\psi\one(Z\in B_\ell)],
 \qquad \wh\ECE^{\nc}_\calB(f,\calD_\ev)=\sum\nolimits_{\ell=1}^{M}|\wh r_\ell|.
 \label{eq:direct-ece-estimator}
\end{equation}
The estimates, weights, and bins are measurable functions of
$\calD_\ch$, which may include independent pilot information.
They are independent of $\calD_\ev$.

\begin{lemma}[Conditional-expectation error and bounds]
\label{lem:mean-error}
Under Assumption~\ref{v03:marginal}, assume
$0\le\wh a_j<\wh b_j\le1$ for $j\in[K]$ and
$\wh\alpha\in\Delta_K$.
For an independent example $O\sim P_O$, with $\calD_\ch$ held fixed
as in Appendix~\ref{v03:notation},
\begin{align}
 b_\ch(z)&\coloneqq \E[\wh U\mid Z=z,\calD_\ch]-m(z)\notag\\
 &=\sum\nolimits_{j=1}^{K}\wh\alpha_j
 \frac{(1-m(z))(a_j-\wh a_j)+m(z)(b_j-\wh b_j)}{\wh\delta_j},
 \label{eq:conditional-channel-bias}\\
 |b_\ch(Z)|&\le\dch,\qquad |\wh U|\le\Bn,
 \qquad |\wh\psi|\le\Bn.
 \label{eq:estimated-envelope}
\end{align}
These bounds also hold for zero true gaps. In that case they need not be
small, and they do not imply that the channel can be identified.
\end{lemma}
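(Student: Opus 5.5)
The plan is to compute the conditional mean of a single fitted corrected label directly from the channel identity~\eqref{eq:marginal-channel}. Then I bound each piece using only the constraints $0\le\wh a_j<\wh b_j\le1$, $\wh\alpha\in\Delta_K$, and $A_j\in\{0,1\}$.

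\emph{Step 1 (conditional mean).} Hold $\calD_\ch$ fixed; the new example $O$ is independent of it. Fix $j$. Since $\wh U_j=(A_j-\wh a_j)/\wh\delta_j$ is affine in $A_j$ with $\calD_\ch$-measurable coefficients, Eq.~\eqref{eq:affine-response} gives
\[
\E[\wh U_j\mid Z,\calD_\ch]=\frac{a_j+\delta_jm(Z)-\wh a_j}{\wh\delta_j}.
\]
Subtracting $m(Z)=m(Z)\wh\delta_j/\wh\delta_j$ and writing $a_j+\delta_jm=(1-m)a_j+mb_j$ and $\wh a_j+\wh\delta_jm=(1-m)\wh a_j+m\wh b_j$, the numerator becomes $(1-m)(a_j-\wh a_j)+m(b_j-\wh b_j)$. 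Averaging with weights $\wh\alpha_j$, and using $\sum_j\wh\alpha_j=1$ to reassemble $m(Z)$, yields Eq.~\eqref{eq:conditional-channel-bias}. This step uses only Assumption~\ref{v03:marginal} and never divides by $\delta_j$, so it is valid even when $\delta_j=0$.

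\emph{Step 2 (bias bound).} Since $m(z)\in[0,1]$, each numerator is a convex combination of $a_j-\wh a_j$ and $b_j-\wh b_j$. Its absolute value is therefore at most $\max(|\wh a_j-a_j|,|\wh b_j-b_j|)$. Dividing by $\wh\delta_j>0$, weighting by $\wh\alpha_j\ge0$, and applying the triangle inequality gives $|b_\ch(Z)|\le\dch$.

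\emph{Step 3 (envelopes).} Because $A_j\in\{0,1\}$ and $0\le\wh a_j<\wh b_j\le1$, the numerator $A_j-\wh a_j$ lies in $[-\wh a_j,1-\wh a_j]$. Both endpoints have absolute value at most $1$; in fact $1-\wh a_j$ can exceed $\wh\delta_j$, so no tighter bound holds in general. Thus $|\wh U_j|\le1/\wh\delta_j$, and the convex combination gives $|\wh U|\le\Bn$.

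For $\wh\psi=\wh U-Z$, I would try a sharper argument. The inequality $|\wh\psi|\le\Bn$ does not follow from the triangle inequality alone, which gives only $\Bn+1$. Instead, note that $\wh U_j-Z$ lies in $[-\wh a_j/\wh\delta_j-Z,\,(1-\wh a_j)/\wh\delta_j-Z]$ with $Z\in[0,1]$. The upper endpoint is at most $(1-\wh a_j)/\wh\delta_j\le1/\wh\delta_j$. The lower endpoint is at least $-\wh a_j/\wh\delta_j-1$, and this is $\ge-1/\wh\delta_j$ because $\wh a_j+\wh\delta_j=\wh b_j\le1$. Hence each $|\wh\psi_j|\le1/\wh\delta_j$, and convexity gives $|\wh\psi|\le\Bn$. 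This endpoint check is the only delicate point; everything else is algebra.

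The closing remark about zero true gaps follows from Step 1, since no inverse of $\delta_j$ appears anywhere in the argument.
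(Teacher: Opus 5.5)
Your proof is correct and follows essentially the same route as the paper. You use the same conditional-mean computation from Eq.~\eqref{eq:affine-response}, the same convex-combination bound giving $\dch$, and the same $|A_j-\wh a_j|\le1$ label envelope. For the residual, your endpoint check is the paper's argument in expanded form: the paper notes that $\wh a_j+\wh\delta_jZ=(1-Z)\wh a_j+Z\wh b_j\in[0,1]$, hence $|A_j-\wh a_j-\wh\delta_jZ|\le1$.
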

\begin{proof}
Independence of the example and $\calD_\ch$
allows the estimates to be held fixed. By
Eq.~\eqref{eq:affine-response}, the conditional expectation of the $j$th estimated
corrected label is $(a_j+\delta_j m(z)-\wh a_j)/\wh\delta_j$.
Subtracting $m(z)$ and using
$\delta_j-\wh\delta_j=(b_j-\wh b_j)-(a_j-\wh a_j)$ proves
Eq.~\eqref{eq:conditional-channel-bias}.
Its numerator is a convex combination of two endpoint errors, so its
absolute value is bounded by their maximum. Summing gives $\dch$.
The label bound follows from $|A_j-\wh a_j|\le1$.
For the residual, note that
$\wh a_j+\wh\delta_jZ=(1-Z)\wh a_j+Z\wh b_j\in[0,1]$.
Thus
$|A_j-\wh a_j-\wh\delta_jZ|\le1$; division and convex aggregation
prove the last bound.
\end{proof}

\begin{theorem}[Finite-sample error and conditional bias]
\label{thm:arbitrary-rho-bound}
Suppose Assumption~\ref{v03:marginal} holds. Let $\calD_\ch$ and
$\calD_\ev$ be independent i.i.d. samples from $P_O$, with the predictor
fixed independently of both. Let the estimates, weights and partition
satisfy Definition~\ref{def:corrected-binning-ece}, and set
$p_\ell\coloneqq P(Z\in B_\ell\mid\calD_\ch)$.
No smoothness is needed for the first two assertions.
Conditional on $\calD_\ch$,
\begin{equation}
 -\dch\le
 \E[\wh\ECE^{\nc}_\calB(f,\calD_\ev)\mid\calD_\ch]-\ECE_\calB(f)
 \le \Bn\sqrt{M/n_\ev}+\dch.
 \label{eq:bias-general}
\end{equation}
For every $\eta\in(0,1)$, with conditional probability at least $1-\eta$,
\begin{equation}
 \abs{\wh\ECE^{\nc}_\calB(f,\calD_\ev)-\ECE_\calB(f)}
 \le\Bn\sqrt{\frac{2(M\log2+\log(2/\eta))}{n_\ev}}+\dch.
 \label{eq:full-ece-bound}
\end{equation}
If $r(z)\coloneqq m(z)-z$ has an $L_r$-Lipschitz version on the support of
$P_Z$, and $h_\calB\coloneqq\max_{\ell\in[M]}\operatorname{diam}(B_\ell)$, then
\begin{equation}
 0\le\ECE(f)-\ECE_\calB(f)\le L_rh_\calB.
 \label{eq:bin-approximation}
\end{equation}
Combining Eq.~\eqref{eq:full-ece-bound} and Eq.~\eqref{eq:bin-approximation}
proves Theorem~\ref{v03:bound} for the population target.
\end{theorem}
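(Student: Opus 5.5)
Conditional on $\calD_\ch$, the estimates $\wh a_j,\wh b_j$, the weights $\wh\alpha$ and the partition $\calB$ are fixed, and $\calD_\ev$ remains an i.i.d. sample from $P_O$. All three assertions are therefore statements about a fixed bounded function $\wh\psi(O)$. The plan is to combine the sign representation of Lemma~\ref{lem:weighted-representation} with the envelope and bias bounds of Lemma~\ref{lem:mean-error}.

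\emph{Step 1 (population corrected binned error).} Set $\wt r_\ell\coloneqq\E[\wh\psi\one(Z\in B_\ell)\mid\calD_\ch]$. Conditioning on $Z$ gives $\E[\wh\psi\mid Z,\calD_\ch]=m(Z)-Z+b_\ch(Z)$. Hence $\wt r_\ell=r_\ell+\E[b_\ch(Z)\one(Z\in B_\ell)]$. Summing the bias over bins and using $|b_\ch|\le\dch$ yields
\[
 \Bigl|\textstyle\sum_\ell|\wt r_\ell|-\ECE_\calB(f)\Bigr|\le\textstyle\sum_\ell p_\ell\dch=\dch .
\]

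\emph{Step 2 (expectation bound).} The map $\wh r\mapsto\sum_\ell|\wh r_\ell|$ is convex, so Jensen's inequality gives $\E[\sum_\ell|\wh r_\ell|]\ge\sum_\ell|\wt r_\ell|\ge\ECE_\calB(f)-\dch$. This is the lower half of \eqref{eq:bias-general}. For the upper half, write
\[
 \E\textstyle\sum_\ell|\wh r_\ell-\wt r_\ell|\le\sum_\ell\sqrt{\Var(\wh\psi\one(Z\in B_\ell))/n_\ev}\le\sum_\ell\Bn\sqrt{p_\ell/n_\ev}.
\]
Cauchy--Schwarz then gives $\sum_\ell\sqrt{p_\ell}\le\sqrt M$, and combining this with Step 1 completes \eqref{eq:bias-general}.

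\emph{Step 3 (high-probability bound).} This is the main step. By Lemma~\ref{lem:weighted-representation}, $\sum_\ell|\wh r_\ell|=\max_{s\in\{-1,1\}^M}\E_{\calD_\ev}[\wh\psi\,w_s(Z)]$, and likewise for $\wt r$. Therefore
\[
 \Bigl|\textstyle\sum_\ell|\wh r_\ell|-\sum_\ell|\wt r_\ell|\Bigr|\le\max_s\bigl|(\E_{\calD_\ev}-\E)[\wh\psi\,w_s]\bigr| .
\]
For each fixed $s$, the summands $\wh\psi w_s(Z)$ lie in $[-\Bn,\Bn]$. Hoeffding's inequality gives a two-sided tail $2\exp(-n_\ev\epsilon^2/(2\Bn^2))$. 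A union bound over the $2^M$ sign vectors, with $\epsilon=\Bn\sqrt{2(M\log2+\log(2/\eta))/n_\ev}$, yields total failure probability $\eta$. Step 1 then adds $\dch$ and gives \eqref{eq:full-ece-bound}. One subtlety needs care: empty bins in Definition~\ref{def:corrected-binning-ece} produce zero, which is consistent with the sum form \eqref{eq:direct-ece-estimator}, so no special case arises.

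\emph{Step 4 (binning approximation).} The lower bound $\ECE_\calB(f)\le\ECE(f)$ was shown in the appendix via conditional Jensen. For the upper bound, fix a positive-mass bin $B_\ell$ and write $\bar r_\ell$ for the conditional mean of $r(Z)$ on that bin. Since $r$ is $L_r$-Lipschitz on the support, for $z$ in the bin's support $|r(z)-\bar r_\ell|\le L_rh_\calB$, and so $|r(z)|\le|\bar r_\ell|+L_rh_\calB$. Integrating against $P_Z$ on each bin and summing gives $\ECE(f)\le\ECE_\calB(f)+L_rh_\calB$, which is \eqref{eq:bin-approximation}. The triangle inequality with Step 3 then yields Theorem~\ref{v03:bound}.
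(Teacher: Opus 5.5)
Your proposal is correct and follows the paper's proof essentially step for step. It uses the same population perturbation bound via $b_\ch$, Jensen plus a variance and Cauchy--Schwarz argument for the conditional bias, Hoeffding with a union bound over the $2^M$ sign vectors, and a within-bin Lipschitz comparison. The differences are cosmetic: you bound the difference of the two sign-maxima directly rather than $\sum_\ell|\wh r_\ell-\wt r_\ell|$, and your pointwise claim $|r(z)-\bar r_\ell|\le L_rh_\calB$ is exactly what the paper justifies by averaging $|r(z)-r(Z')|$ over an independent conditional copy $Z'$ in the bin.
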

\begin{proof}
Condition throughout on $\calD_\ch$. Set
$q_\ell=\E[\wh\psi\one(Z\in B_\ell)\mid\calD_\ch]$ and
$Q_\calB=\sum_\ell|q_\ell|$. Lemma~\ref{lem:mean-error} gives
\begin{equation}\textstyle
 \sum\nolimits_\ell|q_\ell-r_\ell|
 \le\sum\nolimits_\ell\E[|b_\ch(Z)|\one(Z\in B_\ell)\mid\calD_\ch]\le\dch.
 \label{eq:population-bin-perturbation}
\end{equation}
Thus $|Q_\calB-\ECE_\calB(f)|\le\dch$.
Since $\E[\wh r_\ell]=q_\ell$ and
$\Var(\wh r_\ell)\le\Bn^2p_\ell/n_\ev$, Jensen and Cauchy--Schwarz yield
$0\le\E[\wh\ECE^{\nc}_\calB]-Q_\calB \le\sum_\ell\E|\wh r_\ell-q_\ell| \le\Bn n_\ev^{-1/2}\sum_\ell\sqrt{p_\ell} \le\Bn\sqrt{M/n_\ev},$
proving Eq.~\eqref{eq:bias-general}. For the deviation bound,
\begin{equation}\textstyle
 \sum\nolimits_\ell|\wh r_\ell-q_\ell|
 =\max\nolimits_{s\in\{-1,1\}^M}|(\E_{\calD_\ev}-\E)[\wh\psi w_s(Z)]|.
 \label{eq:empirical-signs}
\end{equation}
Each of the $2^M$ functions is bounded by $\Bn$ in absolute value.
Two-sided Hoeffding and a union bound give tail probability at most
$2^{M+1}\exp(-n_\ev t^2/(2\Bn^2))$.
Setting this equal to $\eta$ and using Eq.~\eqref{eq:population-bin-perturbation}
proves Eq.~\eqref{eq:full-ece-bound}.
Finally, in each positive-mass bin let $\bar r_\ell=\E[r(Z)\mid Z\in B_\ell]$.
An independent conditional copy $Z'$ gives
$|r(z)-\bar r_\ell|\le\E[|r(z)-r(Z')|\mid Z'\in B_\ell]
\le L_r\operatorname{diam}(B_\ell)$ almost surely.
Conditional Jensen and summation with bin masses prove
$0\le\ECE(f)-\ECE_\calB(f)\le L_rh_\calB$ and Theorem~\ref{v03:bound}.
\end{proof}

\begin{lemma}[Known-channel bias]
\label{lem:known-bias}
Suppose Assumption~\ref{v03:marginal} holds and $\delta_j\ne0$ for
$j\in[K]$. Set
\begin{equation}\textstyle
 B_\rho\coloneqq\max\nolimits_{j\in[K]}|\delta_j|^{-1}.
 \label{eq:B-rho}
\end{equation}
Let $\alpha\in\Delta_K$ and $\calB$ be fixed. On an i.i.d. sample
$\calD=\{O_i\}_{i\in[n]}\sim P_O^n$, define the known-channel estimator
$\wh\ECE^{\nc}_{\rho,\calB}(f,\calD) \coloneqq \sum_{\ell=1}^{M}\left|\E_\calD[ (U_\alpha-Z)\one(Z\in B_\ell)]\right|.$
Each corrected bin moment is unbiased, and
\begin{equation}
 0\le\E[\wh\ECE^{\nc}_{\rho,\calB}(f,\calD)]-\ECE_\calB(f)
 \le B_\rho\sqrt{M/n}.
 \label{eq:known-bias}
\end{equation}
If the weights and partition are chosen using independent data, the same
statement holds conditional on those data.
\end{lemma}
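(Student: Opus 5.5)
The proof follows the template of Theorem~\ref{thm:arbitrary-rho-bound}, specialized to known channels: there is no channel-estimation term, and the envelope $\Bn$ is replaced by the deterministic constant $B_\rho$. We first record unbiasedness of each bin moment. Write $r_\ell=\E[(Y-Z)\one(Z\in B_\ell)]$ and $\wh r_\ell^{\rho}\coloneqq\E_\calD[(U_\alpha-Z)\one(Z\in B_\ell)]$. By Lemma~\ref{thm:corrected-moment}, Eq.~\eqref{eq:corrected-moment} with $w=\one(\cdot\in B_\ell)\in\calW_\infty$ gives $\E[(U_\alpha-Z)\one(Z\in B_\ell)]=M_f(w)=r_\ell$. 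Hence $\E[\wh r_\ell^{\rho}]=r_\ell$, since $\alpha$ and $\calB$ are fixed and the sample is i.i.d.

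The lower bound follows from Jensen applied binwise. Since $|\cdot|$ is convex, $\E|\wh r^\rho_\ell|\ge|\E\wh r^\rho_\ell|=|r_\ell|$. Summing over $\ell$ gives $\E[\wh\ECE^{\nc}_{\rho,\calB}]\ge\sum_\ell|r_\ell|=\ECE_\calB(f)$, where the last equality is Eq.~\eqref{v03:binned}.

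For the upper bound, the triangle inequality gives $|\wh r^\rho_\ell|\le|r_\ell|+|\wh r^\rho_\ell-r_\ell|$. Then Jensen, in the form $\E|X|\le\sqrt{\E X^2}$, gives $\E|\wh r^\rho_\ell-r_\ell|\le\sqrt{\Var(\wh r^\rho_\ell)}$. The key envelope is $|U_\alpha-Z|\le B_\rho$, which is the known-channel version of the last bound in Lemma~\ref{lem:mean-error}. To prove it, note that $a_j+\delta_jZ=(1-Z)a_j+Zb_j\in[0,1]$. Therefore $|A_j-a_j-\delta_jZ|\le1$, so $|\psi_j|\le|\delta_j|^{-1}\le B_\rho$, and convex aggregation gives $|\psi_\alpha|\le B_\rho$. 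This argument holds for either sign of $\delta_j$ and needs only $\delta_j\ne0$. Consequently $\Var(\wh r^\rho_\ell)\le n^{-1}\E[\psi_\alpha^2\one(Z\in B_\ell)]\le B_\rho^2p_\ell/n$, where $p_\ell=P(Z\in B_\ell)$. Summing over bins and applying Cauchy--Schwarz, $\sum_\ell\sqrt{p_\ell}\le\sqrt{M\sum_\ell p_\ell}=\sqrt M$, yields the bound $B_\rho\sqrt{M/n}$ in Eq.~\eqref{eq:known-bias}.

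For the conditional version, let the weights and partition be measurable functions of data independent of $\calD$. Conditioning on those data freezes $\alpha$ and $\calB$, and $\calD$ remains i.i.d.\ from $P_O$ under the conditional law, so every step above holds verbatim. The only step needing care is the envelope $|\psi_\alpha|\le B_\rho$ rather than the cruder $|U_\alpha|+|Z|$, and the convex-combination observation above handles it. The remaining steps are routine.
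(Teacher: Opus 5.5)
Your proof is correct and follows the paper's argument. You get unbiasedness of each bin moment from Lemma~\ref{thm:corrected-moment}, derive the envelope $|\psi_\alpha|\le B_\rho$ from $a_j+\delta_jZ\in[0,1]$ (valid for either sign of $\delta_j$), and then apply the same Jensen, variance, and Cauchy--Schwarz steps as the proof of Theorem~\ref{thm:arbitrary-rho-bound} with $q_\ell=r_\ell$; the paper's proof does exactly this, only citing those steps by reference rather than writing them out.
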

\begin{proof}
Lemma~\ref{thm:corrected-moment} gives unbiased bin moments.
Since $a_j+\delta_jZ\in[0,1]$,
$|\psi_j|\le|\delta_j|^{-1}$, including negative nonzero gaps.
Thus $|\psi_\alpha|\le B_\rho$.
The Jensen and variance argument in the preceding proof, now with
$q_\ell=r_\ell$, yields the result.
\end{proof}

\subsubsection{Proof of Corollary~\ref{cor:ece-rate-main}}
\label{app:ece-rates}
\begin{proof}
Use $e_\ch$ from Eq.~\eqref{eq:channel-estimation-error}.
The assumption $e_\ch=O_p(\varepsilon_{n_\ch})$ with
$\varepsilon_{n_\ch}\to0$ implies $e_\ch=o_p(1)$.
For every $j\in[K]$,
$|\wh\delta_j-\delta_j| \le |\wh a_j-a_j|+|\wh b_j-b_j|\le e_\ch.$
Thus, on the event $e_\ch\le\delta_{\min}/2$, whose probability tends to
one, $\wh\delta_j\ge\delta_{\min}/2$ for all $j$, and
$\Bn\le2/\delta_{\min},\quad \dch\le2e_\ch/\delta_{\min}.$
The conditional probability bound in Theorem~\ref{v03:bound}, integrated
over $\calD_\ch$, therefore gives a sampling term of order
$O_p(\sqrt{M/n_\ev})$ and channel error $O_p(\varepsilon_{n_\ch})$.
For equal-width bins, $h_\calB=M^{-1}$, proving Eq.~\eqref{v03:rate}.
The choice $M\asymp n_\ev^{1/3}$ balances binning and sampling.
For the reused sample, the same event and deterministic inequalities
apply with $e_\ch=O_p(\varepsilon_n)$.
Apply Proposition~\ref{cor:reuse-main}; fixed $K$ changes only a constant
inside the logarithm. This gives
$O_p(M^{-1}+\sqrt{M/n}+\varepsilon_n)$ and the stated choice of $M$.
Neither argument treats fitted coefficients as independent of the sample
that fitted them.
\end{proof}
The smoothness-free bound in Theorem~\ref{thm:arbitrary-rho-bound} also
gives $O_p(n_\ev^{-1/2}+\varepsilon_{n_\ch})$ error relative to
$\ECE_\calB(f)$ for a fixed partition. That statement concerns the
auxiliary binned quantity, not $\ECE(f)$.

\begin{remark}[Evaluation bins and channel groups]
\label{rem:bin-growth}
A growing evaluation bin count $M$ need not change the dimension of the
channel estimator: its group count $L$ can remain fixed. The rates above
require the stated channel-error rate; fixed-dimensional channel theory
does not supply that rate when its group count also grows.
\end{remark}

\subsection{Same-sample estimation with fixed bins}
\label{v03:full-same-sample}
A uniform finite-class argument permits the same observations to fit the
channels, choose the weights, and evaluate ECE. It controls the empirical
deviations simultaneously over all admissible fitted corrections.
Population expectations of a fitted function always refer to an independent
example, with its realized coefficients held fixed.

\begin{corollary}[Same-sample ECE estimation]
\label{cor:same-sample-ece}
Let $\calD=\{O_i\}_{i\in[n]}$ be i.i.d. from $P_O$, and let the
predictor and partition $\calB$ be fixed independently of this sample.
Suppose Assumption~\ref{v03:marginal} holds. Let
$0\le\wh a_j<\wh b_j\le1$ for $j\in[K]$ and
$\wh\alpha\in\Delta_K$ be arbitrary measurable functions of $\calD$.
Define $\Bn,\dch$ as in Theorem~\ref{v03:bound} for these estimates.
For every $\eta\in(0,1)$, with probability at least $1-\eta$,
\begin{equation}
 \abs{\wh\ECE^{\nc}_\calB(f,\calD)-\ECE_\calB(f)}
 \le\Bn\sqrt{\frac{2(M\log2+\log(6K/\eta))}{n}}+\dch.
 \label{eq:same-sample-bound}
\end{equation}
No nonzero-true-gap assumption is needed for this inequality.
If $r(z)\coloneqq m(z)-z$ has an $L_r$-Lipschitz version, add $L_rh_\calB$
to obtain a bound relative to $\ECE(f)$.
\end{corollary}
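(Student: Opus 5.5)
The idea is to replace the data-independent sign class of the proof of Theorem~\ref{thm:arbitrary-rho-bound} by a finite class that does not depend on the fitted coefficients. Using the notation of that proof, write $\wh r_\ell=\E_\calD[\wh\psi\one(Z\in B_\ell)]$. Let $q_\ell$ be the same moment under an independent example with the realized $(\wh a,\wh b,\wh\alpha)$ held fixed. The triangle inequality gives
\[
\abs{\wh\ECE^{\nc}_\calB(f,\calD)-\ECE_\calB(f)}\le\sum_\ell|\wh r_\ell-q_\ell|+\sum_\ell|q_\ell-r_\ell|.
\]
The second sum is at most $\dch$ by Lemma~\ref{lem:mean-error} and Eq.~\eqref{eq:population-bin-perturbation}. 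That lemma is a pointwise statement about a fixed coefficient vector, so it applies to the realized fitted coefficients whatever their dependence on $\calD$. The work is therefore to bound the empirical-process term uniformly over all admissible coefficients.

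As in Eq.~\eqref{eq:empirical-signs}, write $\sum_\ell|\wh r_\ell-q_\ell|=\max_{s\in\{-1,1\}^M}|(\E_\calD-\E)[\wh\psi w_s(Z)]|$. Since $\wh\delta_jZ+\wh a_j=(1-Z)\wh a_j+Z\wh b_j$, we have
\[
\wh\psi=\sum_j\frac{\wh\alpha_j}{\wh\delta_j}\,\phi_{\wh a_j,\wh b_j,j},
\qquad
\phi_{c,d,j}(O)\coloneqq A_j-(1-Z)c-Zd .
\]
For fixed $j$ and $s$, the deviation $D_{j,s}(c,d)\coloneqq(\E_\calD-\E)[\phi_{c,d,j}w_s(Z)]$ is affine in $(c,d)\in[0,1]^2$. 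Hence $|D_{j,s}(c,d)|$ is maximized at a vertex of the square. At $(1,1)$ the function is $A_j-1$, and the constant cancels in the centered deviation, so this vertex gives the same deviation as $(0,0)$. Only three data-independent functions per annotator remain:
\[
A_jw_s,\qquad (A_j-1+Z)w_s,\qquad (A_j-Z)w_s .
\]
Each takes values in $[-1,1]$ because $A_j,Z\in[0,1]$. This gives
\[
\max_s|(\E_\calD-\E)[\wh\psi w_s]|\le\sum_j\frac{\wh\alpha_j}{\wh\delta_j}\max_{s}\max_{v\in\{1,2,3\}}|D_{j,s,v}|\le\Bn\,t
\]
on the event that all $3K\cdot2^M$ fixed deviations are at most $t$.

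The probability of that event comes from two-sided Hoeffding for functions of range $2$, which gives $2\exp(-nt^2/2)$ each. A union bound over the $3K\cdot 2^M$ functions gives failure probability at most $6K\,2^M\exp(-nt^2/2)$. Setting this equal to $\eta$ yields $t=\sqrt{2(M\log2+\log(6K/\eta))/n}$, which is exactly Eq.~\eqref{eq:same-sample-bound}. The vertex reduction never divides by a true gap, which is why no nonzero-true-gap assumption is needed. Adding Eq.~\eqref{eq:bin-approximation} gives the bound relative to $\ECE(f)$, and hence Proposition~\ref{cor:reuse-main}.

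The main obstacle is the vertex-reduction step. One has to check that the realized coefficients enter only through the outer factors $\wh\alpha_j/\wh\delta_j\ge0$ and through affine dependence on $(\wh a_j,\wh b_j)\in[0,1]^2$. That structure is what lets the supremum be taken over finitely many fixed functions rather than a data-dependent class. One also has to track the cancellation at $(1,1)$, which gives the constant $3K$ rather than $4K$.
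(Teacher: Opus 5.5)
Your architecture is the paper's: an empirical-process term plus a population term bounded by $\dch$ for the realized coefficients, then a finite class of fixed functions, Hoeffding, and a union bound over $3K2^M$ functions. However, the step that reduces the four vertices of $[0,1]^2$ to three is wrong.

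At $(c,d)=(1,1)$ the function is $(A_j-1)w_s(Z)$. The constant $-1$ multiplies $w_s(Z)$, which is not constant, so it does not cancel:
\[
D_{j,s}(1,1)=D_{j,s}(0,0)-\sum\nolimits_{\ell}s_\ell\bigl(\E_\calD[\one(Z\in B_\ell)]-P(Z\in B_\ell)\bigr).
\]
The correction term is generally nonzero whenever the signs $s_\ell$ are not all equal. Admissible estimates can lie arbitrarily close to $(1,1)$, for example with $\wh a_j$ and $\wh b_j$ both near one, so this vertex cannot be ignored.

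Your three functions only control it through affinity: $D_{j,s}(1,1)=D_{j,s}(1,0)+D_{j,s}(0,1)-D_{j,s}(0,0)$. This gives $|D_{j,s}(1,1)|\le 3t$ and triples the sampling term. Keeping all four vertices is valid, but it yields $\log(8K/\eta)$ rather than the stated $\log(6K/\eta)$. As written, the proposal does not establish Eq.~\eqref{eq:same-sample-bound} with its constant.

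The fix is to use the admissibility constraint $\wh a_j<\wh b_j$, which your square-based argument never uses. With it, $(c,d)$ ranges over the triangle $\{0\le c\le d\le 1\}$, whose vertices $(0,0),(0,1),(1,1)$ give the functions $A_j$, $A_j-Z$, and $A_j-1$. The vertex to drop is $(1,0)$, your $A_j-1+Z$, because it is inadmissible; $(1,1)$ must stay.

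This is exactly the paper's proof. It writes
\[
A_j-a'-(b'-a')Z=(1-b')A_j+(b'-a')(A_j-Z)+a'(A_j-1),
\]
with nonnegative coefficients summing to one; these are the barycentric coordinates of $(a',b')$ in that triangle. The bound $\max_s|(\E_\calD-\E)[\wh\psi w_s]|\le\Bn t_n$, the $3K2^M$ count, and the $\log(6K/\eta)$ constant then follow directly. The rest of your argument is correct as stated: the $\dch$ term for data-dependent coefficients, the Hoeffding constant, and the absence of a true-gap assumption.
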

\begin{proof}
For $j\in[K]$, define three fixed functions
$h_{j,0}(O)\coloneqq A_j$, $h_{j,1}(O)\coloneqq A_j-Z$, and
$h_{j,2}(O)\coloneqq A_j-1$.
For every $s\in\{-1,1\}^M$ and $v\in\{0,1,2\}$,
$h_{j,v}w_s$ takes values in $[-1,1]$.
Hoeffding's inequality and a union bound over the $3K2^M$ functions give
an event of probability at least $1-\eta$ on which
\begin{equation}\textstyle
 \max\nolimits_{j,v,s}|(\E_\calD-\E)[h_{j,v}w_s]|
 \le t_n,\qquad
 t_n\coloneqq\sqrt{2(M\log2+\log(6K/\eta))/n}.
 \label{eq:reuse-finite-class}
\end{equation}
For any $0\le a'<b'\le1$, the exact decomposition
\begin{equation}\textstyle
 A_j-a'-(b'-a')Z
 =(1-b')h_{j,0}+(b'-a')h_{j,1}+a'h_{j,2}
 \label{eq:residual-convex-hull}
\end{equation}
has nonnegative coefficients summing to one.
Consequently, on the same event, simultaneously for all admissible
channel estimates and convex weights,
$\max_s|(\E_\calD-\E)[\wh\psi w_s]| \le\sum_{j=1}^{K}\frac{\wh\alpha_j}{\wh\delta_j}t_n =\Bn t_n.$
The inequality holds for every realized sample on the uniform event, even
when the coefficients are fitted on $\calD$.
For each realized correction, the algebra of
Eq.~\eqref{eq:conditional-channel-bias}, applied to an independent copy,
also gives
$|\E[\wh\psi w_s]-M_f(w_s)|\le\dch$.
Thus $\max_s|\E_\calD[\wh\psi w_s]-M_f(w_s)|\le\Bn t_n+\dch$.
The difference between the two maxima defining the empirical and
population binned errors is bounded by this maximum deviation.
Eq.~\eqref{eq:bin-approximation} gives the population extension and proves
Proposition~\ref{cor:reuse-main}.
\end{proof}
\begin{remark}[Failed fits and data reuse]
\label{rem:failed-fits}
The finite-class event is uniform over all admissible estimates.
If a procedure can fail to return admissible gaps, the probability that it
both succeeds and violates the bound is at most $\eta$.
This is not a conditional coverage claim restricted to successful fits.
Alternatively, a fixed admissible fallback defines the estimator on all
samples, with fitting failures reported separately.
The corollary does not cover a partition or base predictor fitted on
$\calD$.

\end{remark}

\begin{corollary}[Explicit channel-error radius]
\label{cor:explicit-channel-radius}
Under the conditions of Corollary~\ref{cor:same-sample-ece},
suppose $\delta_j\ge\delta_{\min}>0$ for $j\in[K]$ and
\begin{equation}\textstyle
 P\left(\max\nolimits_{j\in[K]}(|\wh a_j-a_j|+|\wh b_j-b_j|)
       \le r_n\right)\ge1-\eta_\ch,
 \qquad r_n\le\delta_{\min\nolimits}/2.
 \label{eq:same-channel-event}
\end{equation}
Then, for $\eta_\ev\in(0,1)$, with probability at least
$1-\eta_\ch-\eta_\ev$,
\begin{equation}
 \abs{\wh\ECE^{\nc}_\calB(f,\calD)-\ECE_\calB(f)}
 \le\frac{2}{\delta_{\min\nolimits}}
 \left(\sqrt{\frac{2(M\log2+\log(6K/\eta_\ev))}{n}}+r_n\right).
 \label{eq:same-sample-radius-bound}
\end{equation}
\end{corollary}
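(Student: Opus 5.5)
The plan is to intersect the uniform event of Corollary~\ref{cor:same-sample-ece} with the channel-accuracy event in Eq.~\eqref{eq:same-channel-event}, and then bound $\Bn$ and $\dch$ deterministically on that intersection.

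First, apply Corollary~\ref{cor:same-sample-ece} with confidence level $\eta_\ev$. Because that corollary's event is uniform over all admissible fitted corrections, it holds with probability at least $1-\eta_\ev$ regardless of how the estimates depend on $\calD$. On this event,
\[
 \abs{\wh\ECE^{\nc}_\calB(f,\calD)-\ECE_\calB(f)}\le\Bn t_n+\dch,
\]
where $t_n\coloneqq\sqrt{2(M\log2+\log(6K/\eta_\ev))/n}$. Let $E_\ch$ be the event in Eq.~\eqref{eq:same-channel-event}, which has probability at least $1-\eta_\ch$. A union bound shows that both events hold with probability at least $1-\eta_\ch-\eta_\ev$.

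Second, on $E_\ch$ I would control the fitted gaps exactly as in the proof of Corollary~\ref{cor:ece-rate-main}. For each $j\in[K]$,
\[
 |\wh\delta_j-\delta_j|\le|\wh a_j-a_j|+|\wh b_j-b_j|\le r_n\le\delta_{\min}/2,
\]
so $\wh\delta_j\ge\delta_j-\delta_{\min}/2\ge\delta_{\min}/2>0$.

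Third, these lower bounds give the two deterministic estimates. Since $\wh\alpha\in\Delta_K$,
\[
 \Bn=\sum_{j=1}^K\wh\alpha_j/\wh\delta_j\le 2/\delta_{\min}.
\]
For the channel term, $\max(|\wh a_j-a_j|,|\wh b_j-b_j|)\le|\wh a_j-a_j|+|\wh b_j-b_j|\le r_n$, so
\[
 \dch\le\sum_{j=1}^K\wh\alpha_j r_n/\wh\delta_j\le 2r_n/\delta_{\min}.
\]
Substituting both bounds into the inequality from the first step yields Eq.~\eqref{eq:same-sample-radius-bound}.

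The only delicate point is legitimacy rather than calculation. The estimates are fitted on the same sample $\calD$ and are random, so we need the Hoeffding step to be valid for them. This is exactly what the uniform finite-class event of Corollary~\ref{cor:same-sample-ece} supplies: it holds simultaneously for all admissible $(\wh a,\wh b,\wh\alpha)$. The union bound therefore requires no independence between $E_\ch$ and the evaluation event.

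We also need admissibility, $0\le\wh a_j<\wh b_j\le1$. This is assumed in the corollary's hypotheses, and on $E_\ch$ it is in any case consistent with the positive gaps derived above.
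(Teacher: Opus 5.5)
Your proposal is correct and follows the paper's proof. On the channel event you obtain $\wh\delta_j\ge\delta_{\min}/2$, hence $\Bn\le 2/\delta_{\min}$ and $\dch\le 2r_n/\delta_{\min}$; you substitute these into Corollary~\ref{cor:same-sample-ece} at level $\eta_\ev$ and combine by a union bound, and you rightly note that the uniform finite-class event is what makes this valid without independence between the two events.
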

\begin{proof}
On the channel event, $\wh\delta_j\ge\delta_{\min}/2$,
$\Bn\le2/\delta_{\min}$, and $\dch\le2r_n/\delta_{\min}$.
Substitute these bounds into Corollary~\ref{cor:same-sample-ece}
and take a union bound with the channel event.
\end{proof}
The bound is linear in the inverse gap multiplying the channel radius.
For fixed $K$, consistent root-$n$ channel estimates, a Lipschitz residual,
and equal-width bins with $M\asymp n^{1/3}$, it yields
\begin{equation}
 \abs{\wh\ECE^{\nc}_\calB(f,\calD)-\ECE(f)}=O_p(n^{-1/3}).
 \label{eq:same-sample-rate}
\end{equation}
This rate requires smoothness and does not establish distribution-free or minimax optimality.

\subsection{Cross-fitting with one common evaluation partition}
\label{app:cross-fitting}
Let $\calD=\{O_i\}_{i\in[n]}$ consist of i.i.d. examples from $P_O$.
Let the index sets $I_1,\ldots,I_F$ partition $[n]$ independently of the
data, with $F\ge2$ and $n_k\coloneqq|I_k|>0$ for $k\in[F]$.
Fix a common partition $\calB$ independently of the full sample.
For each fold $k$, fit channels and weights on
$\calD_{-k}\coloneqq\{O_i:i\in[n]\setminus I_k\}$, obtaining
$0\le\wh a_j^{(-k)}<\wh b_j^{(-k)}\le1$ and
$\wh\alpha^{(-k)}\in\Delta_K$.
For each $k\in[F]$, define
$\wh\psi^{(-k)}(O) \coloneqq\sum_{j=1}^{K}\wh\alpha_j^{(-k)} \frac{A_j-\wh a_j^{(-k)}}{\wh b_j^{(-k)}-\wh a_j^{(-k)}}-Z.$
The held-out bin moments and their aggregate are
\begin{equation}\textstyle
 \wh r_{k,\ell}\coloneqq\frac1{n_k}\sum\nolimits_{i\in I_k}
       \wh\psi^{(-k)}(O_i)\one(Z_i\in B_\ell),\quad
 \wh r_\ell^{\mathrm{cf}}\coloneqq\sum\nolimits_{k=1}^{F}\frac{n_k}{n}\wh r_{k,\ell},
 \quad \wh\ECE_\calB^{\mathrm{cf}}(f,\calD)
 \coloneqq\sum\nolimits_{\ell=1}^{M}|\wh r_\ell^{\mathrm{cf}}|.
 \label{eq:crossfit-estimator}
\end{equation}
The absolute values are taken after aggregating the moments, not before.
For each $k\in[F]$, let $\Bn^{(-k)}$ and $d_\ch^{(-k)}$ be the quantities
$\Bn$ and $\dch$ in Eq.~\eqref{v03:channel-error}, computed using the
channels and weights fitted without fold $k$.
\begin{corollary}[Cross-fitted calibration error]
\label{cor:crossfit}
Under Assumption~\ref{v03:marginal} and the preceding conditions,
for $\eta\in(0,1)$, with probability at least $1-\eta$,
\begin{equation}
 \abs{\wh\ECE_\calB^{\mathrm{cf}}(f,\calD)-\ECE_\calB(f)}
 \le\sum\nolimits_{k=1}^{F}\frac{n_k}{n}
 \left(\Bn^{(-k)}\sqrt{\frac{2(M\log2+\log(2F/\eta))}{n_k}}+d_\ch^{(-k)}\right).
 \label{eq:crossfit-bound}
\end{equation}
An $L_rh_\calB$ term gives the population extension under the same
smoothness condition as Theorem~\ref{v03:bound}.
\end{corollary}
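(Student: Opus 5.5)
The proof conditions fold by fold on the training complement and then reuses the argument of Theorem~\ref{thm:arbitrary-rho-bound}. For each fold $k$, the fitted quantities $\wh a_j^{(-k)},\wh b_j^{(-k)},\wh\alpha^{(-k)}$ are measurable functions of $\calD_{-k}$. Since the fold split and $\calB$ are fixed independently of the data, the held-out observations $\{O_i:i\in I_k\}$ are i.i.d.\ from $P_O$ and independent of $\calD_{-k}$. Conditional on $\calD_{-k}$, fold $k$ therefore plays the role of $\calD_\ev$, with sample size $n_k$.

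\textbf{Step 1 (triangle inequality).} Set $q_{k,\ell}\coloneqq\E[\wh\psi^{(-k)}\one(Z\in B_\ell)\mid\calD_{-k}]$, with expectation over a fresh example. Write
\[
\wh r_\ell^{\mathrm{cf}}-r_\ell=\sum_k\tfrac{n_k}{n}\bigl[(\wh r_{k,\ell}-q_{k,\ell})+(q_{k,\ell}-r_\ell)\bigr].
\]
Since $\bigl|\sum_\ell|\wh r^{\mathrm{cf}}_\ell|-\sum_\ell|r_\ell|\bigr|\le\sum_\ell|\wh r_\ell^{\mathrm{cf}}-r_\ell|$, this gives
\[
\abs{\wh\ECE_\calB^{\mathrm{cf}}-\ECE_\calB(f)}\le\sum_k\tfrac{n_k}{n}\Bigl(\sum_\ell|\wh r_{k,\ell}-q_{k,\ell}|+\sum_\ell|q_{k,\ell}-r_\ell|\Bigr).
\]
Taking absolute values only after aggregating is compatible with this decomposition. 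The inequality is merely conservative, because it bounds each fold's deviation separately.

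\textbf{Step 2 (bias term).} By Lemma~\ref{lem:mean-error} applied with the fold-$k$ fit, the second inner sum is at most $\E[|b_\ch^{(-k)}(Z)|]\le d_\ch^{(-k)}$. This is exactly Eq.~\eqref{eq:population-bin-perturbation}.

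\textbf{Step 3 (concentration per fold).} Write the first inner sum as $\max_{s\in\{-1,1\}^M}|(\E_{I_k}-\E)[\wh\psi^{(-k)}w_s]|$, as in Eq.~\eqref{eq:empirical-signs}. Conditional on $\calD_{-k}$, each of these $2^M$ functions is fixed and bounded by $\Bn^{(-k)}$, by Lemma~\ref{lem:mean-error}. Two-sided Hoeffding's inequality plus a union bound, at level $\eta/F$, gives a conditional failure probability of at most $\eta/F$ for
\[
\Bn^{(-k)}\sqrt{2(M\log2+\log(2F/\eta))/n_k}.
\]
Integrating over $\calD_{-k}$ turns this into an unconditional probability. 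A union bound over the $F$ folds then makes all events hold simultaneously with probability at least $1-\eta$, and substituting into Step 1 yields Eq.~\eqref{eq:crossfit-bound}.

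\textbf{Main obstacle and remaining point.} The main point requiring care is that the folds are dependent: $\calD_{-k}$ contains the other folds. This is why the proof uses only per-fold conditional statements combined by a union bound, and never joint independence across folds. The population extension follows by adding $L_rh_\calB$ via Eq.~\eqref{eq:bin-approximation}.
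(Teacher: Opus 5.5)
Your proposal is correct and follows the paper's proof: condition on $\calD_{-k}$ so that fold $k$ acts as an independent evaluation sample, and apply the sign-vector Hoeffding bound over the $2^M$ functions at level $\eta/F$. Then integrate out the conditioning, take a union bound over folds (which needs no independence between folds), and finish with the triangle and reverse-triangle inequalities on the aggregated bin moments. The only difference is presentational: you write out the intermediate population moments $q_{k,\ell}$ explicitly, whereas the paper cites the proof of Theorem~\ref{thm:arbitrary-rho-bound} for the per-fold bound on $\sum_\ell|\wh r_{k,\ell}-r_\ell|$.
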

\begin{proof}
Hold the independently chosen fold assignment and common partition fixed.
For a fixed $k$, condition only on $\calD_{-k}$.
The held-out fold is an independent i.i.d. sample.
The proof of Theorem~\ref{thm:arbitrary-rho-bound}, before taking the
difference of scalar errors, yields
$\sum_{\ell=1}^{M}|\wh r_{k,\ell}-r_\ell| \le \Bn^{(-k)}\sqrt{2(M\log2+\log(2F/\eta))/n_k}+d_\ch^{(-k)}$
with conditional probability at least $1-\eta/F$.
Integrating this conditional probability gives the same unconditional
failure bound. A union bound over the $F$ folds is valid without
independence of their fitted estimators. On the simultaneous event,
$\sum_{\ell=1}^{M}|\wh r_\ell^{\mathrm{cf}}-r_\ell| \le\sum_{k=1}^{F}\frac{n_k}{n} \sum_{\ell=1}^{M}|\wh r_{k,\ell}-r_\ell|,$
which proves the claim by a reverse triangle inequality.
\end{proof}
\begin{remark}[Common bins and the scope of cross-fitting]
\label{rem:crossfit-bins}
The common partition ensures that every fold estimates the same $r_\ell$.
Arbitrary fold-specific partitions do not satisfy that requirement.
Nor does this result establish calibration of a single map formed by
combining independently fitted fold-specific recalibration maps.
The fixed-bin recalibration theorem retains its stated sample splitting.

\end{remark}

\subsection{\NCECE{} and histogram plug-in estimators}
\label{app:direct-plugin}
Let $\calD=\{O_i\}_{i\in[n]}$ be the evaluation sample, and set
$n_\ell\coloneqq\sum_{i=1}^{n}\one(Z_i\in B_\ell)$ for $\ell\in[M]$.
For a nonempty bin, let
$\bar U_\ell\coloneqq n_\ell^{-1}\sum_{i:Z_i\in B_\ell}\wh U_i$ and
$\bar Z_\ell\coloneqq n_\ell^{-1}\sum_{i:Z_i\in B_\ell}Z_i$.
\begin{lemma}[Equivalent \NCECE{} formulas]
\label{lem:direct-plugin}
The following equality holds:
\begin{equation}\textstyle
 \sum\nolimits_{\ell:n_\ell>0}\frac{n_\ell}{n}
       |\bar U_\ell-\bar Z_\ell|
 =\sum\nolimits_{\ell=1}^{M}\left|\frac1n\sum\nolimits_{i=1}^{n}
       (\wh U_i-Z_i)\one(Z_i\in B_\ell)\right|
 \label{eq:direct-plugin-equality}
\end{equation}

Empty bins contribute zero to both sides.
\end{lemma}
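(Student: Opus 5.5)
The argument is a bin-by-bin algebraic identity; no probabilistic input is needed. Fix $\ell\in[M]$ and split on whether the bin is empty.

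If $n_\ell=0$, the indicator $\one(Z_i\in B_\ell)$ vanishes for every $i\in[n]$. The $\ell$th summand on the right of Eq.~\eqref{eq:direct-plugin-equality} is then $|0|=0$. On the left such bins are excluded from the sum, or equivalently contribute zero under the convention of Definition~\ref{def:corrected-binning-ece} that empty-bin means are set to zero and weighted by $n_\ell/n=0$. This gives the final sentence of the statement.

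If $n_\ell>0$, I would rewrite the inner sum on the right as a sum over $I_\ell=\{i:Z_i\in B_\ell\}$:
\[
 \frac1n\sum\nolimits_{i=1}^n(\wh U_i-Z_i)\one(Z_i\in B_\ell)
 =\frac{n_\ell}{n}\cdot\frac1{n_\ell}\sum\nolimits_{i\in I_\ell}(\wh U_i-Z_i)
 =\frac{n_\ell}{n}\bigl(\bar U_\ell-\bar Z_\ell\bigr),
\]
using linearity of the bin average. Because $n_\ell/n>0$, taking absolute values pulls the factor out, so the $\ell$th terms on the two sides agree.

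Summing over $\ell$ then gives Eq.~\eqref{eq:direct-plugin-equality}. The only point needing care is the bookkeeping for empty bins, which I would make explicit so the identity with Eq.~\eqref{eq:direct-ece-estimator} is unambiguous. In particular, I would note that nothing here depends on how $\wh U_i$ was fitted, so the identity also holds under sample reuse and cross-fitting.
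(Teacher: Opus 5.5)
Your proposal is correct and follows essentially the same route as the paper. In each nonempty bin you move the factor $n_\ell/n$ inside the absolute value and substitute the definitions of $\bar U_\ell$ and $\bar Z_\ell$, and you note that empty bins contribute zero to both sides.
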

\begin{proof}
For each nonempty bin, distribute $n_\ell/n$ inside the absolute value
and substitute the definitions of its two means. Empty bins have zero
sample mass, so both expressions contribute zero there.
\end{proof}
For constant weights and channels, averaging annotations within a bin and
then applying their affine corrections gives exactly $\bar U_\ell$.
Thus a histogram plug-in computation is not intrinsically a different
estimator. Projection, a different fitting sample, different bins, or a
different estimator of the conditional expectation can change the result,
but those changes must be
specified in a comparison. No uniform ranking of \NCECE{} and plug-in
estimators follows from the moment analysis.

\subsection{Aggregation weights and residual variance}
\label{app:weights}
With the predictor, known channels, and constant weights fixed,
$V_{\mathrm{res}}(\alpha)=\Var(\psi_\alpha)$ is the variance over a fresh example.
Let Assumptions~\ref{v03:marginal} and~\ref{v03:product} hold and suppose the channels used in
correction have nonzero gaps. Define $U_j\coloneqq(A_j-a_j)/\delta_j$,
$\mu\coloneqq P(Y=1)$, and
\begin{align}
 v_j(z)&\coloneqq\frac{(1-m(z))a_j(1-a_j)+m(z)b_j(1-b_j)}{\delta_j^2},
 \label{eq:local-variance}\\
 \bar v_j&\coloneqq\E[v_j(Z)]=\E[(U_j-Y)^2]
 =\frac{(1-\mu)a_j(1-a_j)+\mu b_j(1-b_j)}{\delta_j^2}.
 \label{eq:average-variance}
\end{align}
\begin{lemma}[Optimal constant weights for known channels]
\label{lem:variance-optimal-alpha}
For $\alpha\in\Delta_K$,
\begin{equation}\textstyle
 \Var(U_\alpha-Z)=\Var(Y-Z)+\sum\nolimits_{j=1}^{K}\alpha_j^2\bar v_j.
 \label{eq:variance-decomp}
\end{equation}
If every $\bar v_j>0$, the unique minimizing constant vector is
\begin{equation}\textstyle
 \alpha_j^{\IV}\coloneqq\frac{\bar v_j^{-1}}{\sum\nolimits_{k=1}^{K}\bar v_k^{-1}},
 \qquad j\in[K].
 \label{eq:iv-weights}
\end{equation}
If some $\bar v_j=0$, the minimizers are precisely the convex vectors
supported on zero-variance annotators.
\end{lemma}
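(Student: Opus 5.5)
The plan is to split $U_\alpha-Z=(Y-Z)+(U_\alpha-Y)$ and show the two pieces are uncorrelated. For the cross term, $\E[U_\alpha-Y\mid Y,Z]=0$ by Eq.~\eqref{eq:correction-conditional} of Lemma~\ref{thm:corrected-moment}. Since $Y-Z$ is $(Y,Z)$-measurable and bounded, the tower property gives $\Cov(Y-Z,U_\alpha-Y)=\E[(Y-Z)\E[U_\alpha-Y\mid Y,Z]]=0$. The same conditional mean gives $\E[U_\alpha-Y]=0$, hence
\[
\Var(U_\alpha-Z)=\Var(Y-Z)+\E[(U_\alpha-Y)^2].
\]
This is Eq.~\eqref{eq:residual-variance-objective}.

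Next I expand $\E[(U_\alpha-Y)^2]=\sum_{j,k}\alpha_j\alpha_k\E[(U_j-Y)(U_k-Y)]$. For $j\ne k$, condition on $(Y,Z)$. Under Assumption~\ref{v03:product}, $A_j$ and $A_k$ are conditionally independent given $(Y,Z)$, and each $U_j-Y$ has conditional mean zero. The off-diagonal terms therefore vanish. For $j=k$, $\E[(U_j-Y)^2\mid Y,Z]=\Var(A_j\mid Y,Z)/\delta_j^2$. By Eq.~\eqref{eq:marginal-channel} this equals $p_j(Y)(1-p_j(Y))/\delta_j^2$, which is $a_j(1-a_j)/\delta_j^2$ on $Y=0$ and $b_j(1-b_j)/\delta_j^2$ on $Y=1$. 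Conditioning on $Z$ gives $v_j(Z)$ in Eq.~\eqref{eq:local-variance}, and averaging gives $\bar v_j$ with weights $1-\mu$ and $\mu$. This proves Eq.~\eqref{eq:variance-decomp}.

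Only $\sum_j\alpha_j^2\bar v_j$ depends on $\alpha$, so it remains to minimize this quadratic over the simplex. If every $\bar v_j>0$, the objective is strictly convex, so a minimizer is unique. To identify it I use Cauchy--Schwarz:
\[
1=\Bigl(\sum_j\alpha_j\Bigr)^2\le\Bigl(\sum_j\alpha_j^2\bar v_j\Bigr)\Bigl(\sum_j\bar v_j^{-1}\Bigr).
\]
Equality holds iff $\alpha_j\bar v_j^{1/2}\propto\bar v_j^{-1/2}$, that is, $\alpha_j\propto\bar v_j^{-1}$. These weights are nonnegative and normalize to Eq.~\eqref{eq:iv-weights}, so $\alpha^{\IV}$ attains the lower bound $(\sum_k\bar v_k^{-1})^{-1}$ and is the unique minimizer.

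If some $\bar v_j=0$, the objective is nonnegative and takes the value $0$ at any convex vector supported on $Z_0\coloneqq\{j:\bar v_j=0\}$, so all such vectors are minimizers. Conversely, a minimizer has objective value $0$, which forces $\alpha_j^2\bar v_j=0$ for every $j$, hence $\alpha_j=0$ whenever $\bar v_j>0$. So the minimizers are exactly the convex vectors supported on $Z_0$.

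The only delicate step is the cross-term vanishing. It needs conditional independence given $(Y,Z)$ and not just given $Y$, which is precisely Assumption~\ref{v03:product}. Integrability is not an issue, because all variables are bounded when the gaps are nonzero.
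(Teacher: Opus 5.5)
Your proof is correct and follows the paper's argument essentially step for step: the same uncorrelated split $U_\alpha-Z=(Y-Z)+(U_\alpha-Y)$, conditional independence to remove the cross terms, and averaging of the two Bernoulli conditional variances to obtain $\bar v_j$; for the minimizer you likewise use Cauchy--Schwarz with its equality case, strict convexity for uniqueness, and the same zero-objective argument when some $\bar v_j=0$. The only inaccuracy is your closing remark that independence given $(Y,Z)$, not just $Y$, is required: independence given $Y$ alone already suffices for this unconditional identity, since $\E[(U_j-Y)(U_k-Y)\mid Y]=\E[U_j-Y\mid Y]\,\E[U_k-Y\mid Y]=0$ for $j\ne k$ and the diagonal terms use only Assumption~\ref{v03:marginal}, so conditioning on $(Y,Z)$ is convenient but not necessary.
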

\begin{proof}[Proof of Lemmas~\ref{lem:iv-main} and~\ref{lem:variance-optimal-alpha}]
The decomposition $U_\alpha-Z=(Y-Z)+(U_\alpha-Y)$ has uncorrelated
terms because $\E[U_\alpha-Y\mid Y,Z]=0$.
Conditional independence gives
$\Var(U_\alpha\mid Y,Z)=\sum_{j=1}^{K}\alpha_j^2\Var(U_j\mid Y,Z)$.
Averaging the two Bernoulli conditional variances yields
Eq.~\eqref{eq:variance-decomp}.
For positive $\bar v_j$, Cauchy--Schwarz gives
$1=\left(\sum_{j=1}^{K}\alpha_j\right)^2 \le\left(\sum_{j=1}^{K}\alpha_j^2\bar v_j\right) \left(\sum_{j=1}^{K}\bar v_j^{-1}\right),$
with equality exactly when $\alpha_j\bar v_j$ is constant.
Normalization gives Eq.~\eqref{eq:iv-weights}, and strict convexity proves uniqueness. 
If some $\bar v_j=0$, the noise term is minimized exactly by assigning
zero weight to annotators with positive variance.
\end{proof}
\paragraph{Alternative constant rules.}
For positive gaps, Equal, Gap, and Squared-gap weights are proportional to
$1,\delta_j,\delta_j^2$. Writing
$\zeta_j=(1-\mu)a_j(1-a_j)+\mu b_j(1-b_j)$ gives
$\alpha_j^{\IV}\propto\delta_j^2/\zeta_j$, so Squared-gap and IV agree when
all positive $\zeta_j$ are equal. These rules compare residual variance,
not finite-sample ECE mean squared error. Plug-in IV and shrinkage are specified
in Appendix~\ref{app:feasible-weights}.

\subsubsection{Connection to bin-moment and \NCECE{} variance}
\label{app:bin-moment-variance}
Fix the predictor, the true nonzero-gap channels, a partition
$\calB\coloneqq\{B_1,\ldots,B_M\}$, and constant weights
$\alpha\in\Delta_K$ independently of an i.i.d. evaluation sample
$\calD\coloneqq\{O_i\}_{i=1}^n$. For $\ell\in[M]$, define
\begin{align}
 R_{\ell,\alpha}(O)&\coloneqq\psi_\alpha(O)\one(Z\in B_\ell),\qquad
 r_\ell\coloneqq\E[(Y-Z)\one(Z\in B_\ell)],\notag\\
 \wh r_{\ell,\alpha}&\coloneqq\frac1n\sum\nolimits_{i=1}^nR_{\ell,\alpha}(O_i),
 \qquad \wh E_\alpha\coloneqq\sum\nolimits_{\ell=1}^M|\wh r_{\ell,\alpha}|.
 \label{eq:variance-bin-estimates}
\end{align}
Here $\wh E_\alpha$ is \NCECE{} with known channels.
Unbiasedness applies to each $\wh r_{\ell,\alpha}$, not generally to
$\wh E_\alpha$; the absolute values give
$\E[\wh E_\alpha]\ge\sum_{\ell=1}^M|r_\ell|$.

\begin{proposition}[Variance of bin moments and \NCECE{}]
\label{prop:bin-moment-variance}
Under Assumption~\ref{v03:marginal} and the preceding conditions,
$\E[\wh r_{\ell,\alpha}]=r_\ell,$
\begin{align}
 \sum\nolimits_{\ell=1}^M\Var(\wh r_{\ell,\alpha})
 &=\frac1n\left(\E[\psi_\alpha^2]-\sum\nolimits_{\ell=1}^M r_\ell^2\right)\notag\\
 &=\frac1n\left(V_{\mathrm{res}}(\alpha)
       +(\E[Y-Z])^2-\sum\nolimits_{\ell=1}^M r_\ell^2\right).
 \label{eq:bin-variance-sum}
\end{align}
Thus minimizing $V_{\mathrm{res}}$ is equivalent to minimizing the sum
of variances, or the sum of mean squared errors, of these bin-moment estimates.

If $M$ and $\calB$ remain fixed as $n\to\infty$, and $r_\ell\ne0$
for every bin with $P(Z\in B_\ell)>0$, then
\begin{equation}\textstyle
 \Var(\wh E_\alpha)=\frac{\sigma_\alpha^2}{n}+o(n^{-1}),\qquad
 \sigma_\alpha^2\coloneqq V_{\mathrm{res}}(\alpha)
       +(\E[Y-Z])^2-\ECE_\calB(f)^2.
 \label{eq:bin-ece-asymptotic-variance}
\end{equation}
Consequently, under Assumption~\ref{v03:product} as well, the IV weights
of Lemma~\ref{lem:variance-optimal-alpha} minimize both the exact total
bin-moment variance and, under the nonzero-moment condition, the leading
asymptotic variance of known-channel \NCECE{} among constant convex weights.
\end{proposition}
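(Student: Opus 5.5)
Unbiasedness follows from Lemma~\ref{thm:corrected-moment}: with $\alpha$, $\calB$ and the channels fixed independently of $\calD$, each summand $R_{\ell,\alpha}(O_i)$ has mean $\E[\psi_\alpha\one(Z\in B_\ell)]=\E[(m(Z)-Z)\one(Z\in B_\ell)]=r_\ell$. For the variance identity I will use that $\wh r_{\ell,\alpha}$ is an i.i.d.\ average, so $\Var(\wh r_{\ell,\alpha})=n^{-1}(\E[R_{\ell,\alpha}^2]-r_\ell^2)$. Because the bins partition $[0,1]$, $\sum_\ell R_{\ell,\alpha}^2=\psi_\alpha^2$. Summing over $\ell$ therefore gives the first line of Eq.~\eqref{eq:bin-variance-sum}. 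For the second line, write $\E[\psi_\alpha^2]=\Var(\psi_\alpha)+(\E\psi_\alpha)^2$, and note that $\E\psi_\alpha=\E[Y-Z]$ by Eq.~\eqref{eq:residual-conditional}. Every term except $V_{\mathrm{res}}(\alpha)$ is independent of $\alpha$. Each bin moment is unbiased, so its MSE equals its variance, and minimizing $V_{\mathrm{res}}$ is equivalent to minimizing either total.

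For the asymptotic statement I will apply the delta method to the map $\phi(x)=\sum_\ell|x_\ell|$ at $r=(r_\ell)_\ell$. Bins with $P(Z\in B_\ell)=0$ give $\wh r_{\ell,\alpha}=0$ almost surely and can be dropped. On the remaining bins $r_\ell\ne0$, so $\phi$ is smooth near $r$ with gradient $s=(\operatorname{sign} r_\ell)_\ell$. The linear term is then $\sum_\ell s_\ell\wh r_{\ell,\alpha}=n^{-1}\sum_i\psi_\alpha(O_i)w_s(Z_i)$, whose variance is $n^{-1}(\E[\psi_\alpha^2 w_s^2]-(\sum_\ell|r_\ell|)^2)$. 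Since $w_s^2=1$ on the support, this equals $n^{-1}(\E[\psi_\alpha^2]-\ECE_\calB(f)^2)$, which is $\sigma_\alpha^2/n$ after the same decomposition of $\E[\psi_\alpha^2]$ as above.

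The main obstacle is turning the distributional delta method into a statement about the \emph{variance} with an $o(n^{-1})$ remainder. Here I rely on boundedness: $|\psi_\alpha|\le B_\rho$ by Lemma~\ref{lem:known-bias}. Let $c=\min_\ell|r_\ell|>0$ over the positive-mass bins, and let $E$ be the event that $|\wh r_{\ell,\alpha}-r_\ell|<c$ for all these $\ell$. On $E$ every sign agrees with $s_\ell$, so $\wh E_\alpha=\sum_\ell s_\ell\wh r_{\ell,\alpha}$ exactly. By Hoeffding's inequality, $P(E^c)\le 2M\exp(-nc^2/(2B_\rho^2))$. Write $\wh E_\alpha=L+D$, where $L$ is the linear term and $D=(\wh E_\alpha-L)\one(E^c)$. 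Both $\wh E_\alpha$ and $L$ are bounded by $MB_\rho$, so $\E[D^2]\le 4M^2B_\rho^2P(E^c)$, which is exponentially small. By Cauchy--Schwarz, $|\Cov(L,D)|\le\sqrt{\Var(L)\E[D^2]}$ is also $o(n^{-1})$. This proves Eq.~\eqref{eq:bin-ece-asymptotic-variance}.

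Finally, under Assumption~\ref{v03:product}, $\sigma_\alpha^2$ and the exact total variance both depend on $\alpha$ only through $V_{\mathrm{res}}(\alpha)$. Lemma~\ref{lem:variance-optimal-alpha} then shows that the IV weights minimize both.
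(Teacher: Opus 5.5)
Your proposal is correct and takes essentially the same route as the paper. It uses the partition identity $\sum_\ell R_{\ell,\alpha}^2=\psi_\alpha^2$ for the exact variance sum, then an exponentially small Hoeffding bound on the event that some estimated bin moment has the wrong sign; that event is what separates $\wh E_\alpha$ from the linear statistic $\sum_\ell s_\ell\wh r_{\ell,\alpha}$, whose variance is $\sigma_\alpha^2/n$. Your explicit Cauchy--Schwarz treatment of $\Cov(L,D)$ fills in a step the paper leaves implicit. Your envelope $MB_\rho$ could be sharpened to $B_\rho$, since $\sum_\ell|\wh r_{\ell,\alpha}|\le n^{-1}\sum_i|\psi_\alpha(O_i)|$, but this does not affect the argument.
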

\begin{proof}
The corrected-moment identity gives $\E[R_{\ell,\alpha}]=r_\ell$.
Independent evaluation examples imply
$\Var(\wh r_{\ell,\alpha})=\Var(R_{\ell,\alpha})/n$.
Since the bins partition the score space,
$\sum_{\ell=1}^M R_{\ell,\alpha}^2=\psi_\alpha^2$ almost surely.
Summing the variances and using $\E[\psi_\alpha]=\E[Y-Z]$ proves
Eq.~\eqref{eq:bin-variance-sum}. Its other terms do not depend on
$\alpha$, and each bin-moment estimate is unbiased.

For the second part, zero-mass bins contribute zero almost surely. Let
$s(z)\coloneqq\sum_{\ell=1}^M\operatorname{sgn}(r_\ell)\one(z\in B_\ell)$
and $H_n\coloneqq n^{-1}\sum_{i=1}^n\psi_\alpha(O_i)s(Z_i)$.
Then $s(Z)^2=1$ almost surely,
$\E[H_n]=\ECE_\calB(f)$, and
$\Var(H_n)=(\E[\psi_\alpha^2]-\ECE_\calB(f)^2)/n$.
To control sign changes, let $d\coloneqq\min_{\ell:P(Z\in B_\ell)>0}|r_\ell|>0$ and
$B_\rho\coloneqq\max_{j\in[K]}|\delta_j|^{-1}$.
We have $|\psi_\alpha|\le B_\rho$, and Hoeffding's inequality gives
$P(\wh E_\alpha\ne H_n) \le 2M\exp\left(-\frac{nd^2}{2B_\rho^2}\right).$
Indeed, the two statistics are equal whenever no estimated bin moment
has the opposite sign to its population moment.
Also $|\wh E_\alpha-H_n|\le2B_\rho$, so
$\E[(\wh E_\alpha-H_n)^2]$ decays exponentially.
It follows that
$n\bigl|\Var(\wh E_\alpha)-\Var(H_n)\bigr|\to0$.
The identity
$\E[\psi_\alpha^2]=V_{\mathrm{res}}(\alpha)+(\E[Y-Z])^2$
proves Eq.~\eqref{eq:bin-ece-asymptotic-variance}.
Applying Lemma~\ref{lem:variance-optimal-alpha} completes the argument.
\end{proof}

\begin{example}[Residual and finite-sample ECE variances can favor different weights]
\label{ex:ece-variance-not-residual}
Let $K=2$, $Z=1/2$, $Y\sim\operatorname{Bernoulli}(1/2)$, and suppose
the two annotators are conditionally independent with identical known
channels $a_1=a_2=1/4$, $b_1=b_2=3/4$.
Each $\psi_j=2A_j-1$ takes values $-1,1$ with equal probabilities.
For $n=1$ and one bin, choosing $\alpha=(1,0)$ gives
$\Var(\psi_\alpha)=1$ but $\wh E_\alpha=|\psi_1|=1$ almost surely,
so $\Var(\wh E_\alpha)=0$.
The IV weights are $\alpha=(1/2,1/2)$, with
$\psi_\alpha=A_1+A_2-1$. They reduce the residual variance to
$\Var(\psi_\alpha)=5/8$, but $\wh E_\alpha$ is Bernoulli with success
probability $5/8$ and hence $\Var(\wh E_\alpha)=15/64>0$.
Both residual-moment estimates are unbiased for zero.
\end{example}
The asymptotic statement above excludes zero population bin moments and
does not cover an increasing number of bins or estimated channels. Signed bin
moments are unbiased with known channels, whereas their absolute values have
nonnegative finite-sample bias by Jensen's inequality. Small channel gaps can
amplify this effect through larger corrected-label variance. IV weighting
reduces signed-moment variance but does not remove this bias, so the result is
not a finite-sample optimality claim for the absolute-value statistic.

\section{Channel Identification and Estimation}
\label{v03:identification-proofs}
Identification asks whether two parameter values can produce the same
distribution of available data. Estimation asks how accurately the uniquely
determined parameters can be recovered from a finite sample. We address
these questions in that order. The channel-error bounds can then be
inserted into the calibration bounds of Appendix~\ref{v03:main-proofs}.
Within each model, $\theta$ denotes a candidate parameter vector,
$\theta_0$ its true value, and $\wh\theta$ an estimate based on $\calD_\ch$.
The coordinates of $\wh\theta$ corresponding to $a_j,b_j$ are the
channel estimates $\wh a_j,\wh b_j$ used in Section~\ref{v03:estimation}.
\subsection{Non-identification without sufficient information}
\label{app:nonidentification}
\begin{proposition}[One annotator]
\label{thm:one-annotator}
Suppose $K=1$ and $q(z)\coloneqq P(A_1=1\mid Z=z)$ satisfies
\begin{equation}\textstyle
 0<\essinf q(Z)\le\esssup q(Z)<1.
 \label{eq:essential-range}
\end{equation}
There is a continuum of positively oriented class-conditional channels
and true-label distributions inducing the same distribution of $(Z,A_1)$.
The population true-label calibration error is not generally identifiable
in this model.
\end{proposition}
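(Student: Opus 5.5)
The plan is to build an explicit one-parameter family of models that all reproduce the observed law of $(Z,A_1)$. The observed law depends only on $P_Z$ and on $q(z)=P(A_1=1\mid Z=z)$. By Eq.~\eqref{eq:affine-response}, a model with channel $(a,b)$, $a<b$, and regression function $\tilde m$ induces $q$ exactly when
\[
 \tilde m(z)=(q(z)-a)/(b-a).
\]
We therefore keep $P_Z$ fixed and, for each admissible $(a,b)$, define $\tilde m$ by this formula. We set $Y\mid Z\sim\Ber(\tilde m(Z))$ and draw $A_1\mid Y,Z\sim\Ber(a+(b-a)Y)$. Then $A_1$ depends on $(Y,Z)$ only through $Y$, so Assumption~\ref{v03:marginal} holds, and $P(A_1=1\mid Z)=a+(b-a)\tilde m(Z)=q(Z)$. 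Hence the law of $(Z,A_1)$ is unchanged.

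Admissibility requires $\tilde m\in[0,1]$ a.s., i.e.\ $a\le\essinf q$ and $b\ge\esssup q$. Condition~\eqref{eq:essential-range} leaves room to move: every pair with $0\le a\le\essinf q(Z)$ and $\esssup q(Z)\le b\le1$ works, and this set contains a two-dimensional rectangle of positively oriented channels. We also need $0<P(Y=1)<1$, i.e.\ $0<\E[\tilde m(Z)]<1$. Choosing $a<\essinf q$ and $b>\esssup q$ strictly gives $0<\tilde m<1$ a.s., which suffices and still leaves a continuum. One caveat: if $q$ is a.s.\ constant the channels still vary, so the continuum claim holds in every case.

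For the last assertion we must show that the calibration error $\E|\tilde m(Z)-Z|$ genuinely varies over this family, at least for some $(P_Z,q)$; the statement only says ``not generally identifiable''. The natural route is a concrete instance, for example $Z\equiv1/2$ and $q\equiv1/2$. Then $\tilde m=(1/2-a)/(b-a)$, and $\ECE=|\tilde m-1/2|$, which is $0$ for symmetric $a=1-b$ and nonzero for, say, $a=0.1$, $b=0.6$ (giving $\tilde m=0.8$). More generally, the map $(a,b)\mapsto\E|(q(Z)-a)/(b-a)-Z|$ is continuous and nonconstant in typical cases, so the example settles the claim.

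The main subtlety is only ensuring that the constructed $\tilde m$ is a valid conditional probability with $0<P(Y=1)<1$ while staying positively oriented; the strict interior choice above handles it. No independence assumptions are needed since $K=1$.
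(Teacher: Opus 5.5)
Your proposal is correct and follows the paper's proof: fix $P_Z$, set $\tilde m=(q-a)/(b-a)$ for a continuum of pairs $0\le a<\essinf q(Z)\le\esssup q(Z)<b\le1$, and generate $Y$ and $A_1$ through this channel so that $P(A_1=1\mid Z)=q(Z)$. The only difference is the example showing that $\ECE(f)$ changes. The paper uses $Z\sim\mathrm{Unif}[0,1]$ with $q(z)=0.35+0.3z$, giving ECE $1/8$ versus $5/32$; your constant-score instance $Z\equiv1/2$, $q\equiv1/2$, giving ECE $0$ versus $0.3$, is an equally valid and simpler counterexample.
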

\begin{proof}
Keep $P_Z$ fixed. For any pair satisfying
$0\le a'<\essinf q(Z)\le\esssup q(Z)<b'\le1$, define
$m'(z)\coloneqq(q(z)-a')/(b'-a')$ outside a $P_Z$-null set and extend it
measurably to that set, for example by setting it to $1/2$.
Then $0<m'(Z)<1$ almost surely.
Generate $Y'\mid Z=z\sim\Ber(m'(z))$ and
$A_1\mid(Y'=0,Z)\sim\Ber(a')$,
$A_1\mid(Y'=1,Z)\sim\Ber(b')$.
This construction has $P(A_1=1\mid Z)=a'+(b'-a')m'(Z)=q(Z)$,
so the distribution of the available data is unchanged.
The condition in Eq.~\eqref{eq:essential-range} provides a continuum of such pairs.

To show that calibration itself can change, take $Z\sim\mathrm{Unif}[0,1]$.
The models
$(a,b,m(z))=(0.2,0.8,0.25+0.5z),\quad (a',b',m'(z))=(0.1,0.9,0.3125+0.375z)$
both give $q(z)=0.35+0.3z$.
Their calibration errors are respectively $1/8$ and $5/32$.
Thus the same available-data distribution need not determine $\ECE(f)$.
\end{proof}
The essential bounds in Eq.~\eqref{eq:essential-range} ensure that the
alternative channels remain valid probabilities. The example establishes
non-identification of ECE in general; particular equivalent models can
still have equal calibration errors.

\paragraph{Two annotators after discarding scores.}
The following family produces identical marginal annotation distributions
with different noise probabilities.
Set $\mu=1/2$, choose $d\in(0.24,1)$, and define
$\delta_1\coloneqq d$, $\delta_2\coloneqq0.24/d$,
$a_j\coloneqq(1-\delta_j)/2$, $b_j\coloneqq(1+\delta_j)/2$ for $j\in[2]$.
Generate the two annotations independently conditional on $Y$.
For every such $d$, both annotation means equal $1/2$ and their covariance
is $\mu(1-\mu)\delta_1\delta_2=0.06$.
Hence the $(0,0),(0,1),(1,0),(1,1)$ probabilities are always
$(0.31,0.19,0.19,0.31)$, although the channels vary continuously.
All channels are positively oriented and interior.
The marginal two-annotator model is therefore not generally identified.
Longer optimization cannot recover information discarded with the scores.

\subsection{Two annotators in two score groups}
\label{app:k2}
Let $G\coloneqq\one(Z\ge t)$, where $t\in(0,1)$ is fixed independently of
the channel sample, or is chosen on an independent pilot sample.
All statements can be conditioned on that pilot sample.
For $g\in\{0,1\}$ and $j\in[2]$, assume
\begin{equation}\textstyle
 P(G=g)>0,\quad 0<\mu_g<1,\quad \mu_0\ne\mu_1,
 \quad 0\le a_j<b_j\le1,
 \label{eq:k2-conditions}
\end{equation}
and impose Assumptions~\ref{v03:marginal} and~\ref{v03:product} with $K=2$.
The shared channels and group proportions are
$\theta\coloneqq(a_1,b_1,a_2,b_2,\mu_0,\mu_1)$.
Define $q_{1,g},q_{2,g},q_{12,g}$ as in
Section~\ref{sec:two-group-method}; their model equations are
Eq.~\eqref{v03:moments} for $g\in\{0,1\}$. Let
$\Gamma(\theta)\coloneqq(q_{1,0},q_{2,0},q_{12,0},q_{1,1},q_{2,1},q_{12,1})$.

\begin{theorem}[Closed-form two-group identification]
\label{thm:k2-identification}
Under Assumptions~\ref{v03:marginal} and~\ref{v03:product} and the conditions in
Eq.~\eqref{eq:k2-conditions}, let
\begin{align}
 c_g&\coloneqq q_{12,g}-q_{1,g}q_{2,g},\qquad
 \beta\coloneqq\frac{q_{2,1}-q_{2,0}}{q_{1,1}-q_{1,0}},\qquad
 h_g\coloneqq\frac{c_g}{\beta}+q_{1,g}^2,\label{eq:k2-intermediate}\\
 u&\coloneqq\frac{h_1-h_0}{q_{1,1}-q_{1,0}},\qquad
 v\coloneqq u q_{1,0}-h_0,\qquad \tau\coloneqq q_{2,0}-\beta q_{1,0}.
 \label{eq:k2-quadratic}
\end{align}
Then the channels and proportions are uniquely recovered by
\begin{align}
 a_1&=\frac{u-\sqrt{u^2-4v}}2,\qquad
 b_1=\frac{u+\sqrt{u^2-4v}}2,\notag\\
 a_2&=\tau+\beta a_1,\qquad b_2=\tau+\beta b_1,\qquad
 \mu_g=\frac{q_{1,g}-a_1}{b_1-a_1},\quad g\in\{0,1\}.
 \label{eq:k2-inverse}
\end{align}
The Jacobian determinant is
\begin{equation}\textstyle
 \det D\Gamma(\theta)=\delta_1^2\delta_2^2(\mu_1-\mu_0)^2.
 \label{eq:k2-determinant}
\end{equation}
\end{theorem}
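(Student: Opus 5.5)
The plan is to verify the inverse formulas by substituting the moment equations in Eq.~\eqref{v03:moments}, written in terms of $\theta$, and reducing each intermediate quantity to a simple function of the parameters. From $q_{j,1}-q_{j,0}=\delta_j(\mu_1-\mu_0)$ and $\mu_0\ne\mu_1$, $\delta_1>0$, we get $\beta=\delta_2/\delta_1$, which is well defined and positive. Hence $\tau=q_{2,0}-\beta q_{1,0}=a_2+\delta_2\mu_0-(\delta_2/\delta_1)(a_1+\delta_1\mu_0)=a_2-\beta a_1$. This gives $a_2=\tau+\beta a_1$ and $b_2=a_2+\delta_2=\tau+\beta b_1$. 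Next, the within-group covariance identity $c_g=\delta_1\delta_2\mu_g(1-\mu_g)$ gives $c_g/\beta=\delta_1^2\mu_g(1-\mu_g)$. With $q_{1,g}=a_1+\delta_1\mu_g$, a short expansion shows
\[
h_g=\delta_1^2\mu_g(1-\mu_g)+(a_1+\delta_1\mu_g)^2=(a_1+b_1)q_{1,g}-a_1b_1 .
\]
This follows because $q_{1,g}^2-(a_1+b_1)q_{1,g}+a_1b_1=(q_{1,g}-a_1)(q_{1,g}-b_1)=-\delta_1^2\mu_g(1-\mu_g)$. So $h_g$ is affine in $q_{1,g}$, with slope $a_1+b_1$ and intercept $-a_1b_1$.

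Because $q_{1,1}\ne q_{1,0}$, differencing gives $u=a_1+b_1$, and then $v=uq_{1,0}-h_0=a_1b_1$. So $a_1,b_1$ are the two roots of $x^2-ux+v=0$. The discriminant is $u^2-4v=\delta_1^2>0$, and the orientation $a_1<b_1$ selects which root is which, giving Eq.~\eqref{eq:k2-inverse}. Finally $\mu_g=(q_{1,g}-a_1)/\delta_1$. Uniqueness follows because every quantity was expressed as a function $H$ of $\Gamma(\theta)$ alone. Any $\theta'$ satisfying the conditions with $\Gamma(\theta')=\Gamma(\theta)$ therefore yields the same output, so $H\circ\Gamma=\mathrm{id}$ on the admissible set. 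This also proves Theorem~\ref{v03:k2}.

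For the Jacobian determinant, I would avoid a direct $6\times6$ expansion and instead change coordinates. First replace $(a_j,b_j)$ by $(a_j,\delta_j)$, which has unit Jacobian. Next apply row operations: subtract $q_{1,g}q_{2,g}$ from $q_{12,g}$, replacing each $q_{12,g}$ by the covariance $c_g$. This is a triangular map in the moment coordinates with determinant one. In the coordinates $(a_1,\delta_1,a_2,\delta_2,\mu_0,\mu_1)\mapsto(q_{1,0},q_{2,0},c_0,q_{1,1},q_{2,1},c_1)$, the $c_g=\delta_1\delta_2\mu_g(1-\mu_g)$ do not depend on $a_1,a_2$. Taking further differences $q_{j,1}-q_{j,0}$, which is also unimodular, the $a_j$ columns become nonzero only in the $q_{1,0},q_{2,0}$ rows, where they form an identity block. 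The determinant therefore reduces to the $4\times4$ determinant of
\[
(\delta_1(\mu_1-\mu_0),\ \delta_2(\mu_1-\mu_0),\ c_0,\ c_1)
\]
with respect to $(\delta_1,\delta_2,\mu_0,\mu_1)$.

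The main obstacle is this $4\times4$ computation; I expect the sign conventions and the factor $(1-2\mu_g)$ to be where errors creep in. I would reparametrize with $s=\mu_1-\mu_0$ and $\mu_0$, or more cleanly factor each row. The first two rows share the factor $s$, and the $c_g$ rows share $\delta_1\delta_2$. After column operations the determinant should collapse to $\delta_1^2\delta_2^2(\mu_1-\mu_0)^2$. If the algebra resists, a fallback is to compute $\det DH$ from the explicit inverse, or to use the chain rule through the intermediate maps $\Gamma\mapsto(\beta,\tau,u,v,\mu_0,\mu_1)$. Since $u=a_1+b_1$ and $v=a_1b_1$, the map $(a_1,b_1)\mapsto(u,v)$ has Jacobian $b_1-a_1=\delta_1$, which accounts for one factor of $\delta_1$ transparently.
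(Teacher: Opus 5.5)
Your identification argument is correct and is essentially the paper's own proof. It runs through $\beta=\delta_2/\delta_1$, then $c_g/\beta=(q_{1,g}-a_1)(b_1-q_{1,g})$, so that $h_g=(a_1+b_1)q_{1,g}-a_1b_1$, $u=a_1+b_1$ and $v=a_1b_1$. Orientation selects the roots, $\tau=a_2-\beta a_1$, and $H\circ\Gamma=\mathrm{id}$ gives uniqueness. Your unimodular coordinate changes for the Jacobian are also the paper's.

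The gap is in the determinant. You leave the $4\times4$ evaluation undone, and the sign bookkeeping you state is wrong.

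\emph{The reduction sign.} Use the orderings $(a_1,\delta_1,a_2,\delta_2,\mu_0,\mu_1)$ for inputs and $(q_{1,0},q_{2,0},c_0,q_{1,1}-q_{1,0},q_{2,1}-q_{2,0},c_1)$ for outputs. The unit pivots of the $a_1,a_2$ columns then sit in rows $1,2$ and columns $1,3$. Laplace expansion therefore gives $(-1)^{1+2+1+3}=-1$ times the complementary $4\times4$ minor, not the minor itself. The paper records this as ``the negative of a $4\times4$ determinant.''

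\emph{The minor's sign.} Take your rows $(\delta_1 s,\delta_2 s,c_0,c_1)$ with $s=\mu_1-\mu_0$, and columns $(\delta_1,\delta_2,\mu_0,\mu_1)$. This minor equals $-\delta_1^2\delta_2^2s^2$, not the positive value you expect. Your two sign slips happen to cancel. But if you evaluated the minor correctly and used your stated reduction, you would get the opposite of the sign the theorem asserts.

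\emph{The factoring hint.} The rows do not factor as you suggest: the gradient of $\delta_1 s$ is $(s,0,-\delta_1,\delta_1)$, so it does not carry a common factor $s$.

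\emph{The missing cancellation.} Here is one way to do it.
\begin{enumerate}
\item Add the $\mu_0$ column to the $\mu_1$ column.
\item Clear the $\mu_0$ column in the first two rows by adding $\delta_1/s$ times the $\delta_1$ column and $\delta_2/s$ times the $\delta_2$ column (valid since $s\neq0$). The matrix is then block triangular.
\item Write $p_g=\mu_g(1-\mu_g)$ and $e_g=1-2\mu_g$. The determinant becomes $\delta_1^2\delta_2^2\,[\,s^2e_0e_1+2s(p_0e_1-p_1e_0)\,]$.
\item Use $p_0e_1-p_1e_0=-s(1-\mu_0-\mu_1+2\mu_0\mu_1)$. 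Together with $e_0e_1-2(1-\mu_0-\mu_1+2\mu_0\mu_1)=-1$, the bracket equals $-s^2$.
\end{enumerate}
The paper takes a different path to the same cancellation. It replaces $c_1$ by $c_1-c_0=\delta_1\delta_2 s(1-\mu_0-\mu_1)$. It then subtracts multiples of the two difference rows, leaving the row $(0,0,\delta_1\delta_2(1-2\mu_1),-\delta_1\delta_2(1-2\mu_0))$, and expands to get $-\delta_1^2\delta_2^2s^2$. Combined with the outer factor $-1$, either route yields the stated $\delta_1^2\delta_2^2(\mu_1-\mu_0)^2$.
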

\begin{proof}
The group-moment equations follow by conditioning on $Y$ and using the
product assumption. In particular,
$q_{1,1}-q_{1,0}=\delta_1(\mu_1-\mu_0)\ne0$ and
$q_{2,1}-q_{2,0}=\delta_2(\mu_1-\mu_0)$, so
$\beta=\delta_2/\delta_1>0$.
Conditional independence gives
$c_g=\delta_1\delta_2\mu_g(1-\mu_g)$.
Since $q_{1,g}-a_1=\delta_1\mu_g$ and
$b_1-q_{1,g}=\delta_1(1-\mu_g)$,
$\frac{c_g}{\beta}=(q_{1,g}-a_1)(b_1-q_{1,g}),\quad h_g=(a_1+b_1)q_{1,g}-a_1b_1.$
The two distinct values of $q_{1,g}$ identify the slope $a_1+b_1$ and the
negative intercept $-a_1b_1$. The polynomial
$q^2-uq+v=(q-a_1)(q-b_1)$ therefore has the two ordered roots in
Eq.~\eqref{eq:k2-inverse}. Positive orientation fixes the ordering.
Also $q_{2,g}-\beta q_{1,g}=a_2-\beta a_1$, proving the remaining channel
formulas. Finally $\mu_g=(q_{1,g}-a_1)/\delta_1$.
This proves identification and Theorem~\ref{v03:k2}.

For completeness, replace the input coordinates by
$(a_1,\delta_1,a_2,\delta_2,\mu_0,\mu_1)$ and the output coordinates
by $(q_{1,0},q_{2,0},c_0,q_{1,1},q_{2,1},c_1)$.
Both changes have determinant one. The resulting Jacobian is
\begin{equation}
 \begin{pmatrix}
 1&\mu_0&0&0&\delta_1&0\\
 0&0&1&\mu_0&\delta_2&0\\
 0&\delta_2\mu_0(1-\mu_0)&0&\delta_1\mu_0(1-\mu_0)&\delta_1\delta_2(1-2\mu_0)&0\\
 1&\mu_1&0&0&0&\delta_1\\
 0&0&1&\mu_1&0&\delta_2\\
 0&\delta_2\mu_1(1-\mu_1)&0&\delta_1\mu_1(1-\mu_1)&0&\delta_1\delta_2(1-2\mu_1)
 \end{pmatrix}.
 \label{eq:k2-jacobian-matrix}
\end{equation}
Subtract the first three rows from their corresponding last rows and write
$d=\mu_1-\mu_0$ and $q=1-\mu_0-\mu_1$. Expanding along the first and third
columns leaves the negative of a $4\times4$ determinant. Subtracting
$q\delta_2$ times its second row and $q\delta_1$ times its third row from its
fourth row leaves $(0,0,\delta_1\delta_2(1-2\mu_1),-\delta_1\delta_2(1-2\mu_0))$.
Expansion gives $-\delta_1^2\delta_2^2d^2$; the preceding sign proves
Eq.~\eqref{eq:k2-determinant}.
\end{proof}

These are the classical two-test, two-population conditions
\citep{hui1980estimating}. Scores need not be calibrated, but the group class
proportions must differ. Small between-group response-mean differences or a
small minimum singular value of $D\Gamma(\wh\theta)$ diagnose weak
identification, but do not test the latent condition $\mu_0\ne\mu_1$. The
exactly identified model supplies no excess moments for testing its assumptions.

\subsection{Concentration and stable inversion}
\label{app:inversion-bound}
Let $\calD_\ch$ consist of i.i.d. examples and let $G_i$ be the group
assigned to its $i$th example by the fixed score partition.
For groups indexed by $[L]$, let
$n_g\coloneqq\sum_{i=1}^{n_\ch}\one(G_i=g)$ and
$n_{\min}\coloneqq\min_{g\in[L]}n_g$.
For positive group counts, define, for $j\in[2]$ and $g\in[L]$,
\begin{equation}\textstyle
 \wh q_{j,g}\coloneqq\frac1{n_g}\sum\nolimits_{i:G_i=g}A_{ij},
 \qquad
 \wh q_{12,g}\coloneqq\frac1{n_g}\sum\nolimits_{i:G_i=g}A_{i1}A_{i2}.
 \label{eq:empirical-group-moments}
\end{equation}
Stack $(\wh q_{1,g},\wh q_{2,g},\wh q_{12,g})$ in increasing order of $g$
to obtain $\wh\Gamma_L$. Its population counterpart is $\Gamma_L(\theta_0)$.

\begin{lemma}[Group-moment concentration]
\label{thm:group-moment-concentration}
For $\eta\in(0,1)$, conditional on positive group counts,
\begin{equation}
 P\left(\|\wh\Gamma_L-\Gamma_L(\theta_0)\|_\infty
 >\sqrt{\frac{\log(6L/\eta)}{2n_{\min\nolimits}}}
 \,\middle|\,(n_g)_{g\in[L]}\right)\le\eta.
 \label{eq:moment-concentration}
\end{equation}
\end{lemma}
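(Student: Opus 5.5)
The plan is to condition on the vector of group counts $(n_g)_{g\in[L]}$ and reduce each coordinate of $\wh\Gamma_L$ to a sample mean of i.i.d.\ bounded variables. Then two-sided Hoeffding with a union bound over the $3L$ coordinates gives the claim.

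\emph{Step 1 (conditional structure).} The examples in $\calD_\ch$ are i.i.d., and the group assignment $G_i$ is a fixed function of $Z_i$. Conditional on the counts $(n_g)_{g\in[L]}$, the observations falling into group $g$ are therefore i.i.d.\ draws from $P_{O\mid G=g}$, independently across groups. This is the standard exchangeability fact for multinomial allocation: conditional on the allocation vector, within-cell samples are i.i.d.\ from the cell-conditional law. To make it rigorous, I would also condition on the index sets $\{i:G_i=g\}$. Given these sets, the examples with $G_i=g$ are independent with law $P_{O\mid G=g}$. This conditional law depends on the index sets only through the counts, so conditioning on the counts alone yields the same bound after averaging.

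\emph{Step 2 (coordinatewise Hoeffding).} For fixed $g$ and each of the three statistics $\wh q_{1,g},\wh q_{2,g},\wh q_{12,g}$, the summands $A_{i1}$, $A_{i2}$, $A_{i1}A_{i2}$ take values in $[0,1]$. Their conditional means are exactly $q_{1,g},q_{2,g},q_{12,g}$, the coordinates of $\Gamma_L(\theta_0)$. Hoeffding's inequality gives, for each coordinate,
\[
P\bigl(|\wh q-q|>\epsilon \mid (n_g)_{g}\bigr)\le 2\exp(-2n_g\epsilon^2)\le 2\exp(-2n_{\min}\epsilon^2).
\]

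\emph{Step 3 (union bound).} A union bound over the $3L$ coordinates gives a failure probability of at most $6L\exp(-2n_{\min}\epsilon^2)$. Setting this equal to $\eta$ yields $\epsilon=\sqrt{\log(6L/\eta)/(2n_{\min})}$, which is Eq.~\eqref{eq:moment-concentration}. No independence across coordinates or groups is needed for the union bound.

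The only subtle step is Step 1, namely justifying that conditioning on random group counts preserves the i.i.d.\ bounded structure within groups. Steps 2 and 3 are routine.
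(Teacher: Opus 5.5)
Your proposal is correct and matches the paper's proof: condition on the group assignments, apply two-sided Hoeffding to each coordinate to get $2\exp(-2n_g t^2)\le 2\exp(-2n_{\min}t^2)$, note that the bound depends only on the counts, and take a union bound over the $3L$ coordinates without needing independence between them. Your extra care in Step 1, conditioning on the index sets and then averaging, is the same point the paper makes when it says the bound does not depend on which indices form each group.
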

\begin{proof}
Conditional on the group assignments, each coordinate is an average of
$n_g$ independent variables in $[0,1]$ from that group's distribution.
Hoeffding's inequality bounds its two-sided deviation by
$2\exp(-2n_gt^2)\le2\exp(-2n_{\min}t^2)$.
The bound does not depend on which indices form each group, so it also
holds conditional only on group counts. A union bound over $3L$
coordinates and the stated choice of $t$ prove the result.
Independence between coordinates is not needed.
\end{proof}

For two groups, use indices $g\in\{0,1\}$ as in Appendix~\ref{app:k2}
and write $\wh\Gamma$ for the resulting six sample moments.
Let $H$ denote the inverse given by Eq.~\eqref{eq:k2-intermediate} and Eq.~\eqref{eq:k2-inverse}.
The method-of-moments estimator is
$\wh\theta_{\mom}\coloneqq H(\wh\Gamma)$ whenever the formulas are real,
the denominators are nonzero, and the resulting probabilities are feasible
with positive gaps. Otherwise the inversion procedure reports failure.
The following theorem gives a high-probability event on which the estimate
is both defined and accurate.

\begin{theorem}[Finite-sample closed-form inversion]
\label{thm:k2-direct-finite}
Suppose the conditions of Theorem~\ref{thm:k2-identification} hold and,
for fixed positive constants $\pi_{\min},\delta_{\min},\gamma_{\min},\epsilon$,
\begin{equation}\textstyle
 P(G=g)\ge\pi_{\min\nolimits},\quad
 \delta_j\ge\delta_{\min\nolimits},\quad
 |\mu_1-\mu_0|\ge\gamma_{\min\nolimits},\quad
 a_j,b_j,\mu_g\in[\epsilon,1-\epsilon]
 \label{eq:stable-k2}
\end{equation}
for $j\in[2]$, $g\in\{0,1\}$.
There is $R_0>0$ such that the inverse $H$ in Eq.~\eqref{eq:k2-inverse}
is continuously differentiable on an open set containing the closed
Euclidean ball $N\coloneqq\overline B_2(\Gamma(\theta_0),R_0)$ and maps $N$
into feasible, positively oriented probability parameters.
Let $L_H\coloneqq\sup_{\gamma\in N}\|DH(\gamma)\|_\op<\infty$.
Conditional on positive group counts, if
\begin{equation}\textstyle
 \sqrt6\,\epsilon_n(\eta)\le R_0,
 \qquad \epsilon_n(\eta)\coloneqq\sqrt{\frac{\log(12/\eta)}{2n_{\min\nolimits}}},
 \label{eq:inverse-radius}
\end{equation}
then with probability at least $1-\eta$, the empirical inverse is feasible
and
\begin{equation}
 \|\wh\theta_{\mom}-\theta_0\|_2\le\sqrt6 L_H\epsilon_n(\eta).
 \label{eq:k2-finite-bound}
\end{equation}
\end{theorem}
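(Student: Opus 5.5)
The argument has two parts. The first is deterministic: a neighbourhood $N$ of $\Gamma(\theta_0)$ on which the closed-form inverse $H$ is smooth and feasible. The second is probabilistic: the concentration of Lemma~\ref{thm:group-moment-concentration} puts $\wh\Gamma$ in $N$, after which a mean-value bound finishes the proof.

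\emph{Step 1: construct $R_0$.} The map $H$ in Eqs.~\eqref{eq:k2-intermediate}--\eqref{eq:k2-inverse} is a composition of rational operations and one square root. It is $C^1$ wherever three conditions hold: the denominators $q_{1,1}-q_{1,0}$ and $\beta$ are nonzero, the discriminant $u^2-4v$ is strictly positive, and $b_1-a_1\neq0$.

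At $\Gamma(\theta_0)$ these quantities take the following values, each bounded away from zero by the constants in Eq.~\eqref{eq:stable-k2}:
\[
 q_{1,1}-q_{1,0}=\delta_1(\mu_1-\mu_0),\qquad \beta=\delta_2/\delta_1,\qquad u^2-4v=(b_1-a_1)^2=\delta_1^2 .
\]
Concretely, $|q_{1,1}-q_{1,0}|\ge\delta_{\min}\gamma_{\min}$, $\beta\ge\delta_{\min}$, and $\delta_1^2\ge\delta_{\min}^2$.

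By Theorem~\ref{thm:k2-identification}, $H(\Gamma(\theta_0))=\theta_0$. Moreover $\theta_0$ lies in the interior set where $a_j,b_j,\mu_g\in[\epsilon,1-\epsilon]$ and the gaps satisfy $\delta_j\ge\delta_{\min}>0$.

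Continuity of these finitely many scalar functions and of $H$ then gives a radius $R_0>0$ with two properties on a slightly larger open ball around $\Gamma(\theta_0)$. First, the denominators and the discriminant stay away from zero there. Second, $H$ maps the closed ball $N$ into $(0,1)^6$ with $\wh b_j-\wh a_j>0$; the positive-orientation margin can be taken as half the true margin. Consequently $DH$ is continuous on the compact set $N$, so $L_H<\infty$.

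\emph{Step 2: concentration.} Apply Lemma~\ref{thm:group-moment-concentration} with $L=2$. Since $\log(6L/\eta)=\log(12/\eta)$, with conditional probability at least $1-\eta$ we have $\|\wh\Gamma-\Gamma(\theta_0)\|_\infty\le\epsilon_n(\eta)$. This implies $\|\wh\Gamma-\Gamma(\theta_0)\|_2\le\sqrt6\,\epsilon_n(\eta)\le R_0$ by Eq.~\eqref{eq:inverse-radius}, so $\wh\Gamma\in N$. On this event every condition in the definition of $\wh\theta_{\mom}$ holds by Step 1: real formulas, nonzero denominators, and feasible positive gaps. Hence the inversion does not report failure.

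\emph{Step 3: Lipschitz bound.} $N$ is convex and $H$ is $C^1$ on an open set containing it. The mean-value inequality along the segment from $\Gamma(\theta_0)$ to $\wh\Gamma$ therefore gives
\[
 \|\wh\theta_{\mom}-\theta_0\|_2=\|H(\wh\Gamma)-H(\Gamma(\theta_0))\|_2\le L_H\|\wh\Gamma-\Gamma(\theta_0)\|_2\le\sqrt6L_H\epsilon_n(\eta),
\]
which is Eq.~\eqref{eq:k2-finite-bound}.

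\emph{Main obstacle.} The delicate part is Step 1, and in particular making sure the feasibility region is genuinely open. Orientation relies on the choice of the minus sign for $a_1$, which requires the discriminant to stay strictly positive. The bound $\mu_g\in(0,1)$ relies on $q_{1,g}$ staying strictly between the estimated roots. I would handle this purely by continuity, without explicit constants: the feasible, positively oriented set is open, its preimage under $H$, intersected with the domain where $H$ is $C^1$, is an open set containing $\Gamma(\theta_0)$, and $R_0$ is chosen so that a slightly larger closed ball fits inside it. No quantitative dependence of $R_0$ or $L_H$ on $(\pi_{\min},\delta_{\min},\gamma_{\min},\epsilon)$ is claimed.
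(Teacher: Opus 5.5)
Your proposal is correct and follows essentially the same route as the paper's proof. Both arguments first use the stability constants to keep the denominators and the discriminant $u^2-4v=\delta_1^2$ away from their singularities at $\Gamma(\theta_0)$, then choose the ball $N$ by continuity so that $H$ is $C^1$ and maps into feasible, oriented parameters. Both then apply Lemma~\ref{thm:group-moment-concentration} with $L=2$ and $\|\cdot\|_2\le\sqrt6\|\cdot\|_\infty$, and conclude with the mean-value bound along the segment in the convex ball.
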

\begin{proof}
At the true moment vector,
$|q_{1,1}-q_{1,0}|\ge\delta_{\min}\gamma_{\min}$,
$\beta=\delta_2/\delta_1>0$, and the discriminant equals
$\delta_1^2\ge\delta_{\min}^2$.
All denominators and the square root are therefore separated from their
singularities. The inverse is continuously differentiable on a sufficiently
small open neighborhood. By continuity and the interior assumptions,
choose a closed ball $N$ contained in that neighborhood whose image remains
feasible and positively oriented. Continuity of $DH$ on the compact ball
gives finite $L_H$.
On the event from Lemma~\ref{thm:group-moment-concentration} with $L=2$,
$\|\wh\Gamma-\Gamma(\theta_0)\|_2\le\sqrt6\epsilon_n\le R_0$.
The segment joining these two vectors is in the convex ball $N$.
The integral form of the mean-value formula gives
$H(\wh\Gamma)-H(\Gamma(\theta_0)) =\int_0^1 DH(\Gamma(\theta_0)+t(\wh\Gamma-\Gamma(\theta_0))) (\wh\Gamma-\Gamma(\theta_0))\,dt.$
Taking norms proves Eq.~\eqref{eq:k2-finite-bound}.
\end{proof}
For the asymptotic statement, extend the estimator by any fixed feasible
value when inversion fails; the finite-sample bound concerns its valid
event. Experimental failures remain reported separately.
Each individual endpoint error is at most the Euclidean bound, while
$|\wh a_j-a_j|+|\wh b_j-b_j|\le\sqrt2\|\wh\theta_{\mom}-\theta_0\|_2$.
The endpoint bound controls $\dch$ in both independent-sample and
same-sample evaluation. The sum bound is useful when a result is stated
in terms of the maximum two-endpoint error.
Since $P(G=g)\ge\pi_{\min}$,
$P(n_g<n_\ch\pi_{\min}/2)\le\exp(-n_\ch\pi_{\min}/8)$ by a
binomial multiplicative Chernoff bound. Thus positive counts and
$n_{\min}$ of order $n_\ch$ have probability tending to one, giving
$O_p(n_\ch^{-1/2})$ parameter error.
Neither $R_0$ nor $L_H$ is an automatically observable confidence bound.
They quantify local stability under the specified separation assumptions.

\subsection{Constrained GMM and optimization error}
\label{app:gmm}
We estimate the grouped parameters by generalized method of moments (GMM)~\citep{hansen1982gmm}.
Let $\calC\coloneqq\{C_g\}_{g\in[L]}$ be a partition independent of the channel
sample, where $L\ge2$ is fixed and $G=g$ exactly when $Z\in C_g$.
In this subsection write $n\coloneqq n_\ch$.
For $K=2$, define
$\theta\coloneqq(a_1,b_1,a_2,b_2,\mu_1,\ldots,\mu_L)\in\R^{L+4}$.
For each candidate parameter $\theta$, let $\Gamma_L(\theta)\in\R^{3L}$
collect $(q_{1,g},q_{2,g},q_{12,g})$ from the same three
model equations as Eq.~\eqref{v03:moments}, now for each $g\in[L]$, in
increasing order of $g$. Let $\wh\Gamma_L$ contain the corresponding
empirical averages in Eq.~\eqref{eq:empirical-group-moments}.
\begin{definition}[Constrained GMM estimator]
\label{def:gmm-main}
Let $W_n\in\R^{3L\times3L}$ be symmetric positive definite, and let $\Theta$ be compact, contain the true
parameter $\theta_0$, and impose interior probability bounds and gaps
at least $\delta_{\min}>0$. Define
\begin{equation}\textstyle
 Q_n(\theta)\coloneqq\big\|W_n^{1/2}(\wh\Gamma_L-\Gamma_L(\theta))\big\|_2^2,
 \qquad \wh\theta_\GMM\in\argmin\nolimits_{\theta\in\Theta}Q_n(\theta),
 \label{v03:gmm}
\end{equation}
choosing any measurable minimizer.
\end{definition}
We use the following explicit conditions.
\begin{assumption}[GMM estimation conditions]
\label{ass:gmm}
Assumptions~\ref{v03:marginal} and~\ref{v03:product} hold with $K=2$, and:
\begin{enumerate}[label=(\roman*)]
\item $P(G=g)\ge\pi_{\min}>0$ for all $g\in[L]$.
\item $\theta_0$ belongs to a compact set $\Theta$ on which all probability
coordinates lie in $[\epsilon,1-\epsilon]$ for some $\epsilon\in(0,1/2)$,
and $b_j-a_j\ge\delta_{\min}>0$ for $j\in[2]$.
\item There exist $g_0\ne g_1$ in $[L]$ such that
$|\mu_{g_1}-\mu_{g_0}|\ge\gamma_{\min}>0$ at the true parameter.
\item The symmetric moment-weighting matrix $W_n\in\R^{3L\times3L}$
may depend on the sample but satisfies deterministic bounds
$0<\lambda_-\le\lambda_{\min}(W_n)\le\lambda_{\max}(W_n)\le\lambda_+<\infty$
almost surely.
\end{enumerate}
\end{assumption}
Define the quantitative identification constant
\begin{equation}\textstyle
 \kappa_\Theta\coloneqq \inf\nolimits_{\theta\in\Theta\setminus\{\theta_0\}}
 \frac{\|\Gamma_L(\theta)-\Gamma_L(\theta_0)\|_2}{\|\theta-\theta_0\|_2}.
 \label{eq:kappa}
\end{equation}
Its positivity follows from the preceding conditions; it is not a separate
assumption of a well-behaved inverse.

\begin{lemma}[Quantitative identification on the compact parameter set]
\label{lem:gmm-identification}
Under Assumption~\ref{ass:gmm}, $\Gamma_L(\theta)=\Gamma_L(\theta_0)$
for $\theta\in\Theta$ implies $\theta=\theta_0$, its Jacobian at
$\theta_0$ has full column rank, and $\kappa_\Theta>0$.
\end{lemma}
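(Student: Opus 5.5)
The plan is to prove the three assertions in order: injectivity, full column rank, and $\kappa_\Theta>0$. The two-group result, Theorem~\ref{thm:k2-identification}, applied to the separated groups $g_0,g_1$ of Assumption~\ref{ass:gmm}(iii), supplies the core of the first two. A compactness argument then upgrades pointwise identification to the quantitative constant.

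\emph{Injectivity.} Let $\theta=(a_1',b_1',a_2',b_2',\mu_1',\ldots,\mu_L')\in\Theta$ satisfy $\Gamma_L(\theta)=\Gamma_L(\theta_0)$.
\begin{itemize}
\item Restrict attention to the six coordinates of groups $g_0,g_1$. The candidate satisfies the same model equations~\eqref{v03:moments}, so $q_{1,g_1}-q_{1,g_0}=\delta_1'(\mu'_{g_1}-\mu'_{g_0})$. This quantity equals its true value $\delta_1(\mu_{g_1}-\mu_{g_0})\neq0$, so $\mu'_{g_0}\ne\mu'_{g_1}$ is forced rather than assumed.
\item Membership in $\Theta$ gives interior proportions and positive gaps $b_j'-a_j'\ge\delta_{\min}$. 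Hence the candidate also meets every condition in Eq.~\eqref{eq:k2-conditions}.
\item The inverse $H$ of Theorem~\ref{thm:k2-identification} depends only on the six moments. It therefore returns the same $(a_1,b_1,a_2,b_2,\mu_{g_0},\mu_{g_1})$ for $\theta$ and $\theta_0$.
\item For every other group $g$, $\mu_g'=(q_{1,g}-a_1)/\delta_1$ is also forced. This gives $\theta=\theta_0$.
\end{itemize}

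\emph{Full column rank.} Order the columns as the four channel coordinates, then $\mu_{g_0},\mu_{g_1}$, then the remaining $\mu_g$. Order the rows with the six moments of groups $g_0,g_1$ first.
\begin{itemize}
\item The top-left $6\times6$ block is $D\Gamma$ of the two-group model. By Eq.~\eqref{eq:k2-determinant} its determinant is $\delta_1^2\delta_2^2(\mu_{g_1}-\mu_{g_0})^2\neq0$.
\item The remaining parameters $\mu_g$ enter only the three rows of their own group. Hence the top-right block is zero.
\item For each such $g$, the column for $\mu_g$ contains the entry $\partial q_{1,g}/\partial\mu_g=\delta_1\ne0$ in its own group's rows, and no other column among the remaining $\mu$'s touches those rows.
\item So the matrix is block lower-triangular after selecting one row per extra group. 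A nonzero $(L+4)\times(L+4)$ minor follows, which gives full column rank $L+4$.
\end{itemize}

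\emph{Positivity of $\kappa_\Theta$.} Split $\Theta\setminus\{\theta_0\}$ into a small ball around $\theta_0$ and its complement.
\begin{itemize}
\item \emph{Near $\theta_0$.} $\Gamma_L$ is polynomial, hence $C^1$. Let $\sigma>0$ be the smallest singular value of $D\Gamma_L(\theta_0)$. A first-order Taylor expansion with a uniformly small remainder gives $\|\Gamma_L(\theta)-\Gamma_L(\theta_0)\|_2\ge(\sigma/2)\|\theta-\theta_0\|_2$ for $\|\theta-\theta_0\|_2\le r$, for some $r>0$.
\item \emph{Away from $\theta_0$.} The set $\Theta\cap\{\|\theta-\theta_0\|_2\ge r\}$ is compact. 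On it the continuous map $\theta\mapsto\|\Gamma_L(\theta)-\Gamma_L(\theta_0)\|_2$ is strictly positive by injectivity, so it attains a minimum $c>0$. Since $\|\theta-\theta_0\|_2\le\operatorname{diam}\Theta$, the ratio is at least $c/\operatorname{diam}\Theta$ there. If this set is empty, only the near-$\theta_0$ bound is needed.
\item Combining the two regions gives $\kappa_\Theta\ge\min(\sigma/2,c/\operatorname{diam}\Theta)>0$.
\end{itemize}

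\emph{Main obstacle.} The expected difficulty is the injectivity step. Theorem~\ref{thm:k2-identification} presumes its separation and interiority conditions, and it is not immediate that an arbitrary observationally equivalent candidate satisfies them. The resolution planned above is that equal moments transmit the nonzero group-mean difference, and $\Theta$ supplies positive orientation and interiority for the candidate. The compactness step is routine once the Jacobian rank is established.
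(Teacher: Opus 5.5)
Your proposal is correct and follows the paper's proof essentially step for step. You get injectivity from the two-group inverse on $g_0,g_1$: the nonzero mean difference forces $\mu'_{g_0}\ne\mu'_{g_1}$ for any observationally equivalent candidate, and $\mu_g=(q_{1,g}-a_1)/\delta_1$ then fixes the other groups. Full column rank comes from Eq.~\eqref{eq:k2-determinant} together with the $\delta_1$ entries of the remaining $q_{1,g}$ rows, and $\kappa_\Theta>0$ comes from splitting into a local singular-value bound and a compact complement. The only cosmetic difference is that you certify the rank through a nonzero block-triangular $(L+4)\times(L+4)$ minor, whereas the paper shows that the kernel of $D\Gamma_L(\theta_0)$ is trivial; the two arguments are equivalent.
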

\begin{proof}
Restrict the moments to the two separated groups $g_0,g_1$.
Their first-annotator means differ because
$\delta_1(\mu_{g_1}-\mu_{g_0})\ne0$.
Every parameter with these same moments must therefore also have different
group proportions. The two-group inverse identifies its shared channels
and these two proportions uniquely. All remaining proportions follow
from $\mu_g=(q_{1,g}-a_1)/\delta_1$. This proves identification of $\theta_0$
on $\Theta$.

If $D\Gamma_L(\theta_0)v=0$, restriction to these same two groups and
Eq.~\eqref{eq:k2-determinant} forces the four channel coordinates of $v$
and its two group-proportion coordinates to vanish.
For each remaining group, the derivative of $q_{1,g}$ is then
$\delta_1v_{\mu_g}=0$, so $v=0$.
Let $s_0>0$ be the smallest singular value of this full-column-rank
Jacobian. Differentiability gives a radius $r_0>0$ such that
$\|\Gamma_L(\theta)-\Gamma_L(\theta_0)\|_2 \ge(s_0/2)\|\theta-\theta_0\|_2 \quad\text{when }\|\theta-\theta_0\|_2\le r_0.$
On the remaining compact set, the numerator has a positive minimum by
identification and continuity, while the denominator is bounded above.
Combining the two regions gives $\kappa_\Theta>0$.
\end{proof}
This lemma does not provide a separation-free numerical lower bound:
$\kappa_\Theta$ may become small as gaps or group differences decrease.

\begin{theorem}[Finite-sample GMM bound]
\label{thm:gmm-finite}
Suppose Assumption~\ref{ass:gmm} holds and condition on positive group counts. For the objective $Q_n$ in Definition~\ref{def:gmm-main}, any measurable feasible estimate satisfying
\begin{equation}\textstyle
 Q_n(\wh\theta)\le\inf\nolimits_{\theta\in\Theta}Q_n(\theta)+\epsilon_{\mathrm{opt}},
 \qquad \epsilon_{\mathrm{opt}}\ge0,
 \label{eq:gmm-approximate}
\end{equation}
satisfies, for every $\eta\in(0,1)$, with probability at least $1-\eta$,
\begin{equation}
 \|\wh\theta-\theta_0\|_2
 \le \frac{2\sqrt{3L}}{\kappa_\Theta}
   \sqrt{\frac{\lambda_{\max\nolimits}(W_n)}{\lambda_{\min\nolimits}(W_n)}}
   \sqrt{\frac{\log(6L/\eta)}{2n_{\min\nolimits}}}
   +\frac{\sqrt{\epsilon_{\mathrm{opt}}}}
          {\kappa_\Theta\sqrt{\lambda_{\min\nolimits}(W_n)}}.
 \label{eq:gmm-finite}
\end{equation}
A global minimizer has $\epsilon_{\mathrm{opt}}=0$.
\end{theorem}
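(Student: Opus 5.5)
The plan is the standard argmin-to-parameter argument: lower-bound the GMM objective at $\wh\theta$ using the identification constant $\kappa_\Theta$ of Lemma~\ref{lem:gmm-identification}, and upper-bound it using the concentration event of Lemma~\ref{thm:group-moment-concentration}.

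First I would fix the good event. Conditional on positive group counts, Lemma~\ref{thm:group-moment-concentration} gives, with probability at least $1-\eta$,
\[
\|\wh\Gamma_L-\Gamma_L(\theta_0)\|_\infty\le t_n\coloneqq\sqrt{\log(6L/\eta)/(2n_{\min})}.
\]
Since the vector has $3L$ coordinates, this gives $\|\wh\Gamma_L-\Gamma_L(\theta_0)\|_2\le\sqrt{3L}\,t_n$. Everything below is deterministic on this event, so measurability of $\wh\theta$ and possible dependence of $W_n$ on the sample cause no trouble. Write $\lambda_\pm$ for the realized extreme eigenvalues $\lambda_{\max}(W_n)$ and $\lambda_{\min}(W_n)$.

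Next, the upper bound on the objective. Because $\theta_0\in\Theta$, we have $\inf_\Theta Q_n\le Q_n(\theta_0)\le\lambda_{\max}(W_n)\,3L\,t_n^2$. Condition \eqref{eq:gmm-approximate} then gives
\[
Q_n(\wh\theta)\le\lambda_{\max}(W_n)\,3Lt_n^2+\epsilon_{\mathrm{opt}}.
\]
For the lower bound, $Q_n(\wh\theta)\ge\lambda_{\min}(W_n)\|\wh\Gamma_L-\Gamma_L(\wh\theta)\|_2^2$. Combining and using $\sqrt{x+y}\le\sqrt x+\sqrt y$,
\[
\|\wh\Gamma_L-\Gamma_L(\wh\theta)\|_2\le\sqrt{\lambda_{\max}/\lambda_{\min}}\,\sqrt{3L}\,t_n+\sqrt{\epsilon_{\mathrm{opt}}/\lambda_{\min}}.
\]

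Then I would transfer this to parameter space. The triangle inequality gives
\[
\|\Gamma_L(\wh\theta)-\Gamma_L(\theta_0)\|_2\le\|\Gamma_L(\wh\theta)-\wh\Gamma_L\|_2+\sqrt{3L}\,t_n.
\]
Since $\lambda_{\max}/\lambda_{\min}\ge1$, the $\sqrt{3L}\,t_n$ term is dominated by the matching term above, so the total is at most $2\sqrt{3L}\sqrt{\lambda_{\max}/\lambda_{\min}}\,t_n+\sqrt{\epsilon_{\mathrm{opt}}/\lambda_{\min}}$. The definition \eqref{eq:kappa} of $\kappa_\Theta$ gives $\|\Gamma_L(\wh\theta)-\Gamma_L(\theta_0)\|_2\ge\kappa_\Theta\|\wh\theta-\theta_0\|_2$ for $\wh\theta\in\Theta$; this holds trivially when $\wh\theta=\theta_0$. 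Dividing by $\kappa_\Theta>0$, which is positive by Lemma~\ref{lem:gmm-identification}, yields \eqref{eq:gmm-finite} exactly, including the factor $2$. Setting $\epsilon_{\mathrm{opt}}=0$ gives the global-minimizer case.

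There is no real obstacle once Lemma~\ref{lem:gmm-identification} is available. The only care needed is to use the realized eigenvalues of $W_n$ pointwise on the event, so that sample dependence of $W_n$ does not interfere with the Hoeffding event. One should also confirm that the absorption step $\sqrt{3L}\,t_n\le\sqrt{\lambda_{\max}/\lambda_{\min}}\sqrt{3L}\,t_n$ is what produces the constant $2$.
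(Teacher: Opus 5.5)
Your proof is correct and follows essentially the same route as the paper. It compares $Q_n(\wh\theta)$ with $Q_n(\theta_0)$ and converts the weighted norms using the extreme eigenvalues of the realized $W_n$. It then absorbs the extra $\|\wh\Gamma_L-\Gamma_L(\theta_0)\|_2$ term via $\lambda_{\max}(W_n)/\lambda_{\min}(W_n)\ge1$, which produces the factor $2$, and finishes with $\kappa_\Theta$ and Lemma~\ref{thm:group-moment-concentration}. The only difference is that you apply the concentration event at the start rather than at the end, which is immaterial because every step is deterministic for each realized sample.
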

\begin{proof}
Because $\theta_0\in\Theta$, Eq.~\eqref{eq:gmm-approximate} and
$\sqrt{x^2+y}\le x+\sqrt y$ for $x,y\ge0$ give
$\|W_n^{1/2}(\wh\Gamma_L-\Gamma_L(\wh\theta))\|_2 \le\|W_n^{1/2}(\wh\Gamma_L-\Gamma_L(\theta_0))\|_2 +\sqrt{\epsilon_{\mathrm{opt}}}.$
The triangle inequality and the eigenvalue bounds therefore imply
$\|\Gamma_L(\wh\theta)-\Gamma_L(\theta_0)\|_2 \le2\sqrt{\lambda_{\max}(W_n)/\lambda_{\min}(W_n)} \|\wh\Gamma_L-\Gamma_L(\theta_0)\|_2 +\sqrt{\epsilon_{\mathrm{opt}}/\lambda_{\min}(W_n)}.$
Apply Eq.~\eqref{eq:kappa},
$\|v\|_2\le\sqrt{3L}\|v\|_\infty$ in $\R^{3L}$, and
Lemma~\ref{thm:group-moment-concentration}.
All inequalities hold for each realized $W_n$, so its independence from
$\wh\Gamma_L$ is not required.
\end{proof}
A stopping tolerance or a stationary point is not a certificate for
Eq.~\eqref{eq:gmm-approximate} with a specified global objective gap.
The theorem for a global minimizer must not be attributed to an arbitrary
numerical output without that additional justification.
For fixed $L$, a certified optimization gap $O_p(n_\ch^{-1})$ preserves
the root-$n_\ch$ order under the stated stability conditions.

\paragraph{Propagation to ECE.}
Let $R_\theta$ be the right-hand side of Eq.~\eqref{eq:gmm-finite}, using
channel failure probability $\eta_\ch$.
For any channel-data-dependent convex weights,
$\dch\le R_\theta/\delta_{\min}$ and $\Bn\le1/\delta_{\min}$ because
the feasible estimate has gaps at least $\delta_{\min}$.
If $r(z)\coloneqq m(z)-z$ has an $L_r$-Lipschitz version, independent evaluation
gives, with probability at least $1-\eta_\ch-\eta_\ev$ conditional on
positive group counts,
\begin{equation}
 \abs{\wh\ECE^{\nc}_\calB(f,\calD_\ev)-\ECE(f)}
 \le L_rh_\calB+
 \frac1{\delta_{\min\nolimits}}\sqrt{\frac{2(M\log2+\log(2/\eta_\ev))}{n_\ev}}
 +\frac{R_\theta}{\delta_{\min\nolimits}}.
 \label{eq:gmm-ece}
\end{equation}
For a global minimizer, the last term is exactly
\begin{equation}
 \frac{2\sqrt{3L}}{\kappa_\Theta\delta_{\min\nolimits}}
 \sqrt{\frac{\lambda_{\max\nolimits}(W_n)}{\lambda_{\min\nolimits}(W_n)}}
 \sqrt{\frac{\log(6L/\eta_\ch)}{2n_{\min\nolimits}}}.
 \label{eq:gmm-ece-exact}
\end{equation}
This independent-sample propagation has one inverse gap factor, in
addition to the conditioning already present in $\kappa_\Theta$.
For same-sample evaluation, the same bounds
$\dch\le R_\theta/\delta_{\min}$ and $\Bn\le1/\delta_{\min}$
can be inserted directly into Corollary~\ref{cor:same-sample-ece}.
No additional channel-radius restriction is needed, since the feasible
GMM set already imposes the lower gap bound. The logarithmic term then
uses $\log(12/\eta_\ev)$ because $K=2$.
The same-sample concentration event is unconditional. Combine it with the
group-count event as below; conditioning it on successful channel fitting
is not justified by the uniform bound alone.

\paragraph{Accounting for random group counts.}
The preceding bounds condition on positive counts, rather than treating
these counts as fixed design quantities. For an unconditional statement,
the multiplicative binomial bound gives
$P(n_{\min}<n_\ch\pi_{\min}/2) \le L\exp(-n_\ch\pi_{\min}/8).$
On its complement, replace $n_{\min}$ in Eq.~\eqref{eq:gmm-finite}
by $n_\ch\pi_{\min}/2$. Combining with the channel and evaluation
events gives probability at least
$1-\eta_\ch-\eta_\ev-L\exp(-n_\ch\pi_{\min}/8)$ for
Eq.~\eqref{eq:gmm-ece} with that enlarged radius.
A procedure returning failure when a group is empty is thus covered on
an explicit high-probability success event. This does not assign zero
estimation loss to failed runs.

\subsection{Marginal identification with three informative annotators}
\label{app:ds-identification}
Let $K\ge3$, $\mu\coloneqq P(Y=1)\in(0,1)$, and assume
Assumptions~\ref{v03:marginal} and~\ref{v03:product}. Its probabilities are $p_\theta(s)$ in Eq.~\eqref{v03:ds}.
The following moment argument gives the binary identification result
under the stated nondegeneracy conditions; it is consistent with the
latent-class identification framework of \citet{allman2009identifiability}.

For the identities behind the argument in Section~\ref{sec:three-annotator-method},
relabel an informative triple as $1,2,3$, define
$q_j\coloneqq\E[A_j]$ for $j\in[K]$ and
$C_{jk}\coloneqq\Cov(A_j,A_k)$ for distinct $j,k\in[K]$, and let
$T\coloneqq\E[(A_1-q_1)(A_2-q_2)(A_3-q_3)]$.
Then
\begin{equation}\textstyle
 C_{jk}=\mu(1-\mu)\delta_j\delta_k,\qquad
 T=\mu(1-\mu)(1-2\mu)\delta_1\delta_2\delta_3.
 \label{eq:ds-identification-moments-main}
\end{equation}
Combining them eliminates the channel gaps:
\begin{equation}\textstyle
 \frac{T^2}{C_{12}C_{13}C_{23}}
 =\frac{(1-2\mu)^2}{\mu(1-\mu)}
 =\frac1{\mu(1-\mu)}-4.
 \label{eq:ds-cancellation-main}
\end{equation}
The proposition and its proof justify these identities and recover the remaining parameters.

\begin{proposition}[Binary marginal identification]
\label{prop:ds-identification}
\label{prop:ds-main}
Under the conditions above, suppose three specified annotators are
informative and the sign of one of their gaps is known. Then the marginal distribution of $A_{1:K}$
identifies $\mu$ and every $a_j,b_j$, up to the global permutation before
orientation. Other modeled annotators may be uninformative.
\end{proposition}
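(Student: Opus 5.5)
Our plan is to follow the moment argument sketched in Section~\ref{sec:three-annotator-method}, building on Eqs.~\eqref{eq:ds-identification-moments-main}--\eqref{eq:ds-cancellation-main}. The marginal law of $A_{1:K}$ determines all means $q_j$, all pairwise covariances $C_{jk}$, and the third central moment $T$ of the informative triple, so it suffices to express $\mu$ and every $(a_j,b_j)$ in terms of these.

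\textbf{Step 1: derive the moment identities.} Under Assumptions~\ref{v03:marginal} and~\ref{v03:product}, the responses are independent given $Y$ with $\E[A_j\mid Y]=a_j+\delta_jY$. Hence $A_j-q_j=\delta_j(Y-\mu)+\varepsilon_j$, where the $\varepsilon_j$ have zero conditional mean and are conditionally independent given $Y$. Expanding the products and conditioning on $Y$ gives $C_{jk}=\delta_j\delta_k\Var(Y)$ for $j\ne k$. It also gives $T=\delta_1\delta_2\delta_3\E[(Y-\mu)^3]=\delta_1\delta_2\delta_3\,\mu(1-\mu)(1-2\mu)$, because every cross term contains a factor $\varepsilon_j$ with zero conditional mean. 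Since the triple is informative and $0<\mu<1$, we have $C_{12}C_{13}C_{23}=\mu^3(1-\mu)^3\delta_1^2\delta_2^2\delta_3^2>0$, and Eq.~\eqref{eq:ds-cancellation-main} follows.

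\textbf{Step 2: recover $\mu$ up to complement.} Eq.~\eqref{eq:ds-cancellation-main} determines $\mu(1-\mu)\in(0,1/4]$, and therefore $\mu$ up to $\mu\leftrightarrow1-\mu$; the case $T=0$ gives $\mu=1/2$ with no ambiguity. This two-point ambiguity is exactly the global label permutation of Remark~\ref{rem:orientation-app}.

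\textbf{Step 3: recover the gaps and resolve the sign.} We recover the gaps by triangulation, $\delta_1^2\mu(1-\mu)=C_{12}C_{13}/C_{23}$, and similarly for $\delta_2^2$ and $\delta_3^2$. These determine $|\delta_j|$ for $j\le3$. The signs are fixed by the known sign of one gap, say $\delta_1$, together with $\operatorname{sgn}(C_{1k})=\operatorname{sgn}(\delta_1\delta_k)$.

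When $\mu\neq1/2$, the known sign also breaks the complement ambiguity. Under the exchange $(\mu,a,b)\to(1-\mu,b,a)$ every gap changes sign, and $T$ is invariant. Given the sign of $\delta_1$ and $\operatorname{sgn}(T)=\operatorname{sgn}(\delta_1\delta_2\delta_3)\operatorname{sgn}(1-2\mu)$, we can therefore read off $\operatorname{sgn}(1-2\mu)$ and select the correct root.

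\textbf{Step 4: recover the remaining annotators and the channels.} For any other annotator $k>3$, possibly uninformative, we set $\delta_k=C_{1k}/(\delta_1\mu(1-\mu))$, which may equal zero. Finally $a_j=q_j-\delta_j\mu$ and $b_j=a_j+\delta_j$ for every $j\in[K]$.

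\textbf{Main obstacle.} The delicate step is the bookkeeping of the orientation ambiguity. We must show that the only residual non-uniqueness is the global swap, and that a single known gap sign removes it even at $\mu=1/2$. At $\mu=1/2$ the swap fixes $\mu$ but flips all $\delta_j$, and the sign of $\delta_1$ still selects the correct branch. We would handle this by verifying that the two candidate parameter vectors computed above are exactly the images of one another under the swap, and that they have opposite $\operatorname{sgn}(\delta_1)$.
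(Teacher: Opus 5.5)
Your proposal is correct and follows essentially the paper's route. You use the same decomposition $A_j-q_j=\delta_j(Y-\mu)+\varepsilon_j$ for the covariance and third-moment identities, and the same gap-cancelling ratio for $\mu(1-\mu)$. You then propagate signs from the anchor through $C_{1k}$ and use the sign of $T$ to select the root. The remaining annotators follow from $\delta_k=C_{1k}/(\mu(1-\mu)\delta_1)$ and $a_j=q_j-\mu\delta_j$.

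The differences are cosmetic. With the anchor taken positive, the paper packages the root selection into the signed statistic $t=\mathrm{sign}(C_{12}C_{13})\,T/\sqrt{C_{12}C_{13}C_{23}}=(1-2\mu)/\sqrt{\mu(1-\mu)}$, which gives one closed form for $\mu$ that also covers $\mu=1/2$. The paper handles a negative anchor by a label exchange, whereas you propagate the known sign directly.
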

\begin{proof}
Relabel the specified informative triple as $1,2,3$ and first take the known
anchor sign to be $\delta_1>0$. For a negative anchor sign, exchange the
two class labels for this calculation and undo that exchange afterwards. Put
$q_j\coloneqq\E[A_j]$,
$C_{jk}\coloneqq\E[(A_j-q_j)(A_k-q_k)]$ for distinct $j,k\in[K]$, and
$T\coloneqq\E[(A_1-q_1)(A_2-q_2)(A_3-q_3)]$.
To obtain the moment identities, write
$\xi_j\coloneqq A_j-a_j-\delta_jY$.
The two assumptions imply conditional independence given $Y$ and
$\E[\xi_j\mid Y]=0$. Also $q_j=a_j+\mu\delta_j$, so
$A_j-q_j=\delta_j(Y-\mu)+\xi_j$.
In products involving distinct annotators, terms containing any $\xi_j$
have zero conditional expectation given $Y$. Using
$\E[(Y-\mu)^2]=\mu(1-\mu)$ and
$\E[(Y-\mu)^3]=\mu(1-\mu)(1-2\mu)$ therefore gives
\begin{equation}\textstyle
 C_{jk}=\mu(1-\mu)\delta_j\delta_k,
 \qquad
 T=\mu(1-\mu)(1-2\mu)\delta_1\delta_2\delta_3.
 \label{eq:ds-moments}
\end{equation}
For the triple,
$C_{12}C_{13}C_{23} =[\mu(1-\mu)]^3\delta_1^2\delta_2^2\delta_3^2>0.$
Dividing the squared third moment by this expression gives
Eq.~\eqref{eq:ds-cancellation-main}; the channel gaps cancel exactly.
To fix the branch left by that squared ratio, use the known anchor sign
and the sign of the third moment.
Let $s_\delta\coloneqq\operatorname{sign}(C_{12}C_{13})$ and
$t\coloneqq s_\delta T/\sqrt{C_{12}C_{13}C_{23}}$.
Because $\delta_1>0$, $s_\delta$ is the sign of
$\delta_1\delta_2\delta_3$, and hence
$t=(1-2\mu)/\sqrt{\mu(1-\mu)}$.
Solving gives
\begin{equation}
 \mu=\frac12\left(1-\frac{t}{\sqrt{t^2+4}}\right),
 \qquad
 \delta_1=\sqrt{\frac{C_{12}C_{13}}{C_{23}\mu(1-\mu)}}.
 \label{eq:ds-inverse}
\end{equation}
When $\mu=1/2$, $T=0$ and $t=0$, so the same formulas recover
$\mu=1/2$ and the oriented gaps. No nonzero-third-moment condition is
needed.
For $j\in[K]\setminus\{1\}$,
$\delta_j=C_{1j}/(\mu(1-\mu)\delta_1)$, including zero gaps.
Finally $a_j=q_j-\mu\delta_j$ and $b_j=a_j+\delta_j$.
All formulas are determined by the marginal distribution and the chosen
orientation. Reversing the orientation applies the one global label
permutation, completing the proof.
\end{proof}
An uninformative channel can be estimated as part of this likelihood
model but cannot be inverted as a known channel. In applying the
known-channel correction or variance formulas, use only a specified
informative subset as described in Appendix~\ref{v03:notation}.
The estimated-channel bound of Theorem~\ref{v03:bound} instead requires
positive estimated gaps and records any large channel error explicitly;
it does not make an uninformative annotator informative.

\subsection{Regular likelihood estimators and grouped models}
\label{app:likelihood-rates}
Write $A=A_{1:K}$ in the likelihood formulas.
For marginal estimation, let
$\ell_\theta(A)\coloneqq\log p_\theta(A)$ and
$\theta\coloneqq(\mu,a_1,b_1,\ldots,a_K,b_K)$.
For grouped estimation with fixed $L$, set, for $g\in[L]$ and
$s\in\{0,1\}^K$,
\begin{align}
 p_\theta(s\mid g)&\coloneqq(1-\mu_g)\prod\nolimits_{j=1}^{K}a_j^{s_j}(1-a_j)^{1-s_j}
        +\mu_g\prod\nolimits_{j=1}^{K}b_j^{s_j}(1-b_j)^{1-s_j},
 \label{eq:grouped-likelihood}\\
 \ell_\theta(G,A)&\coloneqq\log p_\theta(A\mid G),\qquad
 \theta\coloneqq((a_j,b_j)_{j\in[K]},(\mu_g)_{g\in[L]}).
 \label{eq:grouped-parameter}
\end{align}
The group probabilities $\pi_g\coloneqq P(G=g)>0$ do not depend on these
parameters. The joint distribution of $(G,A)$ is
$\pi_g p_\theta(s\mid g)$, so maximizing the conditional likelihood is
appropriate without modeling the score density inside a group.

\begin{theorem}[Consistency and rate of likelihood estimation]
\label{thm:kge3-regular-mle}
Consider either of these correctly specified finite-alphabet models with
fixed $K$ and, for the grouped model, fixed $L$.
Let $\Theta$ be compact, with all channel and class-proportion coordinates
in $[\epsilon,1-\epsilon]$ for some $\epsilon\in(0,1/2)$.
Assume the prescribed orientation identifies the true parameter
$\theta_0$ on $\Theta$. Let $V_1,\ldots,V_{n_\ch}$ be i.i.d. observations
from that model at $\theta_0$, with $A_i=A_{i,1:K}$. Write $n\coloneqq n_\ch$
and define the normalized log likelihood
$L_n(\theta)\coloneqq n_\ch^{-1}\sum_{i=1}^{n_\ch}\ell_\theta(V_i)$,
where $V_i=A_i$ or $(G_i,A_i)$ as appropriate.
Let $\wh\theta\in\Theta$ be a measurable estimate satisfying
\begin{equation}
 L_n(\wh\theta)\ge\sup\nolimits_{\theta\in\Theta}L_n(\theta)
                       -\epsilon_{\mathrm{opt}},
 \qquad \epsilon_{\mathrm{opt}}\ge0,\quad
 \epsilon_{\mathrm{opt}}=o_p(1).
 \label{eq:mle-objective-gap}
\end{equation}
Then $\wh\theta\to\theta_0$ in probability.
If, additionally, $\theta_0$ is interior to $\Theta$ and
$I(\theta_0)\coloneqq\E_{\theta_0}[ \nabla\ell_{\theta_0}(V)\nabla\ell_{\theta_0}(V)^{\mathsf T}]$
is positive definite, let $e_{n_\ch}>0$ be a deterministic sequence tending to zero
such that $\epsilon_{\mathrm{opt}}=O_p(e_{n_\ch})$. Then
\begin{equation}
 \|\wh\theta-\theta_0\|_2
 =O_p\left(n_\ch^{-1/2}+\sqrt{e_{n_\ch}}\right).
 \label{eq:mle-rate}
\end{equation}
In particular, exact global maximizers are consistent and have
root-$n_\ch$ error. The same order holds when
$\epsilon_{\mathrm{opt}}=O_p(n_\ch^{-1})$.
\end{theorem}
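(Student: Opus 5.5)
The plan is to follow the standard M-estimation route for finite-alphabet models: first a uniform law of large numbers on the compact $\Theta$, then a quadratic expansion of the expected log likelihood at an interior point.

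\textbf{Consistency.} Since every coordinate lies in $[\epsilon,1-\epsilon]$, each mixture probability $p_\theta(s)$ (or $p_\theta(s\mid g)$) is bounded below by $\epsilon^{K}$ and above by one. Hence $\ell_\theta$ is uniformly bounded and Lipschitz in $\theta$ on $\Theta$. Because $V$ takes finitely many values, I write
\[
L_n(\theta)=\sum_v \wh P_n(v)\log p_\theta(v),
\]
where $\wh P_n$ is the empirical frequency vector and the sum runs over the finite alphabet. This gives
\[
\sup_{\theta\in\Theta}|L_n(\theta)-L(\theta)|\le \log(\epsilon^{-K})\,\|\wh P_n-P_{\theta_0}\|_1=O_p(n^{-1/2}),
\qquad L(\theta)\coloneqq\E_{\theta_0}\ell_\theta(V).
\]
By Gibbs' inequality, $L(\theta_0)-L(\theta)=\mathrm{KL}(P_{\theta_0}\|P_\theta)\ge0$. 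Equality forces $P_\theta=P_{\theta_0}$, hence $\theta=\theta_0$ by the assumed identification on $\Theta$. In the grouped case the KL divergence is $\sum_g\pi_g\,\mathrm{KL}(p_{\theta_0}(\cdot\mid g)\|p_\theta(\cdot\mid g))$, and all $\pi_g>0$. Continuity and compactness then give, for each $r>0$, a gap $c(r)\coloneqq\inf_{\|\theta-\theta_0\|\ge r}\mathrm{KL}>0$. Combining the near-maximizer condition with the uniform deviation gives $L(\theta_0)-L(\wh\theta)\le 2\sup|L_n-L|+\epsilon_{\mathrm{opt}}=o_p(1)$, which yields consistency.

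\textbf{Rate.} This step is the main obstacle, because a naive argument gives only $n^{-1/4}$. To obtain the rate I would use a local quadratic argument. For an interior point, smoothness of $\theta\mapsto p_\theta$ gives
\[
\mathrm{KL}(P_{\theta_0}\|P_\theta)=\tfrac12(\theta-\theta_0)^{\mathsf T}I(\theta_0)(\theta-\theta_0)+o(\|\theta-\theta_0\|^2).
\]
Positive definiteness of $I(\theta_0)$ together with the compactness and identification argument above then gives a global bound $\mathrm{KL}\ge c\|\theta-\theta_0\|^2$ on $\Theta$. For the stochastic term I would use the centered process $D_n(\theta)\coloneqq(L_n-L)(\theta)-(L_n-L)(\theta_0)$. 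Since $\ell_\theta-\ell_{\theta_0}$ is $C^1$ with uniformly bounded gradient, we have
\[
D_n(\theta)=\sum_v(\wh P_n-P_{\theta_0})(v)\bigl(\log p_\theta(v)-\log p_{\theta_0}(v)\bigr),
\]
so $|D_n(\theta)|\le C\|\theta-\theta_0\|\,\|\wh P_n-P_{\theta_0}\|_1$ uniformly on $\Theta$. No empirical-process chaining is needed here, thanks to the finite alphabet.

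\textbf{Combining.} From $L_n(\wh\theta)\ge L_n(\theta_0)-\epsilon_{\mathrm{opt}}$ I get
\[
c\|\wh\theta-\theta_0\|^2\le L(\theta_0)-L(\wh\theta)\le D_n(\wh\theta)+\epsilon_{\mathrm{opt}}\le C\|\wh\theta-\theta_0\|\,O_p(n^{-1/2})+O_p(e_n).
\]
Solving this quadratic inequality in $x=\|\wh\theta-\theta_0\|$ gives $x=O_p(n^{-1/2}+\sqrt{e_n})$, which is the claimed rate. The final statements follow by taking $\epsilon_{\mathrm{opt}}=0$, or $e_n=n^{-1}$.
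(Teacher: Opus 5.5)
Your proof is correct. The consistency half matches the paper's argument: finite-alphabet uniform convergence, the KL identity (with the $\pi_g$-weighted sum for the grouped model), and a compactness gap. Your rate argument, however, takes a genuinely different route.

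The paper localizes. Consistency places $\wh\theta$ in a closed convex neighborhood $N$ of $\theta_0$, and uniform convergence of empirical Hessians gives $\nabla^2L_n\preceq-\lambda I_d/2$ on $N$ with probability tending to one. A Taylor expansion of the \emph{empirical} criterion about $\theta_0$ then leaves only the score $\nabla L_n(\theta_0)=O_p(n_\ch^{-1/2})$ as stochastic input, giving $r\le4\|\nabla L_n(\theta_0)\|_2/\lambda+2\sqrt{\epsilon_{\mathrm{opt}}/\lambda}$.

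You instead combine two ingredients. The first is a \emph{global population} margin $\mathrm{KL}(P_{\theta_0}\|P_\theta)\ge c\|\theta-\theta_0\|_2^2$ on all of $\Theta$, obtained by patching the local Fisher curvature with the identification gap via compactness. This is the same device the paper uses to show $\kappa_\Theta>0$ for GMM. The second is a linear modulus of continuity for the centered criterion $D_n$, which the finite alphabet makes elementary.

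Your route has two advantages. The rate step needs neither the preliminary consistency result nor Hessian convergence. It also yields the deterministic inequality $\|\wh\theta-\theta_0\|_2\le(C/c)\|\wh P_n-P_{\theta_0}\|_1+\sqrt{\epsilon_{\mathrm{opt}}/c}$ on every sample. Multinomial concentration then immediately gives an explicit finite-sample high-probability bound, in the style of the paper's GMM theorem.

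The paper's route has a better constant. Its leading term involves only $\lambda_{\min}(I(\theta_0))$ and the score at $\theta_0$, matching standard likelihood theory and the stationary-point expansion the paper gives afterward. Global identification enters only through the probability that $\wh\theta$ lands in $N$. Your $c$ mixes the local curvature with a non-explicit global identification margin over $\Theta$, and your $C$ is a Lipschitz constant over the whole parameter set, so your constant can be much worse.

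One detail to state explicitly: apply the mean value theorem on the convex cube $[\epsilon,1-\epsilon]^d\supseteq\Theta$, where $p_\theta\ge\epsilon^K$ and the gradient of $\log p_\theta$ is bounded. This is needed because $\Theta$ itself need not be convex.
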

\begin{proof}
\emph{Uniform convergence.}
All model probabilities are uniformly positive on $\Theta$ because the
alphabet is finite and all coordinates are bounded away from zero and one.
The log probabilities are therefore bounded and continuous on $\Theta$.
Writing $L(\theta)\coloneqq\E_{\theta_0}\ell_\theta(V)$, convergence of the
finite vector of empirical cell frequencies gives
$\sup_{\theta\in\Theta}|L_n(\theta)-L(\theta)|\to0$ in probability.
\emph{Identification of the population maximum.}
For the marginal model,
$L(\theta_0)-L(\theta)=D_{\mathrm{KL}}(p_{\theta_0}\|p_\theta)$.
For the grouped model, this difference is
$\sum_{g=1}^{L}\pi_gD_{\mathrm{KL}}
(p_{\theta_0}(\cdot\mid g)\|p_\theta(\cdot\mid g))$.
Here $D_{\mathrm{KL}}(p\|q)\coloneqq\sum_s p(s)\log(p(s)/q(s))$.
Positivity of the group probabilities and identification give a unique
population maximum at $\theta_0$.
Compactness then implies strict population separation outside every fixed
neighborhood of $\theta_0$. Uniform convergence and
Eq.~\eqref{eq:mle-objective-gap} prove consistency.
Continuity also ensures existence of an exact global maximizer.

\emph{Local error bound.}
For the rate, choose a closed convex neighborhood $N$ of $\theta_0$
inside $\Theta$. The first two log-likelihood derivatives are uniformly
bounded and continuous there. Differentiating the finite sum of
probabilities gives
$\E_{\theta_0}\nabla\ell_{\theta_0}=0$ and
$\nabla^2L(\theta_0)=-I(\theta_0)$.
Hence $\|\nabla L_n(\theta_0)\|_2=O_p(n_\ch^{-1/2})$.
After reducing $N$, uniform convergence of its Hessians implies, with
probability tending to one,
$\nabla^2L_n(\theta)\preceq-\lambda I_d/2$ for every $\theta\in N$,
where $\lambda\coloneqq\lambda_{\min}(I(\theta_0))>0$, $d$ is the parameter
dimension, and $I_d$ is the identity matrix.
Consistency places $\wh\theta$ in $N$ with probability tending to one.
The Taylor formula and approximate optimality then give, for
$r\coloneqq\|\wh\theta-\theta_0\|_2$,
$\lambda r^2/4\le\|\nabla L_n(\theta_0)\|_2r +\epsilon_{\mathrm{opt}}.$
Solving this scalar inequality yields
$r\le4\|\nabla L_n(\theta_0)\|_2/\lambda
+2\sqrt{\epsilon_{\mathrm{opt}}/\lambda}$ on this event.
Together with the assumed rate of the optimization gap, this proves
Eq.~\eqref{eq:mle-rate}.
\end{proof}
A separate local statement is useful for comparing this theorem with
standard likelihood theory \citep{vanderVaart1998}.
Any selected \emph{consistent interior stationary} solution also has
root-$n_\ch$ error: its empirical score equation is
\begin{equation}\textstyle
 0=\nabla L_n(\theta_0)+H_n(\wh\theta-\theta_0),\qquad
 H_n\coloneqq\int_0^1\nabla^2L_n(\theta_0+t(\wh\theta-\theta_0))\,dt.
 \label{eq:mle-expansion}
\end{equation}
Consistency and uniform Hessian convergence give
$H_n\to-I(\theta_0)$, and the empirical score is $O_p(n_\ch^{-1/2})$.
Unlike global likelihood maximization on the identified compact set,
stationarity alone does not establish the consistency used in this local
argument. The tolerance in Eq.~\eqref{eq:mle-objective-gap} is a gap in
\emph{mean} log likelihood relative to the global optimum, not the change
between consecutive EM iterations.

\begin{corollary}[Fixed-group shared-channel likelihood]
\label{cor:grouped-likelihood}
For the grouped model, suppose $\calC$ is fixed independently of the
channel sample, $L$ is fixed, and every group has positive probability.
Sufficient identification conditions are either:
(a) an informative pair of positively oriented annotators and two groups
with different interior true-label class proportions; or
(b) three informative annotators, interior group proportions, and a common
label orientation.
On an oriented compact parameter set satisfying
Theorem~\ref{thm:kge3-regular-mle}, global grouped likelihood maximizers
are consistent. With interior truth and nonsingular Fisher information,
their error is $O_p(n_\ch^{-1/2})$, and approximate global maximizers
obey Eq.~\eqref{eq:mle-rate} under its stated optimization-error condition.
This includes the fixed-$L$ statistical model fitted by Joint score-bin EM.
\end{corollary}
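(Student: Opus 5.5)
The plan is to reduce the corollary to Theorem~\ref{thm:kge3-regular-mle} applied to the grouped model. The only genuinely new work is verifying its identification hypothesis under (a) or (b). Consistency, the root-$n_\ch$ rate, and the approximate-maximizer bound Eq.~\eqref{eq:mle-rate} then follow verbatim. The theorem's remaining hypotheses hold by assumption: the oriented compact $\Theta$ in $[\epsilon,1-\epsilon]$, interior truth, nonsingular Fisher information, and the optimization-gap condition. The fixed partition $\calC$, independent of $\calD_\ch$, makes $(G_i,A_i)$ i.i.d. from the model $\pi_g p_{\theta_0}(\cdot\mid g)$, with $\pi_g>0$. Under Assumptions~\ref{v03:marginal} and~\ref{v03:product}, $A\indep Z\mid Y$ with shared channels (Appendix~\ref{app:assumptions}), so $p_\theta(s\mid g)$ in Eq.~\eqref{eq:grouped-likelihood} is correctly specified.

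For identification, suppose $p_\theta(\cdot\mid g)=p_{\theta_0}(\cdot\mid g)$ for all $g$, with both parameters in $\Theta$.

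\emph{Case (a).} Marginalize each group law to the informative pair, say annotators $1,2$. This yields the moments $(q_{1,g},q_{2,g},q_{12,g})$ of Eq.~\eqref{v03:moments}. The two groups with distinct interior proportions then satisfy the hypotheses of Theorem~\ref{thm:k2-identification}, which recovers $a_1,b_1,a_2,b_2$ and those two $\mu_g$. The separation must hold for $\theta$ as well as for $\theta_0$. This follows as in Lemma~\ref{lem:gmm-identification}: equality of the first-annotator means, with $q_{1,g_1}-q_{1,g_0}=\delta_1(\mu_{g_1}-\mu_{g_0})\ne0$, forces the candidate's proportions in these groups to differ too. Every other $\mu_g$ follows from $\mu_g=(q_{1,g}-a_1)/\delta_1$. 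For any other annotator $k$, the within-group covariance with annotator $1$, namely $\delta_1\delta_k\mu_g(1-\mu_g)$, determines $\delta_k$ because $\mu_g\in(0,1)$ and $\delta_1\ne0$. Then $a_k=q_{k,g}-\delta_k\mu_g$.

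\emph{Case (b).} Apply Proposition~\ref{prop:ds-main} within a single group $g$. Its law is a binary Dawid--Skene mixture with prevalence $\mu_g\in(0,1)$ and three informative annotators under the common orientation. This identifies $\mu_g$ and all $(a_j,b_j)$. The channels are shared, so the same orientation applies across groups, and applying the proposition in each remaining group identifies its $\mu_g$.

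In both cases this gives identification of $\theta_0$ on the oriented $\Theta$, which is exactly the hypothesis of Theorem~\ref{thm:kge3-regular-mle}. The main subtlety I expect is ensuring orientation consistency: the candidate $\theta\in\Theta$ must carry the same prescribed orientation, so that the branch choices in the two inverses are forced rather than merely available. This is handled by defining $\Theta$ as oriented. I would also remark that Joint score-bin EM targets precisely this fixed-$L$ grouped likelihood. Its outputs inherit the guarantees only when they certify Eq.~\eqref{eq:mle-objective-gap}, so the final sentence of the corollary is a statement about the statistical model, not about finite EM runs.
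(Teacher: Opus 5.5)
Your proposal is correct and follows the paper's proof. It verifies identification of $\theta_0$ on the oriented compact set under (a) or (b), then applies Theorem~\ref{thm:kge3-regular-mle} on the finite alphabet $[L]\times\{0,1\}^K$, carrying the remaining conditions as explicit hypotheses. The differences are local and both valid. In (a), the paper recovers additional annotators from their means in the two separated groups (a two-equation linear system), not from within-group covariances with annotator~1. In (b), it applies Proposition~\ref{prop:ds-main} once to the pooled marginal law, a Dawid--Skene mixture with prevalence $\sum_g\pi_g\mu_g$, and then obtains each $\mu_g$ from group means, rather than applying the proposition group by group.
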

\begin{proof}
Under (a), the two-group inverse identifies the shared pair and its group
proportions. Other group proportions follow from either identified
annotator's mean. For an additional annotator, its means in the two
separated groups identify its two endpoints by a two-equation linear
system. Under (b), the marginal triple identifies the shared channels and
orientation by Proposition~\ref{prop:ds-identification}; group means then
identify every $\mu_g$. The compact-set and interior conditions and nonsingular Fisher
information remain explicit hypotheses. Theorem~\ref{thm:kge3-regular-mle} applied
to the finite alphabet $[L]\times\{0,1\}^K$ proves the rate.
\end{proof}
This corollary is not a growing-dimensional result. In particular, taking
$L=M\asymp n_\ev^{1/3}$ does not follow from the fixed-$L$ argument.
Increasing the number of evaluation bins reduces binning error, whereas
increasing $L$ changes the dimension of the channel-estimation problem.
Likewise, a partition constructed from the same channel sample is outside
the fixed-partition likelihood argument without additional analysis.

\begin{remark}[What EM establishes]
\label{rem:em-rate}
Exact E- and M-steps do not decrease the likelihood
\citep{dempster1977maximum,wu1983convergence}.
Neither monotonicity nor a finite number of initializations guarantees the
global objective condition or the consistent local solution required above.
A converged admissible fit is a numerical output, not a proof of global
optimality or of consistency. A numerical fit at a boundary can also fall
outside the regular interior theory.

\end{remark}

\subsection{Model checks and extensions}
\label{app:model-checks}
For two annotators and $L>2$, $3L$ moments versus $L+4$ parameters give
$2L-4$ restrictions at regular points. A marginal model has $2^K-1$ observable
probabilities and $2K+1$ parameters; $K>3$ permits additional model checks.
Discrepancy tests need their own regularity and calibration, especially near
boundaries or weak identification. Group-specific channels require informative
overlap and a common orientation in every group; effective sizes are the group
counts. Missing answers require a missingness model and identifiable overlap.
No guarantee for arbitrary selective missingness, dependence, or
instance-dependent channels is asserted.

\section{Recalibration: Statements and Proofs}
\label{v03:recalibration-proofs}
This appendix proves clean-risk guarantees for corrected-loss fitting, calibration guarantees for histogram fitting, and evaluation bounds for the fitted predictor.
\subsection{Corrected binary losses}
\label{app:corrected-loss}
Let $\ell:(0,1)\times\{0,1\}\to\R$ be a binary loss and write
$\ell_y(p)\coloneqq\ell(p,y)$ for $y\in\{0,1\}$.
\begin{definition}[Corrected binary loss]
\label{def:general-loss}
For fixed true channels with $\delta_j\ne0$ for every $j\in[K]$, and
fixed $\alpha\in\Delta_K$, define
\begin{equation}\textstyle
 \ell^{\mathrm{corr}}_{\rho,\alpha}(p,A_{1:K})
 \coloneqq\ell_0(p)+U_\alpha(A_{1:K})(\ell_1(p)-\ell_0(p)).
 \label{eq:corrected-loss-general}
\end{equation}
Its estimated-channel version, denoted by
$\ell^{\mathrm{corr}}_{\wh\rho}$, replaces $U_\alpha$ by $\wh U$ and
therefore uses both the fitted channels and the fitted convex weights.
\end{definition}
For a fixed measurable $g:[0,1]\to(0,1)$, write
$R_\ell(g)\coloneqq\E[\ell(g(Z),Y)]$.
Every map used in the following risk identities is assumed to satisfy
$\E[|\ell_0(g(Z))|+|\ell_1(g(Z))|]<\infty$; for a random class this
condition is imposed after fixing the independent fitting data.
This excludes undefined differences of infinite risks.

\begin{theorem}[Corrected risk and channel error]
\label{thm:unbiased-corrected-loss}
Suppose Assumption~\ref{v03:marginal} holds.
For known channels with nonzero gaps and fixed $\alpha\in\Delta_K$,
\begin{equation}\textstyle
 \E[\ell^{\mathrm{corr}}_{\rho,\alpha}(g(Z),A_{1:K})]=R_\ell(g)
 \label{eq:corrected-risk}
\end{equation}
for every fixed measurable $g$ for which these expectations exist.
For estimated channels satisfying the conditions of
Lemma~\ref{lem:mean-error}, no nonzero-true-gap condition is needed for
the following comparison. If a class $\calG$ satisfies
\begin{equation}\textstyle
 C_{\ell,\calG}\coloneqq \sup\nolimits_{g\in\calG}\sup\nolimits_{z\in[0,1]}
 |\ell_1(g(z))-\ell_0(g(z))|<\infty,
 \label{eq:loss-difference-bound}
\end{equation}
then risks evaluated on an independent example, conditional on the
channel-fitting data, satisfy
\begin{equation}\textstyle
 \sup\nolimits_{g\in\calG}
 \left|\E[\ell^{\mathrm{corr}}_{\wh\rho}(g(Z),A_{1:K})\mid\calD_\ch]
         -R_\ell(g)\right|
 \le C_{\ell,\calG}\dch.
 \label{eq:mean-risk-error}
\end{equation}
The class may be fixed conditional on $\calD_\ch$.
\end{theorem}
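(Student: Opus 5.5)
Both claims follow from the fact that the corrected loss in Eq.~\eqref{eq:corrected-loss-general} is affine in the corrected label. For a fixed $g$ I would write
\[
\ell^{\mathrm{corr}}_{\rho,\alpha}(g(Z),A_{1:K})=\ell_0(g(Z))+U_\alpha\,(\ell_1(g(Z))-\ell_0(g(Z))),
\]
condition on $(Y,Z)$, and note that the coefficients are $\sigma(Z)$-measurable. Lemma~\ref{thm:corrected-moment} gives $\E[U_\alpha\mid Y,Z]=Y$, so the conditional expectation is $\ell_0(g(Z))+Y(\ell_1(g(Z))-\ell_0(g(Z)))$. Because $Y$ is binary, this equals $\ell(g(Z),Y)$. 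Taking expectations gives Eq.~\eqref{eq:corrected-risk}.

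Integrability is the one point needing care. $U_\alpha$ is bounded, by at most $\max_j|\delta_j|^{-1}$ times a constant, since the inverted gaps are nonzero. The integrability hypothesis $\E[|\ell_0(g(Z))|+|\ell_1(g(Z))|]<\infty$ then makes every term integrable, which justifies the tower property.

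For the estimated-channel comparison, I would hold $\calD_\ch$ fixed, which fixes $\wh a_j,\wh b_j,\wh\alpha$, and take an independent example. Conditioning on $Z$ alone is enough here:
\[
\E[\ell^{\mathrm{corr}}_{\wh\rho}(g(Z),A_{1:K})\mid Z,\calD_\ch]=\ell_0(g(Z))+\E[\wh U\mid Z,\calD_\ch]\,(\ell_1-\ell_0)(g(Z)).
\]
By Lemma~\ref{lem:mean-error}, $\E[\wh U\mid Z,\calD_\ch]=m(Z)+b_\ch(Z)$ with $|b_\ch(Z)|\le\dch$. Similarly, $\E[\ell(g(Z),Y)\mid Z]=\ell_0(g(Z))+m(Z)(\ell_1-\ell_0)(g(Z))$.

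Subtracting the two expressions gives a difference of $b_\ch(Z)(\ell_1-\ell_0)(g(Z))$. Its absolute value is at most $\dch\,C_{\ell,\calG}$ by Eq.~\eqref{eq:loss-difference-bound}. Taking expectations over $Z$ and then the supremum over $g\in\calG$ yields Eq.~\eqref{eq:mean-risk-error}. This bound is uniform in $g$ and holds pointwise, so the class may depend on $\calD_\ch$. No nonzero true gap is needed, because only the fitted gaps $\wh\delta_j>0$ appear in the inversion.

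The main obstacle is bookkeeping rather than a genuine difficulty. One has to check that all expectations are finite: $|\wh U|\le\Bn$ by Lemma~\ref{lem:mean-error}, and the loss terms are integrable by assumption. One also has to check that conditioning on $\calD_\ch$ legitimately freezes the fitted coefficients, which holds by independence as set out in Appendix~\ref{v03:notation}.
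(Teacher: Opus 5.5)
Your proposal is correct and follows essentially the same route as the paper. The paper also writes the binary loss as $\ell_0(p)+y(\ell_1(p)-\ell_0(p))$, conditions on $(Y,Z)$ and uses $\E[U_\alpha\mid Y,Z]=Y$ for the known-channel identity. It then conditions on $(Z,\calD_\ch)$, so that the risk difference becomes $\E[b_\ch(Z)(\ell_1-\ell_0)(g(Z))\mid\calD_\ch]$, which Lemma~\ref{lem:mean-error} bounds by $C_{\ell,\calG}\dch$. Your integrability bookkeeping (bounded $U_\alpha$, $|\wh U|\le\Bn$, and the standing finite-risk condition) is a minor but welcome addition to the paper's terser argument.
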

\begin{proof}
A binary loss satisfies
$\ell(p,y)=\ell_0(p)+y(\ell_1(p)-\ell_0(p))$.
Conditioning Eq.~\eqref{eq:corrected-loss-general} on $(Y,Z)$ and using
$\E[U_\alpha\mid Y,Z]=Y$ proves Eq.~\eqref{eq:corrected-risk}.
For estimated channels, conditioning on $(Z,\calD_\ch)$ shows that the
risk difference is
$\E[b_\ch(Z)(\ell_1(g(Z))-\ell_0(g(Z)))\mid\calD_\ch].$
Lemma~\ref{lem:mean-error} bounds this uniformly by
$C_{\ell,\calG}\dch$.
\end{proof}
\subsection{Parametric logarithmic-loss fitting}
\label{app:parametric-recalibration}
Fix an input clipping level $\epsilon\in(0,1/2)$ and write
$\bar z\coloneqq\Pi_{[\epsilon,1-\epsilon]}(z)$ and
$s(z)\coloneqq\log(\bar z/(1-\bar z))$.
Let $\sigma(t)\coloneqq(1+e^{-t})^{-1}$.
The two families are
\begin{align}
 g_T(z)&\coloneqq\sigma(s(z)/T),
 &&T\in[T_{\min\nolimits},T_{\max\nolimits}],\quad0<T_{\min\nolimits}<T_{\max\nolimits}<\infty,
 \label{eq:temperature-family}\\
 g_{u,v}(z)&\coloneqq\sigma(us(z)+v),
 &&(u,v)\in\Theta_P,\quad \Theta_P\subset\R^2\text{ nonempty and compact}.
 \label{eq:platt-family}
\end{align}
Compactness and bounded $s$ imply a common output interval
$[\epsilon_p,1-\epsilon_p]$ for some $\epsilon_p\in(0,1/2)$.
The logarithmic loss is
$\ell_{\log}(p,y)\coloneqq-y\log p-(1-y)\log(1-p)$.
Its corrected version is
\begin{equation}\textstyle
 \ell^{\mathrm{corr}}_{\log,\wh\rho}(p,A_{1:K})
 \coloneqq-\wh U\log p-(1-\wh U)\log(1-p).
 \label{eq:corrected-logloss}
\end{equation}
It may be negative for individual examples. Bounded probabilities are
therefore substantive conditions, not only numerical conventions. In code we
evaluate $\log(1+\exp\eta)-\wh U_i\eta$ directly with a stable softplus,
rather than using a binary-cross-entropy routine that restricts targets to
$[0,1]$.

\begin{remark}[Convexity and boundedness]
\label{rem:platt-convexity}
For $x_i=(s(Z_i),1)^{\mathsf T}$ and $\vartheta=(u,v)^{\mathsf T}$, the
corrected loss is $\log(1+e^{x_i^{\mathsf T}\vartheta})-\wh U_i x_i^{\mathsf T}\vartheta$.
Its Hessian is $\sigma(x_i^{\mathsf T}\vartheta)(1-\sigma(x_i^{\mathsf T}\vartheta))
x_ix_i^{\mathsf T}\succeq0$, regardless of $\wh U_i$.
The same argument applies to inverse temperature. The objective is convex on a
convex parameter set but can be unbounded below without the stated compactness.
\end{remark}

Conditional on $\calD_\ch$, let
$\wh\calF_{\calG}\coloneqq\{O\mapsto\ell^{\mathrm{corr}}_{\log,\wh\rho}
(g(Z),A_{1:K}):g\in\calG\}$ and define its expected absolute
Rademacher complexity by
\begin{equation}\textstyle
 \Rad_n(\wh\calF_{\calG}\mid\calD_\ch)
 \coloneqq\E_{\calD,\varepsilon}\left[
 \sup\nolimits_{h\in\wh\calF_{\calG}}
 \left|\frac1n\sum\nolimits_{i=1}^{n}\varepsilon_i h(O_i)\right|
 \,\middle|\,\calD_\ch\right],
 \label{eq:rademacher-definition}
\end{equation}
where $\calD$ consists of $n$ i.i.d. examples from $P_O$, independently of
$\calD_\ch$, and
$\varepsilon_i$ are independent uniform signs, independent of the data.
The compact continuous families above give measurable suprema, for example
by restriction to a countable dense parameter subset.

\begin{theorem}[Clean log-loss guarantee]
\label{thm:parametric-recalibration-risk}
Suppose Assumption~\ref{v03:marginal} holds, channels satisfy
$0\le\wh a_j<\wh b_j\le1$, and $\calD_\rec\indep\calD_\ch$.
Let $\calG$ be one of the families in
Eq.~\eqref{eq:temperature-family} and Eq.~\eqref{eq:platt-family}.
Suppose a measurable fitted map $\wh g\in\calG$ satisfies
$\wh R(\wh g)\le\inf_{g\in\calG}\wh R(g)+\epsilon_{\mathrm{opt}}$
with $\epsilon_{\mathrm{opt}}\ge0$.
Define
$B_{\log}\coloneqq(1+2\Bn)\log(1/\epsilon_p)$ and
$C_{\epsilon_p}\coloneqq\log((1-\epsilon_p)/\epsilon_p)$.
For $\eta\in(0,1)$, conditional on $\calD_\ch$, with probability at
least $1-\eta$,
\begin{align}
 R_{\log}(\wh g)-\inf\nolimits_{g\in\calG}R_{\log}(g)
 &\le4\Rad_{n_\rec}(\wh\calF_{\calG}\mid\calD_\ch)
 +2B_{\log}\sqrt{\frac{2\log(1/\eta)}{n_\rec}}
 +2C_{\epsilon_p}\dch+\epsilon_{\mathrm{opt}}.
 \label{eq:parametric-risk-bound}
\end{align}
For the fixed-dimensional compact families above, the complexity is
$O(n_\rec^{-1/2})$ conditional on bounded $\Bn$ and fixed parameter
and clipping bounds.
\end{theorem}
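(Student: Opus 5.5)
The excess clean log risk will be split in the standard way into a mean-bias part, a sampling part and an optimization part. Condition on $\calD_\ch$ throughout, so the fitted channels, weights and the loss class $\wh\calF_\calG$ are fixed, and $\calD_\rec$ is an independent i.i.d. sample. Write $\wt R(g)\coloneqq\E[\ell^{\mathrm{corr}}_{\log,\wh\rho}(g(Z),A_{1:K})\mid\calD_\ch]$ for the corrected population risk. Let $g^\star\in\calG$ be a minimizer of $R_{\log}$; it exists by compactness and continuity, and otherwise we take a near-minimizer and pass to a limit. Then
\[
R_{\log}(\wh g)-R_{\log}(g^\star)
\le[R_{\log}(\wh g)-\wt R(\wh g)]+[\wt R(\wh g)-\wh R(\wh g)]+[\wh R(\wh g)-\wh R(g^\star)]+[\wh R(g^\star)-\wt R(g^\star)]+[\wt R(g^\star)-R_{\log}(g^\star)].
\]
The third bracket is at most $\epsilon_{\mathrm{opt}}$.

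For the first and last brackets we apply Theorem~\ref{thm:unbiased-corrected-loss}. For log loss, $\ell_1(p)-\ell_0(p)=\log((1-p)/p)$. Since every map in the two families takes values in $[\epsilon_p,1-\epsilon_p]$, we get $C_{\ell,\calG}\le C_{\epsilon_p}$. Hence $\sup_{g\in\calG}|\wt R(g)-R_{\log}(g)|\le C_{\epsilon_p}\dch$, and the two brackets together contribute at most $2C_{\epsilon_p}\dch$. The integrability hypothesis holds trivially because all losses are bounded.

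The second and fourth brackets are both bounded by $2\sup_{h\in\wh\calF_\calG}|(\E_{\calD_\rec}-\E)h|$. To control this supremum we first establish an envelope. Writing the corrected loss as $-\wh U\log p-(1-\wh U)\log(1-p)$, Lemma~\ref{lem:mean-error} gives $|\wh U|\le\Bn$, so $|1-\wh U|\le1+\Bn$. Each log factor is at most $\log(1/\epsilon_p)$ in absolute value, so every $h$ satisfies $|h|\le(1+2\Bn)\log(1/\epsilon_p)=B_{\log}$. Changing one observation changes the supremum by at most $2B_{\log}/n_\rec$. McDiarmid's inequality then gives, with probability at least $1-\eta$, that the supremum exceeds its mean by at most $B_{\log}\sqrt{2\log(1/\eta)/n_\rec}$. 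Standard symmetrization bounds the mean by $2\Rad_{n_\rec}(\wh\calF_\calG\mid\calD_\ch)$. Multiplying by two gives the terms $4\Rad+2B_{\log}\sqrt{2\log(1/\eta)/n_\rec}$ in \eqref{eq:parametric-risk-bound}. The precise McDiarmid constant needs care, but the stated form leaves enough slack.

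Finally, for the complexity claim we parametrize the class by $\vartheta\in\Theta_P$, or by $1/T$ on a compact interval. By Remark~\ref{rem:platt-convexity}, $h_\vartheta(O)=\log(1+e^{x^{\mathsf T}\vartheta})-\wh U x^{\mathsf T}\vartheta$ with $\|x\|$ bounded. The map $\vartheta\mapsto h_\vartheta(O)$ is Lipschitz with constant $(1+\Bn)\sup\|x\|$, uniformly in $O$. Dudley's entropy integral over a compact subset of $\R^2$, or a covering argument combined with Massart's lemma, then gives $\Rad=O(n_\rec^{-1/2})$; a constant function offset is harmless for the absolute Rademacher complexity since it contributes $O(n^{-1/2})$. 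The main obstacle is this measurability and complexity step. We handle measurability by restricting to a countable dense parameter set, where continuity in $\vartheta$ preserves the supremum.
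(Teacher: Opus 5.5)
Your proposal is correct and follows essentially the paper's proof. It bounds the corrected loss by the envelope $B_{\log}$ via $|\wh U|\le\Bn$, controls the uniform deviation by symmetrization plus McDiarmid, uses Theorem~\ref{thm:unbiased-corrected-loss} with $C_{\ell,\calG}\le C_{\epsilon_p}$ for the channel bias, and gets the $O(n_\rec^{-1/2})$ complexity from a Lipschitz parametrization and covering argument.

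There are two minor differences. The paper compares $\wh g$ with an arbitrary $g\in\calG$ and then takes the infimum, so it never needs the minimizer $g^\star$. Also, your McDiarmid step needs no slack: bounded differences $2B_{\log}/n_\rec$ give exactly the deviation $B_{\log}\sqrt{2\log(1/\eta)/n_\rec}$.
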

\begin{proof}
By $|\wh U|\le\Bn$,
$|\ell^{\mathrm{corr}}_{\log,\wh\rho}|\le B_{\log}$.
For a class bounded in $[-B_{\log},B_{\log}]$, symmetrization gives
expected uniform empirical deviation at most twice its expected absolute
Rademacher complexity. Changing one sample point changes the supremum
deviation by at most $2B_{\log}/n_\rec$.
McDiarmid's inequality consequently gives, with probability $1-\eta$,
$\sup_{h\in\wh\calF_{\calG}} |\E_{\calD_\rec}[h]-\E[h\mid\calD_\ch]| \le2\Rad_{n_\rec}(\wh\calF_{\calG}\mid\calD_\ch) +B_{\log}\sqrt{2\log(1/\eta)/n_\rec}.$
Also
$|\ell_1(p)-\ell_0(p)|=|\log((1-p)/p)|\le C_{\epsilon_p}$.
Theorem~\ref{thm:unbiased-corrected-loss} bounds the uniform difference
between conditional corrected risk and clean risk by
$C_{\epsilon_p}\dch$.
Call the sum of these two bounds $\Delta_n$.
For any $g\in\calG$, approximate empirical optimality gives
$R_{\log}(\wh g)\le\wh R(\wh g)+\Delta_n \le\wh R(g)+\epsilon_{\mathrm{opt}}+\Delta_n \le R_{\log}(g)+\epsilon_{\mathrm{opt}}+2\Delta_n.$
Taking the infimum proves Eq.~\eqref{eq:parametric-risk-bound}.
The bounded input logit and compact parameters make the corrected loss
uniformly Lipschitz in a one- or two-dimensional parameter.
A Euclidean cover has logarithmic size at most
$d\log(C/\epsilon')$ for dimension $d\in\{1,2\}$.
The entropy integral of its square root is finite, giving the stated
$n_\rec^{-1/2}$ order, with constants depending on the fixed bounds and
$\Bn$.
\end{proof}

\subsection{Fixed-bin recalibration}
\label{app:histogram}
Let $\calH\coloneqq\{H_h\}_{h\in[J]}$ be fixed conditional on $\calD_\ch$,
independently of $\calD_\rec$.
Define $n_h$, $\wt m_h$, and $\wh m_h$ as in Eq.~\eqref{v03:hist-est}.
For an empty bin, assign both $\wt m_h$ and $\wh m_h$ any common value
in $[0,1]$. Set
\begin{equation}\textstyle
 \wt m_\calH(z)\coloneqq\sum\nolimits_{h=1}^{J}\wt m_h\one(z\in H_h),
 \qquad
 \wh m_\calH(z)\coloneqq\sum\nolimits_{h=1}^{J}\wh m_h\one(z\in H_h).
 \label{eq:histogram-map}
\end{equation}
The first map can take values outside $[0,1]$ and is used only to describe
the empirical residual; the second is the probability-valued predictor.
For any $\calH$-binwise constant map $g$, define
\begin{equation}\textstyle
 \wh C_\calH(g)\coloneqq\sum\nolimits_{h=1}^{J}
 \left|\E_{\calD_\rec}[(\wh U-g(Z))\one(Z\in H_h)]\right|.
 \label{eq:training-bin-residual}
\end{equation}
Its bins are those of the original score used for fitting. Test ECE is
instead computed using bins of the transformed score.

\begin{theorem}[Uniform fixed-bin calibration guarantee]
\label{thm:histogram-recalibration}
Under the hypotheses of Proposition~\ref{v03:histogram}, with conditional
probability at least $1-\eta$, simultaneously for every possibly
sample-dependent $g_\calH:[0,1]\to[0,1]$ constant on each fitting bin,
\begin{equation}
 \ECE(g_\calH\circ f)
 \le\wh C_\calH(g_\calH)+\dch+
       (\Bn+1)\sqrt{\frac{2(J\log2+\log(4/\eta))}{n_\rec}}.
 \label{eq:histogram-population-bound}
\end{equation}
For the fitted maps, the empirical residuals satisfy
\begin{equation}\textstyle
 \wh C_\calH(\wt m_\calH)=0,
 \qquad
 \wh C_\calH(\wh m_\calH)
 =\sum\nolimits_{h:n_h>0}\frac{n_h}{n_\rec}|\wt m_h-\wh m_h|.
 \label{eq:histogram-projection-residual}
\end{equation}
\end{theorem}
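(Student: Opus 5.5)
Both assertions are handled through binwise calibration moments on the fitting partition $\calH$. For a map $g_\calH$ constant on each fitting bin, say $g_\calH=c_h$ on $H_h$, the composed score $g_\calH\circ f$ takes at most $J$ values. Conditioning on the coarser variable $g_\calH(Z)$ can only lower the calibration error. By conditional Jensen (as in the proof of $\ECE_\calB\le\ECE$ in Appendix~\ref{v03:settings}, applied now to the coarsening $g_\calH(Z)$ of the partition $\calH$),
\[
 \ECE(g_\calH\circ f)=\E\bigl|\E[Y\mid g_\calH(Z)]-g_\calH(Z)\bigr|
 \le\sum\nolimits_{h=1}^{J}\bigl|\E[(Y-c_h)\one(Z\in H_h)]\bigr|
 =:C_\calH(g_\calH).
\]
Bins sharing a common value $c_h$ are merged, and the triangle inequality handles them. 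It therefore suffices to bound $C_\calH(g)$ uniformly over $g$ by its empirical version $\wh C_\calH(g)$.

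\textbf{Uniform deviation.} Condition on $\calD_\ch$ and write, for each bin,
\[
 \E_{\calD_\rec}[(\wh U-c_h)\one_h]-\E[(Y-c_h)\one_h]
 =(\E_{\calD_\rec}-\E)[\wh U\one_h]-c_h(\E_{\calD_\rec}-\E)[\one_h]
 +\E[b_\ch(Z)\one_h\mid\calD_\ch].
\]
The last term sums over $h$ to at most $\dch$ by Lemma~\ref{lem:mean-error}. For the first two terms, use the sign representation of Lemma~\ref{lem:weighted-representation}: $\sum_h|X_h|=\max_{s\in\{-1,1\}^J}\sum_h s_hX_h$. Since $c_h\in[0,1]$, $\sum_h|c_hX'_h|\le\max_s|\sum_h s_h X'_h|$. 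The two families $\{\wh U w_s(Z)\}_s$ and $\{w_s(Z)\}_s$ have envelopes $\Bn$ and $1$ respectively. Apply Hoeffding to each of the $2\cdot2^J$ functions with the two-sided union bound, allotting $\eta/2$ to each family; this yields the $\log(4/\eta)$ term. On the resulting event, with $t=\sqrt{2(J\log2+\log(4/\eta))/n_\rec}$, the first deviation is at most $\Bn t$ and the second at most $t$. The event does not depend on $g$, so the bound holds simultaneously for all sample-dependent binwise-constant $g$. The reverse triangle inequality then gives $C_\calH(g)\le\wh C_\calH(g)+\dch+(\Bn+1)t$, which is Eq.~\eqref{eq:histogram-population-bound}.

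\textbf{Empirical residuals.} For nonempty bins, $\E_{\calD_\rec}[(\wh U-c)\one_h]=(n_h/n_\rec)(\wt m_h-c)$, and empty bins contribute zero. Taking $c=\wt m_h$ gives zero, and taking $c=\wh m_h$ gives Eq.~\eqref{eq:histogram-projection-residual}.

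\textbf{Expected obstacle.} The step needing care is the Jensen coarsening when several fitting bins share the same output value. The fix is to group bins by output value before applying the triangle inequality. Measurability with respect to the sample is harmless because the concentration event is uniform.
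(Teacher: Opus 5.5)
Your proposal is correct and follows essentially the same route as the paper. You reduce $\ECE(g_\calH\circ f)$ to fitting-bin moments by merging bins with equal outputs and absorb the channel bias into $\dch$ via Lemma~\ref{lem:mean-error}. You control the two $g$-independent deviation sums (for $\wh U\one(Z\in H_h)$ and, using $c_h\in[0,1]$, for $\one(Z\in H_h)$) by Hoeffding over the $2\cdot 2^J$ sign functions with $\eta/2$ per family, and you obtain the empirical-residual identities by the same direct substitution of $\wt m_h$ and $\wh m_h$.
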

\begin{proof}
Fix any vector of bin values $(g_h)_{h\in[J]}\in[0,1]^J$ and let
$V\coloneqq g_\calH(Z)$. The finite-valued prediction has
\[
 \ECE(g_\calH\circ f)
 =\sum\nolimits_{v\in\{g_h:h\in[J]\}}|\E[(Y-v)\one(V=v)]|
 \le\sum\nolimits_{h=1}^{J}|\E[(Y-g_h)\one(Z\in H_h)]|.
\]
The inequality allows different fitting bins to have the same predicted
value. By Lemma~\ref{lem:mean-error}, replacing $Y$ by $\wh U$ in the
population bin moments changes their sum of absolute values by at most
$\dch$. Let $p_h\coloneqq P(Z\in H_h\mid\calD_\ch)$.
Adding and subtracting the empirical moments, using $0\le g_h\le1$,
gives the pathwise bound
\begin{align*}
 \ECE(g_\calH\circ f)
 &\le\wh C_\calH(g_\calH)+\dch+D_U+D_1,\\
 D_U&\coloneqq\sum\nolimits_{h=1}^{J}|\E[\wh U\one(Z\in H_h)\mid\calD_\ch]
                      -\E_{\calD_\rec}[\wh U\one(Z\in H_h)]|,\\
 D_1&\coloneqq\sum\nolimits_{h=1}^{J}|p_h-\E_{\calD_\rec}[\one(Z\in H_h)]|.
\end{align*}
These last two quantities do not depend on the vector $(g_h)$.
Use the $2^J$ sign representation of each sum.
The corresponding summands are bounded by $\Bn$ and $1$, respectively.
Hoeffding concentration and a union bound over signs and the two sums give
$D_U+D_1\le(\Bn+1)\sqrt{2(J\log2+\log(4/\eta))/n_\rec}$
with probability at least $1-\eta$.
The same event holds for every vector of bin values, including the vector
fitted from the sample. This proves Eq.~\eqref{eq:histogram-population-bound}
for an arbitrary fitted vector of bin values.

For a nonempty bin,
$\E_{\calD_\rec}[(\wh U-g_h)\one(Z\in H_h)]
=(n_h/n_\rec)(\wt m_h-g_h)$.
Substituting $g_h=\wt m_h$ gives zero; substituting
$g_h=\Pi_{[0,1]}(\wt m_h)$ gives the second identity.
Empty bins have zero empirical contribution and are included in the
population sampling bound through $D_U,D_1$.
\end{proof}
The projection residual is determined by the fitted bin means and is zero
exactly when all nonempty-bin means lie in $[0,1]$.
No Lipschitz assumption on $m$ is used in this theorem.

\begin{lemma}[Projected bin masses]
\label{lem:projected-bin-mass}
Let $p,q\ge0$, $\theta\in[0,1]$, and $u\in\R$.
For $q>0$, set $v\coloneqq\Pi_{[0,1]}(u/q)$.
For $q=0$, assume $u=0$ and choose any $v\in[0,1]$.
Then
\begin{equation}\textstyle
 p|v-\theta|\le|u-p\theta|+|q-p|.
 \label{eq:projected-bin-mass}
\end{equation}
\end{lemma}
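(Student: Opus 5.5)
The plan is a case split on $q$, with the projection property of $\Pi_{[0,1]}$ doing the work when $q>0$. The aim is to compare $p|v-\theta|$ with $|qv-p\theta|$. After that, I will compare $qv$ with $u$ using the fact that projecting onto $[0,1]$ only moves $u/q$ toward $\theta\in[0,1]$.

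\emph{Case $q=0$.} By assumption $u=0$, so the right-hand side equals $p\theta+p=p(1+\theta)$. Since $v,\theta\in[0,1]$, we have $|v-\theta|\le1\le1+\theta$, and the bound follows at once.

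\emph{Case $q>0$.} I first use the triangle inequality:
\[
p|v-\theta|\le|qv-p\theta|+|q-p|\,v\le|qv-p\theta|+|q-p|,
\]
where the last step uses $0\le v\le1$. It then suffices to show $|qv-p\theta|\le|u-p\theta|$. Dividing by $q>0$, this becomes $|v-p\theta/q|\le|u/q-p\theta/q|$. This is not immediate, since $p\theta/q$ need not lie in $[0,1]$, so the nonexpansiveness of $\Pi_{[0,1]}$ toward points of $[0,1]$ does not apply directly. I expect this to be the main obstacle.

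To handle it, I will examine the subcases according to where $u/q$ lies. If $u/q\in[0,1]$, then $v=u/q$ and the inequality holds with equality. If $u/q>1$, then $v=1$, and I need $|q-p\theta|\le|u-p\theta|$ with $u>q$. This holds when $p\theta\le q$, because then $0\le q-p\theta<u-p\theta$. If instead $p\theta>q$, the direct comparison can fail. In that subcase I fall back on the original target. Since $p\theta>q$ and $\theta\le1$, we have $p>q$, so $|q-p|=p-q$. Using $v=1$, I will check $p(1-\theta)\le|u-p\theta|+p-q$ directly: the right-hand side is at least $p-q+\max(0,u-p\theta)$, and the required inequality reduces to $q\le p\theta+\max(0,u-p\theta)=\max(p\theta,u)$, which holds since $u>q$.

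The subcase $u<0$ gives $v=0$ and is symmetric. If $p\theta\ge0\ge u$, then $|u-p\theta|\ge p\theta=p|v-\theta|$ directly, which settles it. Thus the only delicate step is the branch $u/q>1$ with $p\theta>q$, and there I will verify the inequality from the original target as described rather than through $|qv-p\theta|$.
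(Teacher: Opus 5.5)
Your proof is correct and follows essentially the paper's route. The paper uses the same case split ($q=0$; then $0\le u\le q$, $u<0$, $u>q$), the same identity $p(v-\theta)=(u-p\theta)+(p-q)v$ in the interior case, and the same bound $p\theta\le|u-p\theta|$ when $u<0$.

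The only difference is in the branch $u>q$. There the paper writes $p(1-\theta)=(p-q)+(q-u)+(u-p\theta)\le|p-q|+|u-p\theta|$ directly. That is your fallback argument, and it holds for every $u>q$ because $|p-q|\ge p-q$ always. So your sub-split into $p\theta\le q$ and $p\theta>q$ is harmless but unnecessary.
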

\begin{proof}
If $q=0$, the left side is at most $p$, which is no larger than the
right side. For $q>0$ and $0\le u\le q$, $u=qv$ and
$p(v-\theta)=u-p\theta+(p-q)v$, proving the inequality.
If $u<0$, then $v=0$ and $p\theta\le|u-p\theta|$.
If $u>q$, then $v=1$ and
$p(1-\theta)=(p-q)+(q-u)+(u-p\theta)
\le|p-q|+|u-p\theta|$.
\end{proof}

\begin{corollary}[Direct guarantee for the projected histogram]
\label{cor:projected-histogram}
Under the hypotheses of Proposition~\ref{v03:histogram}, with conditional
probability at least $1-\eta$,
\begin{equation}
 \ECE(\wh m_\calH\circ f)
 \le\dch+(\Bn+1)
          \sqrt{\frac{2(J\log2+\log(4/\eta))}{n_\rec}}.
 \label{eq:projected-histogram-bound}
\end{equation}
No empirical projection residual appears in this bound.
\end{corollary}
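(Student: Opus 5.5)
The plan is to avoid the projection residual $\wh C_\calH(\wh m_\calH)$ in Theorem~\ref{thm:histogram-recalibration} by bounding each fitting bin directly with Lemma~\ref{lem:projected-bin-mass}.

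First, as in that theorem's proof, write $V=\wh m_\calH(Z)$. Merging fitting bins with equal values only lowers the sum of absolute moments, so
\[
 \ECE(\wh m_\calH\circ f)\le\sum\nolimits_{h=1}^{J}\bigl|\E[(Y-\wh m_h)\one(Z\in H_h)\mid\calD_\ch]\bigr|.
\]
Set $p_h\coloneqq P(Z\in H_h\mid\calD_\ch)$ and $\theta_h\coloneqq\E[Y\mid Z\in H_h]\in[0,1]$ when $p_h>0$, with an arbitrary value in $[0,1]$ otherwise. Each summand then equals $p_h|\wh m_h-\theta_h|$.

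Next, apply Lemma~\ref{lem:projected-bin-mass} to each bin with
\[
 p=p_h,\qquad q=q_h\coloneqq n_h/n_\rec,\qquad u=u_h\coloneqq\E_{\calD_\rec}[\wh U\one(Z\in H_h)],\qquad \theta=\theta_h.
\]
When $q_h>0$ we have $u_h/q_h=\wt m_h$, so $v=\Pi_{[0,1]}(\wt m_h)=\wh m_h$. When $q_h=0$, $u_h=0$ and $\wh m_h\in[0,1]$ is arbitrary, as the lemma permits. This gives
\[
 p_h|\wh m_h-\theta_h|\le|u_h-p_h\theta_h|+|q_h-p_h|.
\]
Summing over $h$, the second terms give exactly $D_1$ from the proof of Theorem~\ref{thm:histogram-recalibration}.

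For the first terms, note $p_h\theta_h=\E[Y\one(Z\in H_h)]$. Insert $\E[\wh U\one(Z\in H_h)\mid\calD_\ch]$ and use the triangle inequality. The population part satisfies
\[
 \sum\nolimits_h\bigl|\E[(\wh U-Y)\one(Z\in H_h)\mid\calD_\ch]\bigr|\le\E\bigl[|b_\ch(Z)|\mid\calD_\ch\bigr]\le\dch
\]
by Lemma~\ref{lem:mean-error}, using $\E[\wh U\mid Z,\calD_\ch]=m(Z)+b_\ch(Z)$ and the fact that the bins partition the score space. The empirical part is exactly $D_U$.

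Finally, invoke the same concentration event as in Theorem~\ref{thm:histogram-recalibration}. Write $D_U$ and $D_1$ as maxima over the $2^J$ sign vectors of empirical-minus-population means, with summands bounded by $\Bn$ (from $|\wh U|\le\Bn$) and by $1$. Two-sided Hoeffding with a union bound over $2\cdot 2^{J}$ functions, each sign class receiving failure probability $\eta/2$, gives
\[
 D_U+D_1\le(\Bn+1)\sqrt{2(J\log2+\log(4/\eta))/n_\rec}
\]
with conditional probability at least $1-\eta$. This step uses $\calD_\rec\indep\calD_\ch$ with $\calH$ and the corrections held fixed.

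The only delicate step is the lemma application. The projection error must be absorbed into the sampling and mass-estimation terms $|u_h-p_h\theta_h|$ and $|q_h-p_h|$, which the lemma provides. The empty-bin convention must also match the lemma's $q=0$ case.
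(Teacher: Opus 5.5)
Your proposal is correct and follows the paper's proof essentially step for step. You bound the ECE by the per-fitting-bin moments $p_h|\wh m_h-\theta_h|$, then apply Lemma~\ref{lem:projected-bin-mass} with $q_h=n_h/n_\rec$ and $u_h=\E_{\calD_\rec}[\wh U\one(Z\in H_h)]$, including the empty-bin case. You absorb the population comparison into $\dch$ via Eq.~\eqref{eq:conditional-channel-bias} and reuse the simultaneous $D_U+D_1$ concentration event from Theorem~\ref{thm:histogram-recalibration}, whose constant $\log(4/\eta)$ you reproduce correctly.
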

\begin{proof}
For each fitting bin, let
$p_h\coloneqq P(Z\in H_h\mid\calD_\ch)$,
$\theta_h\coloneqq\E[Y\mid Z\in H_h,\calD_\ch]$ when $p_h>0$, and
choose any $\theta_h\in[0,1]$ when $p_h=0$.
Apply Lemma~\ref{lem:projected-bin-mass} with
$q_h\coloneqq n_h/n_\rec$,
$u_h\coloneqq\E_{\calD_\rec}[\wh U\one(Z\in H_h)]$, and
$v_h\coloneqq\wh m_h$. If $q_h=0$ then $u_h=0$, so the lemma includes empty bins.
Grouping bins with equal outputs and using the triangle inequality gives
\begin{align*}
 \ECE(\wh m_\calH\circ f)
 &\le\sum\nolimits_{h=1}^{J}p_h|\wh m_h-\theta_h|\\
 &\le\sum\nolimits_{h=1}^{J}|u_h-p_h\theta_h|
         +\sum\nolimits_{h=1}^{J}|q_h-p_h|\le D_U+D_1+\dch.
\end{align*}
The last step uses Eq.~\eqref{eq:conditional-channel-bias} to compare
$\E[\wh U\one(Z\in H_h)\mid\calD_\ch]$ with
$p_h\theta_h$. The $D_U,D_1$ terms and their simultaneous concentration
are exactly those in the proof of
Theorem~\ref{thm:histogram-recalibration}.
This proves Eq.~\eqref{eq:projected-histogram-bound} and
Proposition~\ref{v03:histogram}.
\end{proof}
The general residual bound applies to any binwise-constant map.
For the specific projected averages, the preceding proof removes the
empirical projection-residual term. Projection is already included in the
comparison with the bin's true positive-label frequency.
The absence of a bin-approximation term concerns calibration of a
coarsened output. It is not a claim that coarse predictions retain the
information or discrimination of the original predictor.

\begin{corollary}[Rate for projected histogram recalibration]
\label{cor:histogram-rate}
Under Proposition~\ref{v03:histogram}, suppose $J,K$ are fixed,
$\min_j\delta_j\ge\delta_{\min}>0$, and
$e_\ch=O_p(\varepsilon_{n_\ch})$, with $e_\ch$ as in Eq.~\eqref{eq:channel-estimation-error},
for a positive deterministic sequence $\varepsilon_{n_\ch}\to0$. Then
$\ECE(\wh m_\calH\circ f) =O_p(n_\rec^{-1/2}+\varepsilon_{n_\ch}).$
No smoothness of $m$ is required.
\end{corollary}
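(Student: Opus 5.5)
The plan is to deduce Corollary~\ref{cor:histogram-rate} from the finite-sample bound of Proposition~\ref{v03:histogram} (equivalently Corollary~\ref{cor:projected-histogram}). We first turn the random quantities $\Bn$ and $\dch$ into deterministic bounds on a high-probability event, and then integrate the conditional probability statement over $\calD_\ch$. The argument mirrors the proof of Corollary~\ref{cor:ece-rate-main}, but has no binning term.

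First, consider the event $E_n\coloneqq\{e_\ch\le\delta_{\min}/2\}$. Since $e_\ch=O_p(\varepsilon_{n_\ch})$ and $\varepsilon_{n_\ch}\to0$, we have $e_\ch=o_p(1)$, so $P(E_n)\to1$. On $E_n$ we use
\[
|\wh\delta_j-\delta_j|\le|\wh a_j-a_j|+|\wh b_j-b_j|\le e_\ch,
\]
which gives $\wh\delta_j\ge\delta_{\min}/2$ for every $j$. Since $\wh\alpha\in\Delta_K$, it follows that $\Bn\le2/\delta_{\min}$ and $\dch\le2e_\ch/\delta_{\min}$. These are deterministic inequalities holding on $E_n$ for any convex weights. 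The admissibility requirement $0\le\wh a_j<\wh b_j\le1$ in Proposition~\ref{v03:histogram} is part of the estimator's definition, so positive fitted gaps are automatic on $E_n$.

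Next, fix $\varepsilon>0$ and choose $\eta=\varepsilon/3$. Proposition~\ref{v03:histogram} holds conditionally on $\calD_\ch$ with failure probability at most $\eta$. Taking expectation over $\calD_\ch$ shows that the bound also holds unconditionally with failure probability at most $\eta$, which is the step that avoids treating $\Bn$ as fixed. On the intersection of this event with $E_n$ we then obtain
\[
\ECE(\wh m_\calH\circ f)\le \frac{2e_\ch}{\delta_{\min}}+\Bigl(\frac{2}{\delta_{\min}}+1\Bigr)\sqrt{\frac{2(J\log2+\log(12/\varepsilon))}{n_\rec}}.
\]
Because $J$ is fixed, the second term is $C_\varepsilon n_\rec^{-1/2}$. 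Since $e_\ch=O_p(\varepsilon_{n_\ch})$, there is a constant $C'_\varepsilon$ with $P(e_\ch>C'_\varepsilon\varepsilon_{n_\ch})\le\varepsilon/3$ for all large $n_\ch$. A union bound over the three bad events, together with $P(E_n^c)\le\varepsilon/3$ eventually, yields the $O_p(n_\rec^{-1/2}+\varepsilon_{n_\ch})$ statement.

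No smoothness of $m$ is needed, because Proposition~\ref{v03:histogram} contains no bin-approximation term. The hypothesis that $K$ is fixed only ensures that $\Bn$ and $\dch$ are controlled through the maximum $e_\ch$. The only mildly delicate step is passing from the conditional guarantee to an unconditional $O_p$ statement while $\Bn$ is random. This is handled by the deterministic envelope on $E_n$, and beyond that the argument is routine.
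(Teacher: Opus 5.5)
Your proposal is correct and follows the paper's proof. You use the event $e_\ch\le\delta_{\min}/2$ from the proof of Corollary~\ref{cor:ece-rate-main} to obtain $\Bn\le2/\delta_{\min}$ and $\dch\le2e_\ch/\delta_{\min}$, then substitute these bounds and fixed $J$ into Proposition~\ref{v03:histogram}; your integration over $\calD_\ch$ and the three-event union bound simply spell out the $O_p$ bookkeeping that the paper leaves implicit.
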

\begin{proof}
The channel-error assumption gives $\Bn=O_p(1)$ and
$\dch=O_p(\varepsilon_{n_\ch})$ by the proof of
Corollary~\ref{cor:ece-rate-main}. Substitute these orders and fixed $J$
into Proposition~\ref{v03:histogram}.
\end{proof}

\subsection{Evaluation of a fitted predictor}
\label{app:fitted-evaluation}
Let $\mathcal A$ be the sigma-field generated by all data and algorithmic
randomness used to choose a fitted map $g$, channel estimates, convex
weights, and evaluation partition $\calB$. The map is jointly measurable
in this information and its score argument. Assume the i.i.d. evaluation examples are jointly
independent of $\mathcal A$. For an independent example let $T\coloneqq g(Z)$.
Every expectation below holds $\mathcal A$ fixed, and
$\ECE_\calB(g\circ f)=\sum_{\ell=1}^{M} |\E[(Y-T)\one(T\in B_\ell)\mid\mathcal A]|.$

\begin{proposition}[Held-out evaluation after recalibration]
\label{prop:fitted-evaluation}
Suppose Assumption~\ref{v03:marginal} holds and the estimated channels
satisfy $0\le\wh a_j<\wh b_j\le1$ for $j\in[K]$.
On the independent sample, define
$\wh\ECE^{\nc}_\calB(g\circ f,\calD_\ev)\coloneqq\sum_{\ell=1}^{M} \left|n_\ev^{-1}\sum_{i=1}^{n_\ev} (\wh U_i-g(Z_i))\one(g(Z_i)\in B_\ell)\right|.$
For every $\eta\in(0,1)$, conditional on $\mathcal A$, with probability
at least $1-\eta$,
\begin{equation}
 |\wh\ECE^{\nc}_\calB(g\circ f,\calD_\ev)-\ECE_\calB(g\circ f)|
 \le\Bn\sqrt{2(M\log2+\log(2/\eta))/n_\ev}+\dch.
 \label{eq:fitted-evaluation-bound}
\end{equation}
If $g$ has at most $J$ distinct output values and every evaluation bin
contains at most one such value, then
$\ECE_\calB(g\circ f)=\ECE(g\circ f)$.
For a general map, an $L_g$-Lipschitz version of
$t\mapsto\E[Y\mid T=t,\mathcal A]-t$ instead gives the approximation
term $L_gh_\calB$.
\end{proposition}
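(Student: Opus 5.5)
The plan is to reduce Proposition~\ref{prop:fitted-evaluation} to the argument of Theorem~\ref{thm:arbitrary-rho-bound}, applied to the transformed score $T=g(Z)$ in place of $Z$. Conditional on $\mathcal A$, the map $g$, the fitted channels, the weights $\wh\alpha$ and the partition $\calB$ are all fixed. The evaluation sample is then i.i.d. and independent of them, exactly as $\calD_\ev$ is independent of $\calD_\ch$ in Theorem~\ref{v03:bound}. I would define the empirical moments $\wh r_\ell\coloneqq\E_{\calD_\ev}[(\wh U-T)\one(T\in B_\ell)]$, their conditional means $q_\ell\coloneqq\E[(\wh U-T)\one(T\in B_\ell)\mid\mathcal A]$, and the targets $r_\ell\coloneqq\E[(Y-T)\one(T\in B_\ell)\mid\mathcal A]$.

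\textbf{Channel bias.} Since $T$ is a measurable function of $Z$, conditioning on $(Z,\mathcal A)$ and using Eq.~\eqref{eq:conditional-channel-bias} gives $\E[\wh U-Y\mid Z,\mathcal A]=b_\ch(Z)$. Hence
\[
 \sum\nolimits_\ell|q_\ell-r_\ell|\le\E[|b_\ch(Z)|\mid\mathcal A]\le\dch.
\]
This step uses only Assumption~\ref{v03:marginal}, because $g$ depends on $Z$ alone.

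\textbf{Sampling term.} This is the step where $Z$ versus $T$ needs care. Lemma~\ref{lem:mean-error} bounds $|\wh U-Z|\le\Bn$, but here the summand is $\wh U-T$ with $T=g(Z)\in[0,1]$ rather than $Z$. I would argue directly that
\[
 A_j-\wh a_j-\wh\delta_jT=A_j-\bigl((1-T)\wh a_j+T\wh b_j\bigr),
\]
and the bracketed quantity lies in $[0,1]$. So this difference has absolute value at most $1$, and dividing by $\wh\delta_j$ and taking the convex combination gives $|\wh U-T|\le\Bn$. With that bound in hand, write $\sum_\ell|\wh r_\ell-q_\ell|$ as a maximum over the $2^M$ sign patterns, as in Eq.~\eqref{eq:empirical-signs}. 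Two-sided Hoeffding and a union bound give $\Bn\sqrt{2(M\log2+\log(2/\eta))/n_\ev}$ with conditional probability at least $1-\eta$. The reverse triangle inequality on $\sum_\ell|\cdot|$ then yields Eq.~\eqref{eq:fitted-evaluation-bound}.

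\textbf{Equality $\ECE_\calB(g\circ f)=\ECE(g\circ f)$.} Suppose $g$ has finitely many output values and each evaluation bin contains at most one of them. Then $\one(T\in B_\ell)=\one(T=v_\ell)$ almost surely for the unique value $v_\ell\in B_\ell$, and the empty bins contribute zero. For finite-valued $T$,
\[
 \ECE(g\circ f)=\sum\nolimits_v|\E[(Y-v)\one(T=v)\mid\mathcal A]|,
\]
which is term by term the binned sum.

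\textbf{Lipschitz case.} For a general $g$, I would repeat the last paragraph of the proof of Theorem~\ref{thm:arbitrary-rho-bound} with $T$ in place of $Z$. Conditional Jensen gives $\ECE_\calB\le\ECE$, and the $L_g$-Lipschitz version of $t\mapsto\E[Y\mid T=t,\mathcal A]-t$ bounds the within-bin deviation by $L_g\operatorname{diam}(B_\ell)$, so the gap is at most $L_gh_\calB$.
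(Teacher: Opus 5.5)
Your proposal is correct and follows the paper's proof essentially step for step. You hold everything fixed on $\mathcal A$, bound the channel bias of the bin moments by $\dch$ through $b_\ch(Z)$, and redo the convex-endpoint calculation to get $|\wh U-T|\le\Bn$ for $T=g(Z)\in[0,1]$. You then rerun the sign-pattern Hoeffding argument, the finite-output grouping, and the Lipschitz bin-approximation step of Theorem~\ref{thm:arbitrary-rho-bound} with $T$ in place of $Z$. The only cosmetic difference is that the paper passes through $\E[\wh U-Y\mid T,\mathcal A]$ by iterated expectation, whereas you bound $\sum_\ell|q_\ell-r_\ell|$ directly by $\E[|b_\ch(Z)|\mid\mathcal A]$; the two are equivalent.
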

\begin{proof}
For each fitted correction, Eq.~\eqref{eq:conditional-channel-bias} gives
$|\E[\wh U-Y\mid Z,\mathcal A]|\le\dch$.
Since $T$ is a function of $Z$ conditional on $\mathcal A$,
iterated expectation yields
$|\E[\wh U-Y\mid T,\mathcal A]|\le\dch$.
Also $|\wh U-T|\le\Bn$ by the same convex-endpoint calculation as in
Lemma~\ref{lem:mean-error}. The bin-sign proof of
Theorem~\ref{thm:arbitrary-rho-bound} now applies to $T$.
For finitely many outputs, a partition separating them groups no distinct
prediction values, so the bin-moment expression is exactly the conditional
ECE. Such interval boundaries can be placed between consecutive distinct
outputs using fitting data alone; repeated values are merged.
For the Lipschitz case, apply Eq.~\eqref{eq:bin-approximation} to the transformed residual.
\end{proof}
Fixed equal-width test bins need not separate the outputs of a histogram
map. The exact finite-output statement applies only to a partition that
does so. Even when there is no binning error, empirical ECE still has sampling
error. The finite-output statement does not identify a finite-sample ECE
value with exact population ECE.

\subsection{Split-sample workflow}
\label{app:workflow}
The fitting and evaluation partitions have different roles. In
Algorithm~\ref{v03:pipeline}, all choices made before evaluation are
independent of $\calD_\ev$.
A separate channel sample may be used for final evaluation, as in the
human-annotation protocol. In that case the held-out channels replace the
fitting channels only in the evaluation step.
Proposition~\ref{prop:fitted-evaluation} gives the evaluation guarantee conditional on the samples determining the fitted map, weights, and test partition.

\begin{algorithm}[H]
\caption{Calibration from repeated noisy annotations}
\label{v03:pipeline}
\begin{algorithmic}[1]
\Require Fixed independently trained predictor $f$; independent samples
$\calD_s$ of sizes $n_s$, $s\in\{\ch,\rec,\val,\ev\}$;
identification groups independent of $\calD_\ch$;
fitting bins $\calH$ independent of $\calD_\rec$;
evaluation bins $\calB$ independent of $\calD_\ev$;
a pre-specified family of candidate procedures.
\State Estimate channels $(\wh a_j,\wh b_j)_{j\in[K]}$ and weights
$\wh\alpha\in\Delta_K$ from $\calD_\ch$.
\State Verify the required estimated gaps; return failure if the declared
admissibility conditions are not met.
\For{$s\in\{\rec,\val,\ev\}$ and $i\in[n_s]$}
\State Compute $\wh U_i^{(s)}\gets \sum_{j=1}^{K}\wh\alpha_j
(A_{ij}^{(s)}-\wh a_j)/(\wh b_j-\wh a_j)$.
\EndFor
\State Fit candidate maps on $\calD_\rec$ by corrected loss or the projected
bin averages in Eq.~\eqref{eq:histogram-map}.
\State Select $\wh g$ using the pre-specified corrected validation criterion
on $\calD_\val$, or retain the single pre-specified candidate.
\For{$i\in[n_\ev]$}
\State Set $T_i\gets \wh g(Z_i^{(\ev)})$.
\EndFor
\For{$\ell\in[M]$}
\State Compute $\wh r_\ell\gets n_\ev^{-1}\sum_{i=1}^{n_\ev}
(\wh U_i^{(\ev)}-T_i)\one(T_i\in B_\ell)$.
\EndFor
\State Set $\wh\ECE^{\nc}\gets \sum_{\ell=1}^M|\wh r_\ell|$.
\State \Return $(\wh g,\wh\ECE^{\nc})$ and channel-fitting diagnostics.
\end{algorithmic}
\end{algorithm}

\section{Algorithms and Implementation}
\label{v03:algorithms}\label{app:em}
\subsection{Likelihood updates}
For $n=n_\ch$, let $A_i=(A_{i1},\ldots,A_{iK})$ and define
\begin{equation}\textstyle
 L_{i0}(\mathbf a)=\prod\nolimits_j a_j^{A_{ij}}(1-a_j)^{1-A_{ij}},\qquad
 L_{i1}(\mathbf b)=\prod\nolimits_j b_j^{A_{ij}}(1-b_j)^{1-A_{ij}}.
 \label{eq:em-components}
\end{equation}
The E-step evaluates a posterior class probability under the current parameters;
the M-step maximizes the expected complete-data log likelihood
\citep{dempster1977maximum}. These posterior probabilities lie in $[0,1]$;
the corrected labels need not.

\subsubsection{Marginal Dawid--Skene}
\label{app:em-marginal}
The response likelihood is $(1-\mu)L_{i0}+\mu L_{i1}$. At iteration $q$,
\begin{equation}
 \gamma_i^{(q)}=\frac{\mu^{(q)}L_{i1}(\mathbf b^{(q)})}
 {(1-\mu^{(q)})L_{i0}(\mathbf a^{(q)})+\mu^{(q)}L_{i1}(\mathbf b^{(q)})}.
 \label{eq:marginal-responsibility}
\end{equation}
Replacing the unobserved $Y_i$ by $\gamma_i^{(q)}$ in the Bernoulli log likelihood gives
\begin{equation}
 \mu^{(q+1)}=n^{-1}\sum\nolimits_i\gamma_i^{(q)},\quad
 a_j^{(q+1)}=\frac{\sum\nolimits_i(1-\gamma_i^{(q)})A_{ij}}{\sum\nolimits_i(1-\gamma_i^{(q)})},\quad
 b_j^{(q+1)}=\frac{\sum\nolimits_i\gamma_i^{(q)}A_{ij}}{\sum\nolimits_i\gamma_i^{(q)}}.
 \label{eq:shared-channel-updates}
\end{equation}
No predictions enter these updates. A marginal two-annotator fit is used only
as a non-identified control.

\subsubsection{Shared channels with score groups}
\label{app:em-grouped}
For fixed groups $C_g$, write $G_i=g$ when $Z_i\in C_g$ and $n_g=|\{i:G_i=g\}|>0$.
Use the grouped likelihood in Eq.~\eqref{eq:grouped-likelihood}. Its updates are
\begin{equation}
 \gamma_i^{(q)}=\frac{\mu_{G_i}^{(q)}L_{i1}(\mathbf b^{(q)})}
 {(1-\mu_{G_i}^{(q)})L_{i0}(\mathbf a^{(q)})+\mu_{G_i}^{(q)}L_{i1}(\mathbf b^{(q)})},
 \quad
 \mu_g^{(q+1)}=n_g^{-1}\sum\nolimits_{i:G_i=g}\gamma_i^{(q)}.
 \label{eq:grouped-responsibility}
\end{equation}
The channel updates remain Eq.~\eqref{eq:shared-channel-updates}, pooled over
all examples. Joint score-bin EM uses this same likelihood. Its optional
prediction map is
\begin{equation}\textstyle
 \wh m_\calC(z)=\sum\nolimits_{g=1}^L\wh\mu_g\one(z\in C_g).
 \label{eq:joint-map}
\end{equation}
This map is distinct from separately fitting a recalibration map to corrected
labels. Its in-sample summary $\sum_g(n_g/n)|\wh\mu_g-\bar Z_g|$ is not an
independent evaluation. Fitting group-specific channels instead applies the
marginal updates within each group, with a common label orientation; the
shared-channel guarantees do not automatically cover that extension.

\subsection{Numerical procedure}
\label{app:numerical-em}
For each EM start, alternate the updates above using log-scale probabilities
and log-sum-exp until the stopping tolerance or iteration limit is reached.
Impose the prescribed orientation by a single global label exchange,
$\mu_g\leftarrow1-\mu_g$ and $(a_j,b_j)\leftarrow(b_j,a_j)$; the marginal
model has one class proportion. Candidate selection and acceptance criteria
depend on the estimator and experiment. Tabular grouped EM selects the
largest-likelihood candidate after orientation and preliminary gap screening,
then checks convergence and the final gap threshold for that candidate.
Marginal Dawid--Skene uses finite-parameter and gap checks without a separate
convergence check. Further experiment-specific details appear in
Appendix~\ref{v03:experimental-details}. Two-annotator GMM minimizes the
constrained objective in Definition~\ref{def:gmm-main}; its optimization-error
term is given in Theorem~\ref{thm:gmm-finite}.

\subsection{Computational cost}
\label{app:em-cost}
For $R$ starts and $Q$ iterations, marginal and shared-channel EM require
$O(RQnK)$ operations and $O(nK+n+K+L)$ storage. Group-specific fits have the
same time order and require $O(nK+n+KL)$ storage. Two-annotator moment
aggregation costs $O(n+L)$ before optimization. Given bin memberships,
\NCECE{} costs $O(nK+M)$; assigning equal-width bins costs $O(n)$.

\subsection{Feasible aggregation weights}
\label{app:feasible-weights}
For an estimated class proportion $\wh\mu$, define
\begin{equation}
 \wh v_j=\frac{(1-\wh\mu)\wh a_j(1-\wh a_j)+\wh\mu\wh b_j(1-\wh b_j)}
 {\wh\delta_j^2},\qquad
 \wh\alpha_j^{\IV}=\frac{\wh v_j^{-1}}{\sum\nolimits_k\wh v_k^{-1}}.
 \label{eq:feasible-iv}
\end{equation}
For zero variances, use the support convention of
Lemma~\ref{lem:variance-optimal-alpha} or the stabilization below.
Gaussian recalibration uses
$\wh\mu=\sum_g(n_g/n_\ch)\wh\mu_g$ and
\begin{equation}\textstyle
 \wh\alpha^{\mathrm{shr}}(\lambda)=(1-\lambda)\wh\alpha^{\IV}
       +\lambda(K^{-1},\ldots,K^{-1}),\qquad \lambda=0.25.
 \label{eq:shrunk-weights}
\end{equation}
The tabular Main rule estimates $\wh\mu$ by the mean of equally weighted
corrected labels, clipped to $[10^{-4},1-10^{-4}]$, and floors $\wh v_j$ at
$10^{-8}$ before inversion. Equal, Gap, and Squared-gap weights are
proportional to $1$, $\wh\delta_j$, and $\wh\delta_j^2$, respectively.
Weights are fixed before test evaluation.

\section{Experimental Protocols and Additional Results}
\label{v03:experimental-details}\label{v03:supp-experiments}

\subsection{Evaluation and reporting conventions}
\label{app:experimental-targets}
Toy estimation errors use population $\ECE(f)=\E|m(Z)-Z|$ as the reference.
Gaussian Oracle ECE replaces test labels by $m(Z_i)$ in empirical bin moments.
Tabular and CIFAR Oracle ECE use original test labels. These empirical
references use the same predictions and bins as the noisy-label criteria.
``Oracle channels'' means that the true noise probabilities are supplied.

For repetition errors $e_r$, bias and MAE are the means of $e_r$ and
$|e_r|$, respectively. RMSE is the square root of the mean of $e_r^2$.
Figure error bars show one Monte Carlo standard error of MAE.
Table~\ref{v31:tab-reuse} reports sample SDs over 200 repetitions;
Table~\ref{tab:recalibration-full} reports sample SDs over 60 repetitions.
Tabular and CIFAR tables use the aggregation units described below and in
their captions.

\subsection{Synthetic experiments}
\label{v03:experiment-protocol}\label{app:experiment-details}
\label{v03:channel-experiments}

\paragraph{Gaussian recalibration.}
\label{app:toy-recalibration}
We use $P(Y=1)=1/2$, $X\mid Y=1\sim N(-1,1)$,
$X\mid Y=0\sim N(1,1)$, and $Z=\sigma(0.3-X)$
\citep{futami2024information}, giving
$m(z)=\sigma(2\logit(z)-0.6)$. The two annotators are independent given $Y$ and have
$(\rho_{0,1},\rho_{1,1})=(0.05,0.25)$ and
$(\rho_{0,2},\rho_{1,2})=(0.05,0.05)$.
For total sizes $1500,3000,6000,12000$, each of 60 repetitions uses equal
independent channel-fitting, recalibration, and test thirds.
Two-group constrained GMM uses the fixed population score-median boundary
$\sigma(0.3)$ and four starts. Corrected fitting uses the shrunk IV weights
above; Naive fitting uses average annotations, and clean-label fits use $Y$.

Temperature fitting uses bounded scalar optimization with inverse temperature
in $[0.05,20]$. Platt fitting uses L-BFGS-B with slope in $[0,10]$,
intercept in $[-10,10]$, and four starts.
Projected histogram fitting uses $J=\operatorname{round}(n_\rec^{1/3})$
population score-quantile bins. Evaluation uses
$M=\operatorname{round}(n_\ev^{1/3})$ equal-width bins of each transformed
score; at total size $12000$, $J=M=16$.
The $L_1(m)$ metric is the test mean of $|m(Z_i)-g(Z_i)|$.
Log loss and Brier use sampled test labels; log-loss evaluation clips
predictions to $[10^{-8},1-10^{-8}]$.
Table~\ref{tab:recalibration-full} reports the largest total size.
\begin{table}[htbp]
\centering\small
\caption{Gaussian recalibration at $n=12000$: mean $\pm$ SD over 60 repetitions (unscaled metrics).}
\label{tab:recalibration-full}
\label{tab:recalibration-main}
\setlength{\tabcolsep}{3pt}
\begin{tabular}{@{}lrrr@{}}
\toprule
\multicolumn{4}{c}{Calibration errors}\\
\midrule
Method & Oracle ECE & \NCECE{} & Abs. error\\
\midrule
Clean-label Platt (oracle fit) & $0.006118 \pm 0.003526$ & $0.022761 \pm 0.004720$ & $0.016642 \pm 0.005200$\\
Corrected Platt & $\boldsymbol{0.009687 \pm 0.005702}$ & $0.022462 \pm 0.004813$ & $0.012868 \pm 0.005686$\\
Corrected histogram $\widehat m$ & $0.016301 \pm 0.003894$ & $\boldsymbol{0.022147 \pm 0.005481}$ & $0.006692 \pm 0.004531$\\
Clean-label temperature (oracle fit) & $0.065946 \pm 0.001003$ & $0.066361 \pm 0.009310$ & $0.007205 \pm 0.005366$\\
Corrected temperature & $0.066095 \pm 0.001115$ & $0.066402 \pm 0.009139$ & $0.007033 \pm 0.005327$\\
Naive Platt & $0.092819 \pm 0.004540$ & $0.094160 \pm 0.007815$ & $\boldsymbol{0.004922 \pm 0.003826}$\\
Before recalibration & $0.113960 \pm 0.000804$ & $0.115110 \pm 0.006985$ & $0.005518 \pm 0.004240$\\
Naive temperature & $0.114570 \pm 0.004091$ & $0.115727 \pm 0.007980$ & $0.005492 \pm 0.004131$\\
\bottomrule
\end{tabular}
\par\smallskip
\begin{tabular}{@{}lrrr@{}}
\toprule
\multicolumn{4}{c}{Curve error and predictive losses}\\
\midrule
Method & $L_1(m)$ & Log loss & Brier\\
\midrule
Clean-label Platt (oracle fit) & $0.006125 \pm 0.003528$ & $0.356120 \pm 0.007968$ & $0.112331 \pm 0.002768$\\
Corrected Platt & $\boldsymbol{0.009696 \pm 0.005709}$ & $\boldsymbol{0.356601 \pm 0.008111}$ & $\boldsymbol{0.112481 \pm 0.002789}$\\
Corrected histogram $\widehat m$ & $0.025171 \pm 0.003186$ & $0.369160 \pm 0.014568$ & $0.113294 \pm 0.002773$\\
Clean-label temperature (oracle fit) & $0.065958 \pm 0.000988$ & $0.375123 \pm 0.008151$ & $0.119115 \pm 0.002830$\\
Corrected temperature & $0.066124 \pm 0.001079$ & $0.375399 \pm 0.008352$ & $0.119182 \pm 0.002870$\\
Naive Platt & $0.093059 \pm 0.004546$ & $0.397467 \pm 0.006915$ & $0.123929 \pm 0.002757$\\
Before recalibration & $0.114404 \pm 0.000803$ & $0.414317 \pm 0.005252$ & $0.129034 \pm 0.002156$\\
Naive temperature & $0.114997 \pm 0.003862$ & $0.414834 \pm 0.006053$ & $0.129197 \pm 0.002305$\\
\bottomrule
\end{tabular}
\end{table}

\paragraph{Toy ECE estimation and sample reuse.}
\label{v31:controlled}\label{v31:implementation}
The input distribution, predictor, three label functions, and annotator
channels are those of Section~\ref{v03:experiments}.
Five equal-width channel groups and ten equal-width evaluation bins are fixed
before sampling. Each $N\in\{500,1000,2000,4000,8000\}$ has 200 repetitions.
The population ECEs for Calibrated, Shifted, and
Nonlinear are $0$, $0.08962637465$, and $0.08328889713$.
For splitting, the first half fits channels and the second half evaluates
all criteria. Reuse applies both steps to the full sample. Five-fold
cross-fitting pools held-out signed bin moments with fold-size weights
before taking absolute values. \NCECE{} uses equal annotator weights.

Joint EM fits shared channels and group-specific class proportions from
response-pattern counts. Each fit uses ten starts, at most $1000$ iterations,
and stopping threshold $10^{-9}(1+|\ell_{\mathrm{previous}}|)$ for the change
in mean log likelihood. Probabilities are clipped to $[10^{-6},1-10^{-6}]$.
Initial $a_j,b_j$ are uniform on $[0.04,0.30]$ and $[0.70,0.96]$;
initial proportions are uniform on $[0.1,0.9]$, with group midpoints used
for the first start. The first channel fixes global orientation.
Retain the largest likelihood among converged starts with all oriented gaps
at least $10^{-3}$. Empty groups or no accepted start trigger the fixed
fallback $(\wh a_j,\wh b_j)=(0.2,0.8)$ and proportions $1/2$.
All 200 fits succeeded in each reported condition.
Corrected labels and bin means are unprojected.
Figure~\ref{v31:fig-other-functions} reports Shifted and Nonlinear.

\begin{figure}[htbp]
\centering
\includegraphics[width=0.48\textwidth]{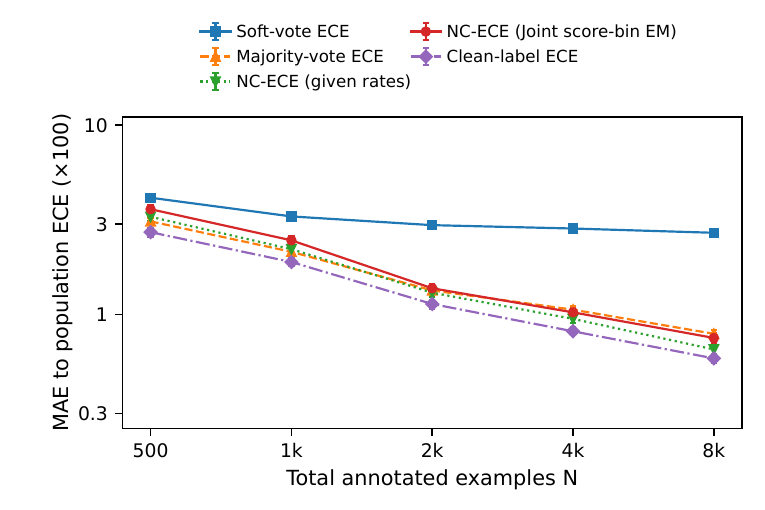}\hfill
\includegraphics[width=0.48\textwidth]{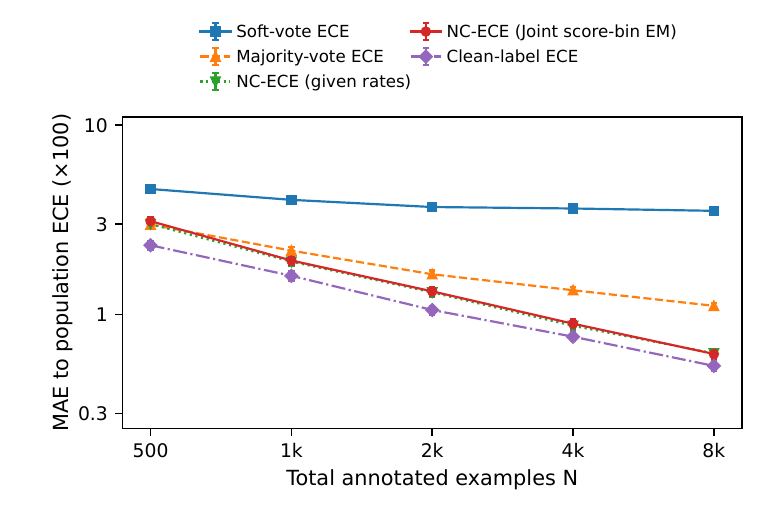}
\caption{ECE-estimation MAE ($\times100$): Shifted (left) and Nonlinear (right).}
\label{v31:fig-other-functions}
\end{figure}

\subsection{Semi-synthetic tabular benchmarks}
\label{v03:realdata}
ACSIncome, CreditDefault, HMDA, and MEPS use the benchmark preprocessing and
six predictor families
\citep{ding2021retiring,yeh2009default,hansen2024multicalibration,sharma2021fairn}.
Base models use original labels; the MLP checkpoint is selected by validation
accuracy, and the SVM produces clipped hard predictions.
Post-hoc channel estimation and recalibration use simulated annotations.
We use ten seeds, $K=2,3,4$, and four channel regimes, taking the first $K$
pairs $(\rho_0,\rho_1)$ below:
\begin{center}
\scriptsize
\begin{tabular}{@{}ll@{}}
Symmetric & $(.05,.05),(.10,.10),(.15,.15),(.20,.20)$\\
Asymmetric & $(.05,.25),(.25,.05),(.10,.20),(.20,.10)$\\
Heterogeneous & $(.02,.08),(.08,.20),(.20,.08),(.28,.22)$\\
Hard & $(.20,.30),(.30,.20),(.25,.30),(.30,.25)$
\end{tabular}
\end{center}

Base-training, channel-fitting, recalibration, validation, and test fractions
are $0.36/0.12/0.12/0.20/0.20$. Test uses seed $42$, validation the run seed,
the remaining training division seed $50$, and the channel/recalibration
division seed $51$ plus the run seed. Each post-hoc split receives independently
generated annotations. Predictors share annotation matrices within each
dataset, seed, regime, and split; $K=2,3$ uses the first columns of $K=4$.

The primary analysis uses Joint score-bin EM and Main plug-in IV weights.
Joint EM estimates shared channels and prevalences for the nonempty bins of
ten equal-width score bins on the recalibration split. Temperature, Platt,
and projected histogram maps are fitted on this split; the auxiliary Joint
map uses the fitted prevalences, with empty-bin values set to the overall
fitted posterior mean. Naive fitting uses mean
annotator responses, and Corrected fitting uses IV-weighted corrected labels.
Oracle-channel comparisons cover all $K$, constrained GMM covers $K=2$,
and marginal Dawid--Skene EM covers $K=3,4$.
Marginal DS and GMM use the channel-fitting split. GMM forms two equal-count
groups from score ranks with deterministic tie breaking. The tabular GMM fit
also constrains the fitted prevalence in the higher-score group to exceed
that in the lower-score group by at least $10^{-5}$.
Sample reuse and adaptive GMM groups differ from the independent-sample and
fixed-group conditions of the corresponding theoretical guarantees.
Joint EM, histogram fitting, and test ECE use ten equal-width bins;
test bins are applied to each transformed score.

Channel fits require finite parameters and estimated gaps greater than
$10^{-4}$, with estimator-specific success checks. Inadmissible channels stop
that grid cell. Temperature and Platt fits retain finite objectives and
parameters. Each dataset covers
$720=6\times10\times3\times4$ predictor--seed--annotator-count--regime cells,
and pooled summaries cover $2880$ cells.
Recalibration means and sample SDs use ten seed-level means, averaging the
predictor, annotator-count, and regime grid within each seed; pooled results
also average across datasets within each seed. Channel-RMSE summaries use the
grid units specified in their captions. Annotations and test data are shared
across conditions.
\begin{table}[htbp]
\centering\small
\caption{Tabular recalibration: mean $\pm$ SD over ten seeds. ECE and absolute gaps are multiplied by 100; accuracy is in percent.}
\label{tab:realdata-recalibration}
\begin{tabular}{lrrrr}
\toprule
Method & Oracle ECE & \NCECE{} & Absolute gap & Accuracy (\%) \\
\midrule
Base & $8.03 \pm 0.27$ & $8.25 \pm 0.27$ & $0.50 \pm 0.08$ & $81.75 \pm 0.25$ \\
Naive temperature & $14.79 \pm 0.17$ & $14.77 \pm 0.26$ & $0.62 \pm 0.10$ & $81.75 \pm 0.25$ \\
Corrected temperature & $6.27 \pm 0.29$ & $6.32 \pm 0.25$ & $0.55 \pm 0.07$ & $81.75 \pm 0.25$ \\
Naive Platt & $12.04 \pm 0.08$ & $12.05 \pm 0.22$ & $0.54 \pm 0.08$ & $82.15 \pm 0.05$ \\
Corrected Platt & $2.51 \pm 0.13$ & $2.48 \pm 0.09$ & $\mathbf{0.42 \pm 0.08}$ & $82.40 \pm 0.06$ \\
Naive fixed-bin & $10.05 \pm 0.07$ & $10.07 \pm 0.27$ & $0.57 \pm 0.09$ & $82.47 \pm 0.08$ \\
Corrected fixed-bin & $1.37 \pm 0.13$ & $1.33 \pm 0.08$ & $0.43 \pm 0.08$ & $\mathbf{82.48 \pm 0.09}$ \\
Joint map & $\mathbf{1.35 \pm 0.13}$ & $1.32 \pm 0.07$ & $0.43 \pm 0.08$ & $\mathbf{82.48 \pm 0.08}$ \\
\bottomrule
\end{tabular}
\end{table}

\begin{table}[htbp]
\centering
\small
\caption{Tabular channel estimation and ECE evaluation. Channel RMSE is mean $\pm$ SD over 960 dataset--predictor--seed--noise settings for each $K$; \NCECE{}--Oracle absolute gaps are means. All metrics are unscaled.}
\label{tab:realdata-channel-direct-app}
\begin{tabular}{lccc}
\toprule
Channel method & $K$ & Channel RMSE & Mean absolute gap\\
\midrule
Oracle channels & 2 & $0.0000 \pm 0.0000$ & $0.0031$\\
Oracle channels & 3 & $0.0000 \pm 0.0000$ & $0.0026$\\
Oracle channels & 4 & $0.0000 \pm 0.0000$ & $0.0023$\\
Constrained GMM & 2 & $0.0264 \pm 0.0333$ & $0.0101$\\
Marginal Dawid--Skene EM & 3 & $0.0165 \pm 0.0195$ & $0.0064$\\
Marginal Dawid--Skene EM & 4 & $0.0124 \pm 0.0122$ & $0.0046$\\
Joint score-bin EM & 2 & $\boldsymbol{0.0165 \pm 0.0182}$ & $\boldsymbol{0.0062}$\\
Joint score-bin EM & 3 & $\boldsymbol{0.0123 \pm 0.0121}$ & $\boldsymbol{0.0048}$\\
Joint score-bin EM & 4 & $\boldsymbol{0.0118 \pm 0.0102}$ & $\boldsymbol{0.0039}$\\
\bottomrule
\end{tabular}
\end{table}

\subsection{CIFAR-10N stress test}
\label{v03:human-protocol}\label{v03:human}
CIFAR-10N supplies $50{,}000$ images and three response columns
\citep{wei2022learning}. Labels are mapped to animal (bird, cat, deer, dog,
frog, horse) versus vehicle (airplane, automobile, ship, truck).
CLIP ViT-B/32 \citep{radford2021learning} predicts animal probability by
summing the six animal probabilities after softmax over ten prompts
\texttt{a photo of a \{class\}}, without fine-tuning.
Original CIFAR-10 labels \citep{krizhevsky2009learning} provide the evaluation
reference. Response columns are treated as slots because workers can vary
across images or batches.

Disjoint pilot, evaluation-channel, and test partitions contain
$10{,}000$, $20{,}000$, and $20{,}000$ images, retaining the original ten-image
annotation batches. Five pilot splits provide channel-fitting, recalibration,
and validation subsets. Marginal Dawid--Skene EM estimates channels from
responses alone; Joint score-bin EM is a separate comparison. 
Corrected labels are aggregated using IV weights.
Joint EM uses five equal-width channel groups, ten starts, at most
$1000$ iterations, tolerance $10^{-8}$, and converged fits with positive gaps.

Within each family, Naive and Corrected methods share fitting examples and
use average responses and IV-weighted corrected labels, respectively.
Temperature and Platt minimize logarithmic loss with $0.05\le T\le20$ or
$-10\le u,v\le10$. Fixed-bin fitting uses ten equal-width original-score
bins and projected mean targets; empty bins receive the projected mean of
all fitting targets. The auxiliary Joint map uses the five fitted prevalences
on the pilot channel sample.

Validation \NCECE{} compares Base and the six Naive/Corrected candidates.
Table~\ref{v03:tab-human} shows the fixed-bin comparison, and
Table~\ref{tab:uncertainty-cifar} shows all candidates.
The evaluation-channel sample supplies a common correction after fitting.
For each map $g$, ten equal-width bins of $g(Z)$ are used on the test sample;
Oracle, Soft-vote, and \NCECE{} share predictions and bins, changing only
the label targets. The five pilot fits share evaluation-channel and test
samples; reported sample SDs measure variation across pilot fits.
\begin{table}[htbp]
\centering\small
\caption{CIFAR-10N recalibration with marginal Dawid--Skene EM and Joint score-bin EM: mean $\pm$ SD over five pilot splits (unscaled).}
\label{tab:uncertainty-cifar}
\setlength{\tabcolsep}{3pt}
\begin{tabular}{@{}lrrr@{}}
\toprule
\multicolumn{4}{c}{Marginal Dawid--Skene EM}\\
\midrule
Method & Oracle ECE & Soft-vote ECE & \NCECE{}\\
\midrule
Base & $0.02129 \pm 0.00000$ & $0.02825 \pm 0.00000$ & $0.02436 \pm 0.00000$\\
Naive fixed-bin & $0.04351 \pm 0.00243$ & $\boldsymbol{0.00450 \pm 0.00141}$ & $0.04695 \pm 0.00226$\\
Naive temperature & $0.05773 \pm 0.00220$ & $0.02329 \pm 0.00082$ & $0.06076 \pm 0.00220$\\
Naive Platt & $0.05754 \pm 0.00235$ & $0.02137 \pm 0.00057$ & $0.06070 \pm 0.00234$\\
Corrected temperature & $0.00557 \pm 0.00109$ & $0.04709 \pm 0.00110$ & $0.00415 \pm 0.00024$\\
Corrected Platt & $0.00644 \pm 0.00033$ & $0.04782 \pm 0.00037$ & $\boldsymbol{0.00357 \pm 0.00079}$\\
Corrected fixed-bin & $\boldsymbol{0.00470 \pm 0.00157}$ & $0.04291 \pm 0.00101$ & $0.00482 \pm 0.00069$\\
Joint EM map (auxiliary) & -- & -- & --\\
\bottomrule
\end{tabular}
\par\smallskip
\begin{tabular}{@{}lrrr@{}}
\toprule
\multicolumn{4}{c}{Joint score-bin EM}\\
\midrule
Method & Oracle ECE & Soft-vote ECE & \NCECE{}\\
\midrule
Base & $0.02129 \pm 0.00000$ & $0.02825 \pm 0.00000$ & $0.02433 \pm 0.00000$\\
Naive fixed-bin & $0.04351 \pm 0.00243$ & $\boldsymbol{0.00450 \pm 0.00141}$ & $0.04693 \pm 0.00225$\\
Naive temperature & $0.05773 \pm 0.00220$ & $0.02329 \pm 0.00082$ & $0.06074 \pm 0.00220$\\
Naive Platt & $0.05754 \pm 0.00235$ & $0.02137 \pm 0.00057$ & $0.06067 \pm 0.00234$\\
Corrected temperature & $0.00596 \pm 0.00047$ & $0.04748 \pm 0.00047$ & $\boldsymbol{0.00340 \pm 0.00015}$\\
Corrected Platt & $0.00645 \pm 0.00035$ & $0.04790 \pm 0.00043$ & $0.00349 \pm 0.00032$\\
Corrected fixed-bin & $\boldsymbol{0.00419 \pm 0.00095}$ & $0.04313 \pm 0.00099$ & $0.00465 \pm 0.00088$\\
Joint EM map (auxiliary) & $0.00412 \pm 0.00160$ & $0.04270 \pm 0.00108$ & $0.00527 \pm 0.00199$\\
\bottomrule
\end{tabular}
\end{table}

\section{Additional Related Work and Discussion}
\label{v03:background}
\subsection{Related work}
\label{v03:related}\label{v03:source-01}\label{sec:related}
Binning, smoothing, and conditional-probability estimation affect calibration
metrics and their estimators
\citep{Vaicenavicius2019,roelofs2022mitigating,popordanoska2022consistent,
popordanoska2024proper}. Weighted-moment formulations connect ECE with other
calibration criteria
\citep{blasiok2023unifying,gopalan2022lowdegree,rossellini2025testable,hegazy2025utility}.
Here label noise changes the moment's label-side expectation; correcting it
leaves binning and sampling errors as distinct terms.

Class-conditional loss correction uses known transition probabilities to recover
clean risk \citep{natarajan2013learning}. Calibration studies also address
noisy-label training, noisy validation labels, and noisy predictions
\citep{olmin2022robustness,bae2022noisy,penso2024confidence}.
Positive--unlabeled ECE estimation \citep{kiryo2025pu} and weakly supervised
multicalibration \citep{futami2026weak} use other supervision structures.
Our repeated annotations support estimation of unknown annotator-specific
channels and propagation of that estimation error to ECE and recalibration.

The identification routes specialize the Hui--Walter and Dawid--Skene models
\citep{hui1980estimating,dawid1979maximum,allman2009identifiability}; the GMM
analysis quantifies their stability \citep{hansen1982gmm}.
Unlike joint predictor-and-annotator training \citep{tanno2019learning}, the
predictor here is fixed independently of channel estimation and recalibration.
Disagreement-based methods can instead treat differing judgments as target
uncertainty \citep{pmlr-v130-mimori21a,riera2026multirater}. Our model presupposes
one meaningful binary outcome, so disagreement must be interpretable as
imperfect measurement rather than equally valid alternative labels.

\subsection{Scope}
\label{app:scope}
Known marginal-channel correction permits annotation dependence;
identification requires the assumptions in Section~\ref{v03:identification}.
Adding annotators does not establish conditional independence.
ECE-estimation accuracy, small recalibrated ECE, and classification accuracy are distinct. Corrected-loss guarantees concern
clean risk, histogram guarantees concern calibration, and IV optimality concerns
known-channel variance under its stated conditions.
Human-label results stress-test the method without verifying its noise assumptions or clinical validity. Benchmark labels serve as evaluation references; shared evaluation samples limit interpretation of pilot-fit SDs.

\end{document}